\documentclass{article} 
\usepackage{iclr2026_conference,times}

\usepackage{amsmath,amsfonts,bm,amsthm}

\def\eqref#1{equation~\ref{#1}}
\def\Eqref#1{Equation~\ref{#1}}

\def\1{\bm{1}}

\DeclareMathAlphabet{\mathsfit}{\encodingdefault}{\sfdefault}{m}{sl}
\SetMathAlphabet{\mathsfit}{bold}{\encodingdefault}{\sfdefault}{bx}{n}

\newcommand{\E}{\mathbb{E}}

\newcommand{\R}{\mathbb{R}}

\newcommand{\KL}{D_{\mathrm{KL}}}

\DeclareMathOperator*{\argmin}{arg\,min}

\newtheorem{theorem}{Theorem}
\newtheorem{lemma}{Lemma}

\newtheorem{proposition}{Proposition}
\newtheorem{definition}{Definition}
\newtheorem{remark}{Remark}

\newtheorem{assumption}{Assumption}

\usepackage{hyperref}
\usepackage{cleveref}
\usepackage{url}
\usepackage{multirow}
\usepackage{graphicx}
\usepackage{amssymb}
\usepackage{enumitem} 
\usepackage{tikz}
\usepackage{textgreek}
\usepackage{booktabs}
\usetikzlibrary{arrows.meta,backgrounds,positioning,calc}
\usepackage{array}
\newcolumntype{C}[1]{>{\centering\arraybackslash}p{#1}}

\newcommand{\seg}{\mathrm{seg}}
\newcommand{\Prob}{\mathbb{P}}
\newcommand{\Cseg}{\mathcal{C}}
\newcommand{\taumax}{\tau_{\max}}
\newcommand{\taumin}{\tau_{\min}}

\newcommand{\occq}{\bar\mu_T^{q}}
\newcommand{\Hclass}{\mathcal{H}}
\newcommand{\Risk}{\mathcal{R}}
\newcommand{\eRisk}{\widehat{\mathcal{R}}}
\newcommand{\norm}[1]{\lVert #1 \rVert}
\newcommand{\inner}[2]{\langle #1, #2 \rangle}
\newcommand{\bignorm}[1]{\big\lVert #1 \big\rVert}
\newcommand{\smin}{s_{\min}}
\newcommand{\dmax}{d_{\max}}
\DeclareMathOperator{\tr}{tr}
\DeclareMathOperator{\supp}{supp}
\newcommand{\Lzero}{$L_0$}

\newcommand{\Lone}{$L_1$}

\newcommand{\stp}[1]{\smallskip\noindent\textit{Step #1.}\ } 
\newcommand{\substp}[1]{\noindent\emph{#1}\ } 

\newenvironment{manualtheorem}[2]{%
  \renewcommand{\thetheorem}{\ref{#1}}
  \addtocounter{theorem}{-1}
  \begin{theorem}[#2]
}{%
  \end{theorem}
}

\newenvironment{manualproposition}[2]{%
  \renewcommand{\theproposition}{\ref{#1}}
  \addtocounter{proposition}{-1}
  \begin{proposition}[#2]
}{%
  \end{proposition}
}

\newenvironment{manualassumption}[2]{%
  \renewcommand{\theassumption}{\ref{#1}}
  \addtocounter{assumption}{-1}
  \begin{assumption}[#2]
}{%
  \end{assumption}
}

\title{Identifiability Guarantees for Drivers and Dynamics of Delayed Physical Systems}

\author{Julien Boussard$^{1,2, \dagger}$
\And
Antoine Debouchage$^3$ \\
\And
Th\'{e}o Saulus$^{2, 4}$ \\
}

\iclrfinalcopy 

\begin{document}

\maketitle

\vspace{-1cm}
\begin{center}
$^1$School of Computer Science, McGill University, Canada \quad $^2$Mila - Quebec AI Institute, Canada \quad $^3$LaMMe, Universit\'{e} \'{E}vry Paris-Saclay, France \quad $^4$ DIRO, Université de Montréal, Canada \\ 
$^\dagger$\texttt{julien.boussard@mila.quebec}
\end{center}
\vspace{0.3cm}

\begin{abstract}
Learning the dynamics of a physical system directly from observations is a key problem in natural sciences, where physical consistency and interpretability are essential. 
A wide range of methods have been proposed, including physics-informed neural networks, which are powerful but do not guarantee identifiability of the dynamics, symbolic regression, which requires a set of precomputed operations, and causal discovery, which is more principled but usually relies on strong assumptions that physical systems may violate. In this work, we develop a theory-grounded method and prove that under a set of permissive assumptions, the structural drivers and drift of stochastic delayed differential equations are identifiable. Our method outperforms others on a benchmark for driver identifiability, and on a second benchmark to evaluate physical consistency of the learned dynamics. 

\end{abstract}

\section{Introduction}

The evolution of many physical systems depends not only on their current state but also on their past states. For example, the climate system 
and ecological or neural population dynamics are driven by delayed feedbacks between variables \citep{CHEKROUN2024134058, climatemodels_sddes, neural_sddes}. 
Lagged dynamical systems explicitly model both instantaneous and time-delayed dependencies, which improves the predictive capacity and interpretability of the learned model.
Such systems are formally described by stochastic delayed differential equations (SDDEs) \citep{sfdeMohammed82, Scheutzow1984} which allow for delayed connections, as opposed to stochastic differential equations (SDEs) used to describe Markovian dynamical systems \citep{Ito44, ito1951stochastic}. 

Lagged connections in physical systems such as the climate system typically lead to oscillations and chaotic behavior, characterized by ergodicity and chaoticity \citep{GHIL20082111, RevModPhys.92.035002}. Ergodic means that long-time statistics are identical across almost all initial conditions \citep{ergodicity}, while chaotic means that an infinitesimally small difference between two initial conditions will lead to two paths diverging exponentially fast \citep{ergodicity_chaos}. 



Scientific machine learning (ML) has mostly focused on modeling SDE-driven systems. 
Neural ordinary differential equations (ODEs) \citep{neural_odes, dupont2019augmentedneuralodes} allow the direct approximation of the gradient of a physical system from irregular sparse observations \citep{zhang2020approximationcapabilitiesneuralodes}. They have been extended to capture delay \citep{eggen2023delaysdenetdeeplearningapproach, Oh_2025}, but still do not learn the long-term statistics of the system, leading to unrealistic long-horizon trajectories. 
Indeed, \citet{statistical_accurate} have shown that data-driven neural parameterizations of chaotic ordinary or partial differential equations (PDEs) fail to generalize when modeling ergodic chaotic systems, despite low test error. 
Physics-informed neural networks \citep{Yin_2021}, or symbolic regression methods \citep{feynman2, sindy} improve generalization but require a known parametric form or a set of precomputed functions and are computationally expensive. 
Motivated by the above limitations, we aim to learn the true drivers and drift of an SDDE. They must therefore be identifiable, i.e. uniquely recoverable from observations. 
Causal discovery methods have emerged as a promising avenue to infer cause–effect relationships from time-series observations and have been extended to complex, nonlinear, lagged causal structures \citep{runge2022discoveringcontemporaneouslaggedcausal, yao2022learning, lippe22citris, hickman2025causalclimateemulationbayesian, brouillard2024causalrepresentationlearningtemporal}. 
To identify the causal graph, these methods typically rely on strong constraints such as acyclicity \citep{pamfil2020dynotearsstructurelearningtimeseries} or no instantaneous connections, which physical systems often violate. 
Another approach is to embed traditional regularization and feature selection methods into Neural ODEs (NODEs, \citealp{cnode}), with identifiability guarantees derived for Markovian systems \citep{bellot2022ngm}.

In this work, we provide identifiability guarantees for ergodic, chaotic SDDE-driven systems, both at population level and in finite-sample settings. We use $L_0$ regularization \citep{louizos2018l0}, which has been proposed in the disentanglement literature \citep{lachapelle_disentanglement} and to regularize neural ODEs \citep{path_reg}. We show that under a set of permissive assumptions, it is sufficient for identifying the correct lagged causal connections and drift of SDDEs.


\textbf{Main contributions}: 
\begin{itemize} 
    \item We develop a theory-grounded method (\Lzero-Neural DDE, \Lzero-NDDE) and provide novel identifiability proofs for lagged and instantaneous drivers and drift of SDDE systems. 
    \item We demonstrate that \Lzero-NDDE outperforms other data-driven methods for identifying structural drivers of dynamical systems on a comprehensive benchmark. 
    \item We show that \Lzero-NDDE accurately captures long-term statistics on 40 simulated low-dimensional chaotic systems from the Dysts dataset \citep{gilpin2023chaosinterpretablebenchmarkforecasting}.
\end{itemize}

\section{Related work}

Several causal discovery approaches have been proposed for time-series data. 
Building upon Granger causality \citep{granger_causality}, which fails to capture nonlinear dynamics \citep{gc_nonlinear}, methods such as Neural Granger Causality (NGC) \citep{neural_granger_causality}, CUTS+ \citep{cheng2023cutshighdimensionalcausaldiscovery} or Temporal Causal Discovery Framework (TCDF) \citep{make1010019} use various deep-learning techniques to learn nonlinear autoregressive links and identify lagged causal links directly from multivariate sequences. 
Constraint-based methods such as PCMCI+ \citep{runge2022discoveringcontemporaneouslaggedcausal} or F-PCMCI \citep{castri2023enhancingcausaldiscoveryrobot} infer causal structure using conditional independence tests between lagged variables, while permutation-based methods (GRaSP, \citealp{pmlr-v180-lam22a}) search over permutations of variables and iteratively prune spurious edges to identify sparse causal graphs.
Score-based methods such as DYNOTEARS \citep{pamfil2020dynotearsstructurelearningtimeseries} infer causal relationships by optimizing a score function with differentiable constraints that ensure identifiability of the causal graph \citep{zheng2018dagstearscontinuousoptimization, brouillard2024causalrepresentationlearningtemporal, hickman2025causalclimateemulationbayesian}.
Noise-based methods (VARLiNGAM, \citealp{JMLR:v11:hyvarinen10a}; RCD, \citealp{pmlr-v108-maeda20a}) leverage the statistical independence of the noise term with respect to the inputs to identify causal connections. Finally, cross-mapping techniques discover lagged causal relationships by aligning lagged time-series (TSCI, \citealp{butler2024tangentspacecausalinference}). 

These causal methods typically discretize time and rely on strong assumptions about the causal graph (e.g., acyclicity) which often violate the nature of the continuous-time dynamics \citep{tagliapietra2025causalstructurelearningdynamical}. To overcome this issue, a growing body of work has proposed to integrate graphical modelling approaches into Neural ODEs to identify the Jacobian of the physical system. Penalized regression has been used to estimate parameters of differential equations, with consistency guarantees \citep{ramsay_param_estimation, raissi2017physicsinformeddeeplearning, wenk2019odinodeinformedregressionparameter}. More recently, \citet{bellot2022ngm} derived identifiability results for drivers of SDEs and proposed to use adaptive group lasso (AGL, \citealp{adaptive_lasso}) to select input features and identify drivers. This work has been applied to SDDEs without identifiability guarantees \citep{lagged_agl}. 

In our work, we propose to use \Lzero-induced sparsity \citep{louizos2018l0} on the input features to identify the instantaneous and lagged drivers of SDDEs. We derive novel theoretical guarantees for the identifiability of the dynamics of SDDEs, under a discrete, finite number of observations. 
\Lzero-induced sparsity has been used in causal representation learning \citep{lachapelle_disentanglement}, and for regularizing neural ODEs (PathReg, \citealp{path_reg}). 


\section{Method}
We consider the following continuous-time stochastic model, with delays $0 < \tau_1 < \dots < \tau_q$,
\begin{gather}
    dX_t = G(X_t,\, X_{t-\tau_1}, \dots, X_{t-\tau_q})\,dt + \Sigma\,dW_t, \nonumber
    \\
    G_j(x_0, x_1, \dots, x_q) \;=\; f_j\!\left(m_j^0 \odot x_0,\; m_j^1 \odot x_1, \dots,\; m_j^q \odot x_q\right),
\quad M \in \{0,1\}^{D \times D \times (q+1)}
    \label{eq:sde_model}
\end{gather}
where $W$ is a standard $D$-dimensional Brownian motion and $\Sigma \in \R^{D \times D}$ is the diffusion matrix ($\Sigma\Sigma^\top$ being the noise covariance per unit time). $M$ is a sparse binary tensor representing the dynamical drivers at timesteps $0, \tau_1, \dots, \tau_q$.
We note $Z_t := (X_t, X_{t-\tau_1},\dots, X_{t - \tau_q})$ to simplify the notation when needed.
We assume here that there is a maximum time lag $\taumax$ for the lagged interactions, which happen within the timesteps $0, \tau_1, \dots, \tau_q$. $f_j$ is the nonlinear function giving the drift (the infinitesimal conditional mean rate of change) of $X_{t, j}$ as a function of its drivers. We learn the functions $\widetilde f_j (\tilde m_j^0 \odot x_0,\; \tilde m_j^1 \odot x_1, \dots,\; \tilde  m_j^q \odot x_q) =: \widetilde G_j(x_0, \dots, x_q)$ to approximate this system.

In this section, we first show that the drivers and the dynamics, i.e. the matrix $M$ and function $G$, can in principle be recovered from a finite number of observations. We first give assumptions under which the SDDE (\ref{eq:sde_model}) admits a unique strong solution, which is a prerequisite to identifiability.
We then show that $G$ and $M$ are identifiable at the population level, either under a hard sparsity constraint (\Cref{thm:sde_identifiability}) or with an $L_0$ penalty (\Cref{thm:penalized}). Under additional assumptions, we prove that the system is geometrically ergodic (\Cref{thm:ergodic}) and provide a finite-sample identifiability theorem (\Cref{thm:finite-sample}).
Proofs are given in \Cref{app:proofs} and we provide a proof map (\Cref{fig:proof-map}) to help the reader understand the construction of the proofs.



\subsection{Existence and uniqueness of a solution}

Because the system is not Markovian, the evolution of the system depends on the whole history of length $\taumax$ rather than the current value only. This is represented by the segment space and the segment process \citep{mao2007, HMS2011}:
\[
\Cseg \;:=\; C\!\left([-\taumax, 0];\, \R^D\right), 
\qquad
X^{\mathrm{seg}}_t(s) \;:=\; X_{t+s}, 
\quad 
s \in [-\taumax, 0],
\]
where $\Cseg$ is the space of continuous functions from $[-\taumax, 0]$ to $\R^D$, equipped with the supremum norm $\norm{\varphi}_\Cseg = \sup_{s \in [-\taumax,0]} \norm{\varphi(s)}$ for any given path $\varphi \in \Cseg$.
The segment process $(X_t^{\seg})_{t \ge 0}$ is then a Markov process on the (infinite-dimensional) space $\Cseg$. 
Accordingly, the initial condition is a $\Cseg$-valued random variable $X_0^{\mathrm{seg}}$, i.e. all values observed in the previous time interval $[-\taumax,0]$.

We impose standard assumptions on the underlying process: we require that the observed history does not anticipate future noise, that the noise acts in all dimensions (non-degenerate), and that the drift maintains finite energy along the trajectory, i.e. the system has finite energy.

\begin{assumption}[Data-Generating Process]
    \label{assu:dgp}
    The true process $X$ solves the SDDE \Eqref{eq:sde_model} on $\mathbb{R}^D$: 
    \begin{itemize}[nosep, leftmargin=2em]
        \item[(i)] \textbf{Initial conditions:} The initial state $X_0^{\mathrm{seg}}$ is independent of $W$, and $\mathbb{E}\|X_0^{\mathrm{seg}}\|_\Cseg^2 < \infty$.
        \item[(ii)] \textbf{Non-degenerate noise:} The diffusion matrix $\Sigma \in \mathbb{R}^{D \times D}$ is invertible.
        \item[(iii)] \textbf{Finite energy:} For every finite $T>0$, $\mathbb{E}\int_0^T \|G(X_t, X_{t - \tau_1}, \dots, X_{t - \tau_q})\|^2\,dt < \infty$.
    \end{itemize}
\end{assumption}

To guarantee that the problem is well-posed, we need to control the drift locally and globally. Locally, it must be regular enough for trajectories to be unique, and globally it must not diverge  in finite time to infinity. We thus impose the drift to be locally Lipschitz --- a condition generally satisfied in most physical systems --- and a radial Khasminskii condition \citep{khasminskii2012} which requires that, outside a ball whose radius can be as large as desired, the drift does not push the current state away from $c$ faster than linearly and prevents explosiveness in finite time. 
It is much weaker than \emph{dissipativity}, which asks the drift to actively pull the state back towards $c$ when it is far away, like friction or damping in a physical system.

\begin{assumption}[Regularity]
    \label{assu:regularity}
    To guarantee well-posedness and stability, we assume:
    \begin{itemize}[nosep, leftmargin=2em]
        \item[(i)] \textbf{Local Lipschitzness:} $\{f_j\}_{j \leq D}$ are locally Lipschitz continuous in all of their arguments jointly.
        \item[(ii)] \textbf{Radial Khasminskii condition:} There exist constants $c \in \mathbb{R}^D$, $R > 0$, and $\beta > 0$ such that, writing $z = (x_0, x_1, \dots, x_q) \in \R^{D(q+1)}$ and $c^{(q)} := (c, \dots, c) \in \R^{D(q+1)}$ for the concatenation of $q+1$ copies of $c$, the true drift $G$ satisfies
        \begin{equation}
            \label{eq:khasminskii}
            \langle x_0-c, G(z)\rangle \le \beta(1 + \|z-c^{(q)}\|^2) \quad \text{for all } z \text{ with } \|z-c^{(q)}\| \ge R.
        \end{equation}
    \end{itemize}
\end{assumption}


These two sets of assumptions are permissive and encompass a wide range of SDDE-driven systems. We now establish the existence of a unique strong solution that does not explode in finite time, deferring the proof to \Cref{sec:proof_strongsol}.

\begin{proposition}[Non-explosion and strong solution]
    \label{prop:strong_solution}
    Under Assumptions \ref{assu:dgp} and \ref{assu:regularity}, the SDDE defined by \Eqref{eq:sde_model} has a unique strong solution for every initial condition with $\mathbb{E}\|X_0^{\mathrm{seg}} \|_\Cseg^2 < \infty$ without explosion i.e. it does not diverge to infinity in a finite amount of time. 
\end{proposition}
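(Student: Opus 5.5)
The plan is the classical localization-plus-Lyapunov argument for SDDEs with locally Lipschitz coefficients (as in \citep{mao2007}), adapted to discrete delays.

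\textbf{Step 1: local existence and uniqueness.} For each $n \ge 1$, truncate the drift. Let $\pi_n$ be the radial projection of $\R^{D(q+1)}$ onto the closed ball of radius $n$ centred at $c^{(q)}$, and set $G^{(n)} := G\circ \pi_n$. By Assumption~\ref{assu:regularity}(i), $G^{(n)}$ is globally Lipschitz. For discrete delays, the functional $\varphi \mapsto G^{(n)}(\varphi(0),\varphi(-\tau_1),\dots,\varphi(-\tau_q))$ is then globally Lipschitz on $\Cseg$ with respect to the sup norm. The standard Picard iteration or contraction argument for functional SDEs with additive noise $\Sigma W$ applies, using Assumption~\ref{assu:dgp}(i) for independence and square integrability of the initial segment. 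It yields a unique strong solution $X^{(n)}$ on every finite horizon, with $\E\sup_{t\le T}\norm{X^{(n)}_t}^2<\infty$.

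Define the stopping times
\[
\sigma_n := \inf\{t \ge 0 : \norm{Z_t - c^{(q)}} \ge n\}.
\]
Solutions for different $n$ agree up to $\sigma_n \wedge \sigma_m$ by pathwise uniqueness. This consistency lets us define a unique maximal local solution $X$ on $[-\taumax, \sigma_\infty)$ with $\sigma_\infty := \lim_n \sigma_n$. Uniqueness of $X$ is inherited from uniqueness of each truncated equation, since locally Lipschitz drifts coincide with their truncations before exit.

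\textbf{Step 2: non-explosion.} It remains to show $\sigma_\infty = \infty$ a.s. Apply Itô's formula to $V(x) = 1+\norm{x-c}^2$ up to $t\wedge\sigma_n$:
\[
dV(X_t) = \big(2\langle X_t - c, G(Z_t)\rangle + \tr(\Sigma\Sigma^\top)\big)dt + 2\langle X_t-c, \Sigma\, dW_t\rangle.
\]
The drift term is handled in two regimes.
\begin{itemize}[nosep, leftmargin=2em]
\item When $\norm{Z_t - c^{(q)}}\ge R$, the Khasminskii condition \eqref{eq:khasminskii} bounds it by $2\beta(1+\norm{Z_t-c^{(q)}}^2)$.
\item On the compact ball of radius $R$, continuity of $G$ bounds it by a constant $K_R$.
\end{itemize}
Together,
\[
2\langle X_t - c, G(Z_t)\rangle \le K + 2\beta\Big(\norm{X_t-c}^2 + \textstyle\sum_{i=1}^q \norm{X_{t-\tau_i}-c}^2\Big).
\]
The key point is that the delayed terms are controlled by the running supremum. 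Let
\[
u(t) := \E \sup_{-\taumax\le s\le t\wedge\sigma_n} \norm{X_s-c}^2 .
\]
Then each delayed term is at most $u(t)$, and each portion with $s\le 0$ is bounded via $\E\norm{X_0^{\seg}}^2_\Cseg$. The martingale term is controlled with Burkholder–Davis–Gundy and the inequality $ab\le \tfrac12 a^2+\tfrac12 b^2$. This gives
\[
u(t)\le C_0 + C_1\int_0^t u(s)\,ds,
\]
with $C_0, C_1$ independent of $n$. Gronwall's inequality then yields $u(T)\le C_T$ uniformly in $n$.

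\textbf{Step 3: conclusion.} We have
\[
n^2\,\Prob(\sigma_n\le T)\lesssim \E \sup_{s\le T\wedge\sigma_n}\norm{Z_s - c^{(q)}}^2\le (q+1)C_T,
\]
so $\Prob(\sigma_n\le T)\to 0$. Since $T$ is arbitrary, $\sigma_\infty=\infty$ a.s., and the maximal solution is global.

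\textbf{Main obstacle.} The main obstacle is Step 2, specifically that the Khasminskii bound involves the full lagged vector $z$ rather than $x_0$ alone. This is why I work with the supremum norm on the segment rather than a pointwise Lyapunov function. An alternative is a Lyapunov–Krasovskii functional
\[
V(X_t) + \sum_{i=1}^q \gamma_i\int_{t-\tau_i}^t \norm{X_s-c}^2\,ds,
\]
which absorbs the delayed terms exactly. I would try the supremum/Gronwall route first since it needs no tuning.

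Finally, note that Assumption~\ref{assu:dgp}(iii) is not needed for existence. Invertibility of $\Sigma$ plays no role here either; it matters only later for identifiability.
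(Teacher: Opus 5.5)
Your proof is correct, but it handles the delay differently from the paper.

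\textbf{The paper's route.} The paper uses the method of steps. On each interval $[k\taumin,(k+1)\taumin]$ the delayed arguments are already known and $\mathcal F_{k\taumin}$-measurable, so the equation becomes a non-delayed It\^o SDE with random drift $G(x,Y_s)$. It truncates only the current state and applies Picard--Lindel\"of conditionally on $\mathcal F_{k\taumin}$, with a random Lipschitz constant. It then runs the Khasminskii test for $1+\norm{x-c}^2$ with random constants, freezing the delayed part of the Khasminskii bound into $\rho_Y^2$, and inducts over the intervals.

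\textbf{Your route.} You truncate the whole lagged vector and call on existence theory for Lipschitz functional SDEs once. You then absorb the delayed terms of the Khasminskii bound through the running supremum over $[-\taumax,t\wedge\sigma_n]$, together with BDG and Gr\"onwall. This is essentially the a priori estimate the paper later performs in Step~3 of \Cref{lem:moments-point}, here driven by the Khasminskii condition instead of dissipativity.

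\textbf{What each buys.} Your approach gives deterministic constants and a single argument on every finite horizon, with no induction. It also yields the quantitative bound $\E\sup_{-\taumax\le s\le T}\norm{X_s-c}^2\le C_T$ as a byproduct, and it does not rely on $\taumin>0$, so it would extend to distributed delays. The paper's conditional argument needs no moments of the initial history, since the bound $\Phi(X_0)e^{C\taumin}$ is never integrated, and it avoids BDG. By contrast, your $C_0$ is finite only because of the hypothesis $\E\norm{X_0^{\seg}}_{\Cseg}^2<\infty$. That is harmless here because the statement assumes it, but it is a genuine use of that hypothesis.
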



\subsection{Identifiability of the dynamical drivers}
\label{identifiability}

We aim to identify the drift on the regions of the state space that the process actually visits. Since the drift depends on the lagged state $Z_t$ and we observe a trajectory over $[0, T]$, the natural measure of ``where the data lives'' is the time-averaged law of $Z_t$. We thus define the \emph{joint occupation measure} of the 
states entering the drift: for a Borel set $A \subseteq \R^{D(q+1)}$ and a fixed $T > \taumax$,
\begin{equation}\label{eq:occq}
\occq(A) \;:=\; \frac{1}{T - \taumax} \int_{\taumax}^{T}
\Prob\!\left(Z_t \in A \right) dt .
\end{equation}

A parameter of a model is identifiable if it is theoretically possible to uniquely determine it from observations. We thus consider an idealized learner: its model class must be able to represent the true system and minimize the population risk (Assumption  \ref{assu:risk_minimization}). 
Although we never fully minimize the risk in practice, this is a standard assumption. 



\begin{assumption}[Model class and population risk minimization]
    \label{assu:risk_minimization}
    Let $T>\taumax$ be fixed.
    \begin{itemize}[nosep, leftmargin=2em]
        \item[(i)] \textbf{Model class:} The model class $\mathcal{H}^q$ consists of pairs $(\widetilde G, \tilde M)$ of candidate masks $\tilde{M} \in \{0,1\}^{D \times D \times (q+1)}$ and masked drifts $\widetilde G_j(x_0, x_1, \dots, x_q) = \widetilde f_j(\tilde m_j^0 \odot x_0, \tilde m_j^1 \odot x_1, \dots, \tilde m_j^q \odot x_q)$ with continuous $\widetilde f_j$,
        and the \emph{same known lags} $\tau_1 < \dots < \tau_q$ as the data-generating process.
        It contains the true pair $(G, M)$, and every $\widetilde G \in \Hclass^q$ has finite risk:
        \begin{equation}\label{eq:risk-identity}
        \Risk_T(\widetilde G) \;:=\; \E \int_{\taumax}^{T}
        \bignorm{\widetilde G(Z_t) - G(Z_t)}^2\, dt = (T - \taumax) \norm{\tilde{G} - G}_{L^2(\occq)}^2 \;<\; \infty.
        \end{equation}
        \item[(ii)] \textbf{Risk minimization:} The learned pair $(\widetilde G, \tilde M) \in \Hclass^q$ minimises $\Risk_T$ over $\Hclass^q$.
    \end{itemize}
\end{assumption}

When minimizing the risk, an input along which $G_j$ is constant could be kept or dropped without changing the drift, we therefore require every true edge to actually vary (Assumption \ref{ass:faithful}).

\begin{assumption}[Faithfulness]\label{ass:faithful} 
For every $a \in \{0, 1, \dots, q\}$ and every pair $(i,j)$ with $M^a_{ij} = 1$, there exist
$z, z' \in \supp(\occq) = \R^{D(q+1)}$ \emph{differing only in the $i$-th coordinate of the $a$-th block argument}
such that $G_j(z) \ne G_j(z')$. We show that $\supp(\occq) = \R^{D(q+1)}$ in \Cref{lem:joint-support}.
\end{assumption}

Conversely, to avoid retaining spurious edges on which $\widetilde G_j$ does not actually depend, we bound the number of learned edges. This entails that the sparsity level of the system is known a priori, a standard assumption in identifiability \citep{lachapelle_disentanglement}, and we relax it in \Cref{penalized_identifiability}. 


\begin{assumption}[Structural Sparsity]
    \label{assu:sparsity}
    The learned mask is no denser than the true mask:
    \begin{equation}
        \|\tilde{M}\|_0 \le \|M\|_0.
    \end{equation}
\end{assumption}


\begin{theorem}[Identifiability]
    \label{thm:sde_identifiability}
    Under Assumptions~
    \ref{assu:dgp}--\ref{assu:sparsity}
    , $G$ and $M$ are identifiable: 
    \begin{align}
        \widetilde G = G \quad \text{on }\R^{D(q+1)},
        \qquad
        \tilde M = M.
    \end{align}
\end{theorem}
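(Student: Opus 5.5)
The plan has three steps: show that the risk is zero at the optimum, upgrade the resulting $L^2(\occq)$ equality to an equality everywhere, and then compare the learned and true masks edge by edge.

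\textbf{Step 1: the optimum has zero risk.} By Proposition~\ref{prop:strong_solution} the process $X$ exists and is unique, so the occupation measure $\occq$ is well defined. The true pair $(G,M)$ lies in $\Hclass^q$ by Assumption~\ref{assu:risk_minimization}(i), and $\Risk_T(G)=0$. Since $\Risk_T \ge 0$, any minimiser satisfies $\Risk_T(\widetilde G)=0$. By the identity \eqref{eq:risk-identity} this means $\norm{\widetilde G - G}_{L^2(\occq)}=0$, so $\widetilde G = G$ holds $\occq$-almost everywhere.

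\textbf{Step 2: equality everywhere.} I would upgrade this to equality everywhere using continuity and full support. Both $\widetilde G$ and $G$ are continuous: the $\widetilde f_j$ by assumption, and the $f_j$ because they are locally Lipschitz. The set $\{\widetilde G \neq G\}$ is therefore open. It is also $\occq$-null, so it cannot meet $\supp(\occq)$. By Lemma~\ref{lem:joint-support}, invoked as stated in Assumption~\ref{ass:faithful}, $\supp(\occq)=\R^{D(q+1)}$, so the set is empty and $\widetilde G = G$ on $\R^{D(q+1)}$.

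The main obstacle is the support lemma itself. If I had to prove it, I would proceed by Girsanov, using the invertibility of $\Sigma$ together with finite energy, Assumption~\ref{assu:dgp}(iii). This makes the path law on $[t-\taumax,t]$ equivalent to that of $\Sigma W$ started from the initial segment. Under that law, the finite-dimensional vector $Z_t=(X_t, X_{t-\tau_1},\dots,X_{t-\tau_q})$ has a Gaussian-type law with positive density, at least for $t \ge \taumax$ when all times lie in $(0,T]$. Averaging over $t\in[\taumax,T]$ then gives positive mass to every open set. I expect the delicate part to be conditioning on the initial segment and controlling the Girsanov density with only local Lipschitz drift; localization or the Novikov-free version of Girsanov under (iii) should handle this.

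\textbf{Step 3: the masks coincide.} Fix $(a,i,j)$ with $M^a_{ij}=1$. By Assumption~\ref{ass:faithful} there are $z,z'$, differing only in coordinate $i$ of block $a$, with $G_j(z)\neq G_j(z')$. Suppose $\tilde m^a_{ij}=0$. Then $\widetilde G_j$ ignores that coordinate, so $\widetilde G_j(z)=\widetilde G_j(z')$. This contradicts $\widetilde G_j=G_j$ at $z$ and $z'$, so $\tilde M \ge M$ entrywise. Hence $\norm{\tilde M}_0 \ge \norm{M}_0$, and combined with Assumption~\ref{assu:sparsity} we get equality of supports, since the entries are binary: $\tilde M = M$.
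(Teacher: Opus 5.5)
Your proof is correct and follows the paper's argument: zero risk at the optimum gives $\widetilde G = G$ $\occq$-a.e., continuity plus the full support of $\occq$ (Lemma~\ref{lem:joint-support}) upgrades this to equality everywhere, faithfulness forbids dropping a true edge, and the sparsity bound then forces $\tilde M = M$. Your Girsanov sketch of the support lemma differs from the paper's pilot-path and Brownian-tube argument and is workable, but be careful about which direction it needs. You need the driftless reference law to be absolutely continuous with respect to the SDDE law. That follows from localizing at exit times, where the drift is bounded and Novikov applies, or from finiteness of the drift energy along reference paths; it does not follow from clause (iii), which controls the energy along SDDE paths and yields the opposite direction.
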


The proof of this theorem is given in \Cref{subsec:sde_identifiability}.

\subsection{\texorpdfstring{$L_0$}{L0}-penalized identifiability}
\label{penalized_identifiability}

We now relax Assumption~\ref{assu:sparsity}: instead of a hard constraint on the number of edges, it is possible to use a penalty if dropping a true edge to minimize sparsity leads to a higher increase in risk.
Define, for each lag $a \in \{0, \dots, q\}$ and pair $(i,j)$ with $M^a_{ij} = 1$, the per-link
\emph{signal strength}
\begin{equation}\label{eq:signal}
s_{ij}^{a} \;:=\; \bignorm{\, G_j - \Pi_{-(i,a)} G_j \,}_{L_2(\occq)},
\end{equation}
where $\Pi_{-(i,a)}$ is the $L_2(\occq)$-orthogonal projection onto the closed subspace of functions
of all coordinates \emph{except} coordinate $i$ of block $a$. 
In words, $s_{ij}^a$ is the smallest error achievable by any model that ignores this input.
Set
\begin{equation}
\smin \;:=\; \min\left\{ s_{ij}^a \;:\; M^a_{ij} = 1 \right\},
\qquad
\dmax \;:=\; \max_{j} \sum_{a=0}^q \norm{m_j^a}_0 .
\end{equation}
Define, for a constant $\lambda > 0$, the penalized risk:
\begin{equation}
    \label{eq:penalized_risk}
    J_\lambda(G, M) = \mathcal{R}_T(G) + \lambda \sum_{a=0}^q \norm{M^a}_0
\end{equation}

\begin{theorem}[Identifiability with the penalized objective]\label{thm:penalized}
    Let Assumptions~\ref{assu:dgp}, \ref{assu:regularity} and \ref{assu:risk_minimization}(i) hold. Suppose $M \neq 0$ and $\smin > 0$ i.e. the system has at least one true driver, and every true driver has a measurable impact on the dynamics. This replaces Assumption~\ref{ass:faithful} (faithfulness).
    If
    \begin{equation}\label{eq:lambda-window}
    0 \;<\; \lambda \;<\; \frac{(T - \taumax) \smin^2}{\dmax},
    \end{equation}
    then every minimizer of $J_\lambda$ over $\Hclass^q$ satisfies $\tilde M = M$ and $\widetilde G = G$ \text{on }$\R^{D(q+1)}$. Moreover, every feasible pair with $\tilde M \neq M$ satisfies
    \[J_\lambda(\tilde{G}, \tilde{M}) - J_\lambda(G, M) \ge \gamma(\lambda) := \min(\lambda, (T - \taumax) s_{\min}^2 - \lambda d_{\max}) > 0\]
\end{theorem}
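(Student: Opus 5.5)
The argument compares $J_\lambda(\tilde G,\tilde M)$ with $J_\lambda(G,M)=\lambda\|M\|_0$ (the true pair has zero risk and lies in $\Hclass^q$ by Assumption~\ref{assu:risk_minimization}(i)). Write $c_T := T-\taumax$. By \eqref{eq:risk-identity}, $\Risk_T(\tilde G)=c_T\sum_j\|\tilde G_j-G_j\|^2_{L^2(\occq)}$, so the penalized gap splits over coordinates:
\begin{equation*}
J_\lambda(\tilde G,\tilde M)-J_\lambda(G,M)=\sum_{j=1}^D\Big[c_T\|\tilde G_j-G_j\|^2_{L^2(\occq)}+\lambda\big(\|\tilde m_j\|_0-\|m_j\|_0\big)\Big],
\end{equation*}
where $\|m_j\|_0:=\sum_a\|m_j^a\|_0$. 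We lower-bound each bracket $\Delta_j$ separately, organizing by whether $\tilde m_j$ misses a true edge.

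\emph{Case 1: $\tilde m_j\supseteq m_j$ (no missed edge).} Then $\Delta_j\ge \lambda(\|\tilde m_j\|_0-\|m_j\|_0)\ge0$. If $\tilde m_j\neq m_j$, at least one extra edge is present and $\Delta_j\ge\lambda$.

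\emph{Case 2: $\tilde m_j$ misses some true edge $(i,a)$.} Then $\tilde G_j$ depends only on coordinates other than $(i,a)$, so it lies in the range of $\Pi_{-(i,a)}$. Since $\tilde G_j$ is continuous and has finite risk, it belongs to $L^2(\occq)$, and the projection characterization gives
\[
\|\tilde G_j-G_j\|_{L^2(\occq)}\ge\|G_j-\Pi_{-(i,a)}G_j\|_{L^2(\occq)}=s^a_{ij}\ge\smin.
\]
The penalty term satisfies $\lambda(\|\tilde m_j\|_0-\|m_j\|_0)\ge-\lambda\|m_j\|_0\ge-\lambda\dmax$. Hence $\Delta_j\ge c_T\smin^2-\lambda\dmax>0$ by \eqref{eq:lambda-window}.

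Summing over $j$, every $\tilde M\neq M$ has some $j$ with $\tilde m_j\neq m_j$, which falls in either Case 1 with a strict edge difference or Case 2. The remaining $\Delta_{j'}$ are all $\ge0$ as long as each coordinate is in Case 1 or Case 2, and every coordinate is. So the total gap is at least $\min(\lambda,c_T\smin^2-\lambda\dmax)=\gamma(\lambda)$. The condition $M\neq0$ and \eqref{eq:lambda-window} make both terms positive.

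Consequently any minimizer must have $\tilde M=M$. Its value then equals $c_T\|\tilde G-G\|^2_{L^2(\occq)}+\lambda\|M\|_0$, which is minimal exactly when $\tilde G=G$ $\occq$-a.e. To upgrade this to equality on $\R^{D(q+1)}$, use that $\tilde G$ and $G$ are continuous and that $\supp(\occq)=\R^{D(q+1)}$ by \Cref{lem:joint-support}. Two continuous functions agreeing a.e. under a measure of full support agree everywhere, because their difference is nonzero on an open set, and such a set has positive measure.

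\textbf{Main obstacle.} The delicate step is the projection bound in Case 2. It requires $\tilde G_j$ to lie in $L^2(\occq)$ and in the closed subspace of functions not depending on coordinate $(i,a)$. Measurability and the masking structure give the latter, and finiteness of the risk together with $G_j\in L^2(\occq)$ (by the finite energy condition in Assumption~\ref{assu:dgp}(iii)) gives the former. If the subspace is interpreted as $\sigma$-algebra–measurable functions, then $\Pi_{-(i,a)}$ is a conditional expectation and the inequality is immediate.
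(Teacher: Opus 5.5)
Your proposal is correct and follows essentially the same route as the paper. Both arguments split the gap per output coordinate and use the same two exhaustive cases: a superset mask, which gives $\Delta_j\ge\lambda$ when strict, and a missed true edge, which gives $\Delta_j\ge (T-\taumax)\smin^2-\lambda\dmax$ via the projection bound. Both then upgrade $\occq$-a.e.\ equality to equality on $\R^{D(q+1)}$ using continuity and \Cref{lem:joint-support}.
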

Note that Assumption~\ref{assu:sparsity} (sparsity constraint) is not needed here.

\subsection{Ergodicity of the true process}

The results above assume that we can compute and minimize $\Risk_T$. In practice, however, we want to learn the system from discrete observations of a single long trajectory. To this aim, we need the system to be \emph{ergodic}, so that trajectories converge to a unique statistical equilibrium.
This is the natural notion of predictability for chaotic systems: individual trajectories cannot be forecast over long horizons, but their long-run statistics are well defined and do not depend on initial conditions.

        
        

\begin{assumption}[Delay-adapted dissipativity]\label{ass:dissip}
There exist $c \in \R^D$, $R, \alpha, C_R \ge 0$ and $\beta_0, \dots, \beta_q, \varepsilon_1, \dots, \varepsilon_q \ge 0$ such that, for all $z = (x_0, \dots, x_q) \in \R^{D(q+1)}$,
\begin{align}
\inner{x_0 - c}{G(z)} &\le \alpha - \beta_0 \norm{x_0 - c}^2 + \textstyle\sum_{a=1}^q \beta_a \norm{x_a - c}^2
  && \text{if } \norm{x_0 - c} \ge R, \label{eq:delay-dissip}\\
\inner{x_0 - c}{G(z)} &\le C_R + \textstyle\sum_{a=1}^q \varepsilon_a \norm{x_a - c}^2
  && \text{if } \norm{x_0 - c} \le R, \label{eq:ball-control}\\
B &:= \textstyle\sum_{a=1}^q (\beta_a + \varepsilon_a) < \beta_0 . && \label{eq:strict-domination}
\end{align}
\end{assumption}

Far from $c$, \Eqref{eq:delay-dissip} forces the current state to be pulled back at rate $\beta_0$, while past states may push outwards at rates $\beta_a$. Near $c$, \Eqref{eq:ball-control} only forces the drift to stay radially controlled: no restoring condition is imposed on the dynamics inside the ball, which can be chaotic.
Strict domination (\Eqref{eq:strict-domination}) is the delay counterpart of stability: the restoring force of the present must dominate the destabilising influence of the past.
This assumption holds for most damped systems with a bounded attractor \citep{lorenz1963, lorenz1996}, for any bounded delayed feedback \citep{mackeyglass1977}, and for any linear delayed feedback once the instantaneous damping is superlinear (e.g. a delayed oscillator model of ENSO \citep{suarezschopf1988}).

\begin{theorem}[Ergodicity: existence, uniqueness, exponential convergence]\label{thm:ergodic}
Under Assumptions~\ref{assu:dgp}, \ref{assu:regularity} and \ref{ass:dissip}, the segment process admits a unique invariant probability measure $\pi^{\seg}$ on $\Cseg$.
Write $P_t(\varphi,\cdot):=\mathcal{L}(X_t^{\seg}\mid X_0^{\seg}=\varphi)$ and $V(\varphi):=\norm{\varphi-c}_{\Cseg}^{2}$, where $c$ denotes the constant segment with value $c$.
The invariant measure satisfies
\begin{equation}
\int_{\Cseg}\norm{\eta-c}_{\Cseg}^{2p}\,\pi^{\seg}(d\eta)<\infty,
\qquad p\ge1 \text{ an integer}.
\end{equation}
There exist $C\ge1$ and $\gamma>0$ such that, $\forall \epsilon>0$, the metric $d(\varphi,\psi):=1\wedge\epsilon^{-1}\norm{\varphi-\psi}_{\Cseg}$ satisfies
\begin{equation}\label{eq:ergodic-tv-wasserstein}
\mathcal{W}_d\bigl(P_t(\varphi,\cdot),\pi^{\seg}\bigr)
\le \norm{P_t(\varphi,\cdot)-\pi^{\seg}}_{\mathrm{TV}}
\le C e^{-\gamma t}\bigl(1+V(\varphi)\bigr),
\qquad t\ge0,\quad \varphi\in\Cseg.
\end{equation}
\end{theorem}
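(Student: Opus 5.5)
The plan is to apply Harris' theorem for the segment process on $\Cseg$, following Hairer--Mattingly--Scheutzow \citep{HMS2011}: a Lyapunov function combined with a smallness/minorization condition on sublevel sets yields a unique invariant measure with geometric convergence. We work in total variation for the exponential rate. The Wasserstein bound $\mathcal{W}_d\le\norm{\cdot}_{\mathrm{TV}}$ is immediate, because $d\le 1$, so $d(\varphi,\psi)\le \1_{\varphi\neq\psi}$, and the coupling characterisation of both distances gives the inequality. Non-explosion and strong well-posedness come from \Cref{prop:strong_solution}, and the segment process is Feller and Markov on $\Cseg$.

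\stp{1 (Lyapunov drift)} Apply Itô to $\norm{X_t-c}^2$ and use \Cref{ass:dissip}. Inside the ball, bound the term $-\beta_0\norm{x_0-c}^2$ by a constant, so that globally
\[ \tfrac{d}{dt}\E\norm{X_t-c}^2\le K-2\beta_0\E\norm{X_t-c}^2+2\textstyle\sum_a(\beta_a+\varepsilon_a)\E\norm{X_{t-\tau_a}-c}^2 .\]
To absorb the delays, use the functional $U(\varphi)=\norm{\varphi(0)-c}^2+\sum_a 2(\beta_a+\varepsilon_a)\int_{-\tau_a}^0 e^{\kappa s}\norm{\varphi(s)-c}^2ds$ with small $\kappa>0$. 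Strict domination $B<\beta_0$ gives $\mathcal{L}U\le K'-\delta U$. To pass to the sup-norm functional $V=\norm{\varphi-c}_{\Cseg}^2$ (and to moments $V^p$), control $\E\sup_{s\in[t-\taumax,t]}\norm{X_s-c}^2$ by $U$ at time $t-\taumax$. For this we use Burkholder--Davis--Gundy on the martingale $\int\langle X-c,\Sigma dW\rangle$ together with the one-sided bound on the drift. The result is $P_{t_0}V\le \gamma_0 V+K''$ for some $t_0>\taumax$ and $\gamma_0<1$. Repeating the argument with $U^p$ gives the higher moments, and $\int V^p d\pi^{\seg}<\infty$ then follows from the Lyapunov bound by the standard Krylov--Bogoliubov and Fatou argument.

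\stp{2 (Minorization on sublevel sets)} Since $\Sigma$ is invertible, the law of $X^{\seg}_{t_0}$ for $t_0>\taumax$ (so that the segment no longer contains the initial data) is equivalent to a driftless one. We apply Girsanov with density built from $\Sigma^{-1}G(Z_s)$, localized via stopping times, using local Lipschitzness and non-explosion. For $\varphi,\psi$ in $\{V\le r\}$, both laws are mutually absolutely continuous with respect to the law of $x+\Sigma W$ shifted by an initial value. The Radon--Nikodym densities are positive and uniformly controlled on a large-probability event, which yields $\norm{P_{t_0}(\varphi,\cdot)-P_{t_0}(\psi,\cdot)}_{\mathrm{TV}}\le 1-\eta_r$. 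An alternative is an asymptotic-coupling argument with a Girsanov shift steering one copy onto the other, as in \citep{HMS2011}. Because $\Cseg$ is infinite-dimensional, no common reference measure is available before time $\taumax$; waiting past $\taumax$ resolves this.

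\stp{3 (Conclusion)} Harris' theorem for the skeleton chain $P_{nt_0}$ gives a unique invariant $\pi^{\seg}$ and $\norm{P_{nt_0}(\varphi,\cdot)-\pi^{\seg}}_{\mathrm{TV}}\le C\bar\gamma^n(1+V(\varphi))$. We extend this to continuous $t$ using the semigroup property, the contractivity of TV, and $\sup_{s\le t_0}P_sV\le C(1+V)$ from Step 1. Uniqueness for the continuous-time process follows because any invariant measure for $P_t$ is invariant for the skeleton.

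The main obstacle is Step 2 jointly with the sup-norm Lyapunov estimate. The minorization requires a quantitative lower bound on the overlap of Girsanov densities uniformly over a sublevel set, while the drift is only locally Lipschitz. I would first try localization: on $\{V\le r\}$, bound $\Prob(\sup_{s\le t_0}\norm{X_s}\le r')$ away from zero, then use that the drift is bounded on that ball to bound the Novikov exponent.
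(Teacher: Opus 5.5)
Your proposal is correct. It shares the paper's architecture: a sup-norm Lyapunov bound $P_tV\le Ae^{-\gamma t}V+B$, a pairwise total-variation overlap of $P_h(\varphi,\cdot)$ and $P_h(\psi,\cdot)$ on balls of $\Cseg$ at a time $h>\taumax$, Harris' theorem in the Hairer--Mattingly--Scheutzow form, truncation plus Fatou for the stationary moments, and $\mathcal W_d\le\norm{\cdot}_{\mathrm{TV}}$ by a maximal coupling. Both key ingredients, however, are obtained differently.

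\textbf{Lyapunov bound.} The paper uses no Krasovskii functional. It applies It\^o to $\norm{X_t-c}^{2p}$ and absorbs the lower-order term with a scaled Young inequality, so that the margin $a_p-2B>0$ survives. It then compares $m_p(t)=\E\norm{X_t-c}^{2p}$ with a Halanay supersolution whose rate solves $\gamma_p+2\sum_a b_ae^{\gamma_p\tau_a}=a_p$. Finally it lifts to the sup norm by BDG on windows of length $\taumax$, as you propose. Your functional has the advantage of being a genuine Lyapunov function on $\Cseg$, and it makes the higher moments a direct It\^o computation on $U^p$. As written, though, it does not satisfy $\mathcal LU\le K'-\delta U$. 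Differentiating $\int_{-\tau_a}^0 e^{\kappa s}\norm{X_{t+s}-c}^2ds$ produces $-e^{-\kappa\tau_a}\norm{X_{t-\tau_a}-c}^2$. This leaves an uncompensated term $2\sum_a b_a(1-e^{-\kappa\tau_a})\norm{x_a-c}^2$, a point value that $U$ does not control. The fix is to use weights $2b_ae^{\kappa(s+\tau_a)}$ and take $\kappa$ small enough that $\sum_a b_ae^{\kappa\tau_a}<\beta_0$. The delayed terms then cancel exactly, with $\delta=\min\{2(\beta_0-\sum_a b_ae^{\kappa\tau_a}),\kappa\}$.

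\textbf{Overlap.} The paper steers one copy onto the other rather than comparing with a reference law. It fixes a pilot difference $g$ that equals $\varphi-\psi$ on $[-\taumax,0]$, decays linearly to $0$ on $[0,h-\taumax]$, and vanishes afterwards. It sets $Y=X-g$, started from $\psi$, with the localized control $u_t=\mathbf{1}_{\{t<\sigma\}}\Sigma^{-1}[G(Z_t)-G(Z_t-g^{\mathrm{arg}}_t)-g'(t)]$, whose energy is deterministically bounded through a local Lipschitz constant. Since $Y_h^{\seg}=X_h^{\seg}$ on $\{\sigma>h\}$, the bound $\E[\Lambda^{-1}]\le e^{K_{h,R_0}}$ and Markov's inequality give the explicit constant $\delta=e^{-K_{h,R_0}}/16$.

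Your route compares each kernel with the driftless segment law. After time $\taumax$, that law depends on the history only through a nondegenerate Gaussian centred at $\varphi(0)$. This needs only local boundedness of $G$ and localization under Brownian motion. It does require a lower bound on the stopped Girsanov density off an exceptional set whose mass is below half the uniformly positive overlap of the two Gaussian references. You get this by first choosing the localization radius and then the threshold on the stochastic integral, so your plan closes. The paper's construction buys an explicit constant and a single change of measure for the pair, which is the form that extends to asymptotic-coupling settings.

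\textbf{Skeleton chain.} Since you pass through $P_{nt_0}$, add one line: $\pi^{\seg}P_s$ is invariant for $P_{t_0}$ by the semigroup property, hence equals $\pi^{\seg}$. This makes the skeleton's invariant measure invariant for the whole semigroup, which you need before extending the bound to continuous $t$. The paper sidesteps this by invoking Harris' theorem for the semigroup directly.
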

In words, the system effectively forgets about its initial condition, and reaches exponentially quickly a regime that is uniquely determined and well behaved.

\subsection{Finite-sample identifiability}
\label{sec:finite-sample}

This section characterizes identifiability when only a finite number of samples are available. 
We assume data is observed on a deterministic time grid $0 = t_0 < t_1 < \dots < t_n = T$, with step sizes $\Delta_k := t_{k+1} - t_k$ bounded by $\Delta_{\max} := \max_{k<n} \Delta_k \le 1$. We also assume that the lagged connections happen within the observed time steps. 
Consequently, we restrict our analysis to the effective horizon $T_\circ := T - \taumax > 0$, denoting the sampled sequence as $Z_k := Z_{t_k}$. All subsequent summations over $k < n$ apply strictly to the usable window $t_k \ge \taumax$.



\begin{assumption}[Finite-sample model class]
\label{ass:class}
Let $\Hclass_n^q$ denote the restricted class of deterministic models derived from Assumption~\ref{assu:risk_minimization}(i) satisfying the following two conditions:
\begin{itemize}
    \item[(i)] \textbf{Local finite covering:} For every radius $\rho \ge 1$ and tolerance $\epsilon > 0$, we can select a finite number of representative functions from within $\Hclass_n^q$ (an internal $\epsilon$-net) such that every function in the class is within a maximum distance of $\epsilon$ from at least one representative over the compact set $K_\rho := \{z \in \R^{D(q+1)} : \norm{z - c^{(q)}} \le \rho\}$.
    The minimal number of representatives required to cover the class this way is denoted by $N_n(\rho, \epsilon)$.
    \item[(ii)] \textbf{Stationary envelope:} The pointwise diameter of the class, defined as $\mathcal{E}(z) := \sup_{\tilde{G}_1, \tilde{G}_2 \in \Hclass_n^q} \norm{\tilde{G}_1(z) - \tilde{G}_2(z)}$, has a finite second moment under the stationary measure: $m_{\mathcal{E}}^2 := \int \mathcal{E}(z)^2 \mu_\infty^q(dz) < \infty$.
\end{itemize}
\end{assumption}

This ensures that each bounded region can be covered by a finite number of candidate functions (i), and that candidates do not lead to wildly different equilibrium (ii).

The empirical risk is defined below, and we assume that a measurable global minimizer exists:
\begin{align}
    \eRisk_n(\tilde{G}) &:= \sum_k \Delta_k \left\| \frac{X_{t_{k+1}} - X_{t_k}}{\Delta_k} - \tilde{G}(Z_k) \right\|^2 \nonumber \\
    (\widehat{G}, \widehat{M}) &\in \argmin_{(\tilde{G}, \tilde{M}) \in \Hclass_n^q} \left\{ \eRisk_n(\tilde{G}) + \lambda \sum_{a=0}^q \norm{\tilde{M}^a}_0 \right\} \label{eq:empirical-risk}
\end{align}

\begin{theorem}[Finite-sample exact support recovery ($L_0$ estimator)]
\label{thm:finite-sample}
Let Assumptions~\ref{assu:dgp}, \ref{assu:regularity}, \ref{ass:dissip} and \ref{ass:class} hold,
with $T > \taumax$. For a fixed $\delta \in (0, 1)$, there exists a bound $\varepsilon_n(\delta; \rho, \epsilon)$ dependent on two deterministic parameters $\rho$ and $\epsilon$, such that the uniform deviation is bounded with probability at least $1 - \delta$:
\begin{equation}
    \sup_{\tilde{G} \in \Hclass_n^q} \left| \left[ \eRisk_n(\tilde{G}) - \eRisk_n(G) \right] - \Risk_T(\tilde{G}) \right| \le \varepsilon_n(\delta; \rho, \epsilon),
\end{equation}
The explicit bound $\varepsilon_n(\delta; \rho, \epsilon)$ and details on the selection of $\rho$ and $\epsilon$ are given in (\ref{eq:finite-choices})--(\ref{eq:finite-error}).

Suppose further that the true mask $M \neq 0$, the minimum signal strength $\smin > 0$, and the penalty parameter $\lambda$ satisfies $0 < \lambda < T_\circ \smin^2 / \dmax$. This ensures a strictly positive margin $\gamma(\lambda) := \min\{\lambda, T_\circ \smin^2 - \lambda \dmax\} > 0$.

If the error bound is strictly less than the margin ($\varepsilon_n(\delta; \rho, \epsilon) < \gamma(\lambda)$), every measurable minimizer of the empirical objective achieves exact support recovery with probability at least $1 - \delta$:
\begin{equation}
    \Prob_\pi \left( \widehat{M}^a = M^a \text{ for all } a=0,\dots,q, \quad \norm{\widehat{G} - G}_{L^2(\mu_\infty^q)}^2 \le \frac{\varepsilon_n(\delta; \rho, \epsilon)}{T_\circ} \right) \ge 1 - \delta
\end{equation}
\end{theorem}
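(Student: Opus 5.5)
The proof has two parts: a uniform deviation bound, then the deterministic margin argument of \Cref{thm:penalized} applied to the empirical objective. Here $\Risk_T$ is understood with respect to the stationary law, i.e.\ $\Risk_T(\tilde G)=T_\circ\norm{\tilde G-G}^2_{L^2(\mu_\infty^q)}$, and we start the process from $\pi^{\seg}$ given by \Cref{thm:ergodic}.

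\emph{Step 1: decomposition.} Write the increment as $X_{t_{k+1}}-X_{t_k}=\int_{t_k}^{t_{k+1}}G(Z_s)\,ds+\Sigma(W_{t_{k+1}}-W_{t_k})$. Expanding the square gives
\[
\eRisk_n(\tilde G)-\eRisk_n(G)=\sum_k\Delta_k\norm{\tilde G(Z_k)-G(Z_k)}^2-2\sum_k\inner{\tilde G(Z_k)-G(Z_k)}{\xi_k},
\]
where $\xi_k$ collects the noise increment $\Sigma(W_{t_{k+1}}-W_{t_k})$ and the discretisation error $\int_{t_k}^{t_{k+1}}(G(Z_s)-G(Z_k))\,ds$. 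The deviation from $\Risk_T(\tilde G)$ therefore splits into three terms:
\begin{itemize}
\item[(a)] a Riemann-sum error, comparing $\sum_k\Delta_k\norm{h(Z_k)}^2$ with $\int\norm{h(Z_t)}^2dt$ for $h=\tilde G-G$;
\item[(b)] an ergodic fluctuation, comparing $\int\norm{h(Z_t)}^2dt$ with $T_\circ\norm{h}^2_{L^2(\mu_\infty^q)}$;
\item[(c)] the cross term.
\end{itemize}

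\emph{Step 2: uniform control on $K_\rho$ and tails.} Truncate to $K_\rho$. Outside $K_\rho$, bound every term by the envelope $\mathcal E$. Assumption~\ref{ass:class}(ii), together with the moment bounds on $\pi^{\seg}$ from \Cref{thm:ergodic}, controls $\E\sum_k\Delta_k\mathcal E(Z_k)^2\1\{Z_k\notin K_\rho\}$, and Markov's inequality turns this into a high-probability bound that decays as $\rho$ grows.

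Inside $K_\rho$, replace the class by the $\epsilon$-net of size $N_n(\rho,\epsilon)$. Replacing a function by its net representative costs $O(\epsilon\,\cdot)$ times quantities controlled by the envelope. For each fixed net element:
\begin{itemize}
\item term (c), noise part: the sum is a martingale with conditionally Gaussian increments and bounded integrand on $K_\rho$, so Bernstein/Freedman gives a sub-Gaussian tail;
\item term (b): use a concentration inequality for geometrically ergodic Markov processes, via the exponential TV mixing of \Cref{thm:ergodic}. A blocking argument with $\beta$-mixing coefficients $Ce^{-\gamma t}$ gives a Bernstein-type bound with effective sample size about $T_\circ\gamma$;
\item term (a) and the discretisation part of (c): local Lipschitzness of $G$ and of net elements on $K_\rho$, together with the modulus of continuity of $X$ (at most $\Delta_{\max}^{1/2}\log$ plus drift), give an $O(\sqrt{\Delta_{\max}})$ contribution with high probability.
\end{itemize}
A union bound over the $N_n(\rho,\epsilon)$ net elements, with $\delta$ split among the pieces, defines $\varepsilon_n(\delta;\rho,\epsilon)$.

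\emph{Step 3: support recovery.} On the event of probability at least $1-\delta$, any $(\tilde G,\tilde M)$ satisfies
\[
\bigl|\,[\eRisk_n(\tilde G)+\lambda\norm{\tilde M}_0]-[\eRisk_n(G)+\lambda\norm{M}_0]-[J_\lambda(\tilde G,\tilde M)-J_\lambda(G,M)]\,\bigr|\le\varepsilon_n .
\]
By \Cref{thm:penalized}, if $\tilde M\neq M$ the population gap is at least $\gamma(\lambda)>\varepsilon_n$. Such a pair then has strictly larger empirical objective than $(G,M)$, so it cannot be the minimizer, and hence $\widehat M=M$. Then $\eRisk_n(\widehat G)\le\eRisk_n(G)$ with equal penalties, so $\Risk_T(\widehat G)\le\varepsilon_n$, which gives $\norm{\widehat G-G}^2_{L^2(\mu_\infty^q)}\le\varepsilon_n/T_\circ$.

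This step uses that $(G,M)\in\Hclass_n^q$; I would add this to the restricted class if it is not already implied. I also need \Cref{thm:penalized} to hold with the stationary $\occq=\mu_\infty^q$. It does, because stationarity makes the occupation measure equal to $\mu_\infty^q$, and $\supp(\mu_\infty^q)=\R^{D(q+1)}$ follows from non-degenerate noise.

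\emph{Main obstacle.} The hardest part is the ergodic concentration in term (b) for the infinite-dimensional segment process. \Cref{thm:ergodic} gives mixing in TV weighted by $1+V$, not uniform mixing. My first attempt would be a regeneration/blocking construction: split $[\taumax,T]$ into blocks of length $L$ with gaps, couple the blocks to independent stationary copies at TV cost about $(T/L)Ce^{-\gamma L}(1+\E V)$, and apply Hoeffding to the bounded truncated summands on $K_\rho$.
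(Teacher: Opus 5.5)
Your plan has the same architecture as the paper's proof. It uses the same exact expansion of $\eRisk_n(\widetilde G)-\eRisk_n(G)$ into a squared-loss average, a Brownian term and a discretisation term. It localises to $K_\rho$ with an internal $\epsilon$-net, and uses Yu-type independent blocks driven by $\beta$-mixing. That $\beta$-mixing comes from integrating the $(1+V)$-weighted TV bound of \Cref{thm:ergodic} against $\pi^{\seg}$, which is how the paper resolves your ``main obstacle'' in \Cref{thm:exp-mixing}. You then use a conditionally Gaussian Chernoff bound with a union over the net, and close with the margin $\gamma(\lambda)$ of \Cref{thm:penalized} at $\occq=\mu_\infty^q$. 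Your Step 3 matches the paper's final step. You differ in two places: you route the squared-loss average through the continuous-time integral $\int\norm{h(Z_t)}^2dt$, which creates your term (a), and you control discretisation pathwise through Lipschitz bounds and the modulus of continuity of $X$.

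Term (a) needs repair as written. You invoke local Lipschitzness of the net elements, but the class only assumes continuous $\widetilde f_j$, so no $O(\sqrt{\Delta_{\max}})$ rate is available. You would have to fall back on uniform continuity of the finitely many net elements on $K_\rho$, which gives a non-explicit, net-dependent modulus. You would also need the whole path, not just the grid points, to stay in $K_\rho$. That requires a pathwise localisation event like the paper's $\mathcal L_\rho$, built from the stationary segment moments $S_p$. Your grid-point tail expectation alone does not provide it.

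The paper avoids term (a) entirely. Under stationarity each $Z_k\sim\mu_\infty^q$, so $\E_\pi\sum_k\Delta_k\norm{h(Z_k)}^2=T_\circ\norm{h}^2_{L^2(\mu_\infty^q)}$ exactly. The blocking is then done directly on the grid observations, with bins in physical time, and needs no regularity of candidates beyond continuity.

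For discretisation, the paper bounds the term uniformly by $2\sum_k\int_{t_k}^{t_{k+1}}\mathcal E(Z_k)\norm{G(Z_s)-G(Z_k)}\,ds$. It then uses Cauchy--Schwarz, stationarity and the mean-square modulus $\omega_G$, followed by Markov's inequality. This needs neither a Lipschitz constant of $G$ on $K_\rho$ nor a Brownian modulus, at the cost of a $1/\delta$ factor and no explicit rate. Your route would give an explicit $\sqrt{\Delta_{\max}\log}$-type rate, but with a local Lipschitz constant that grows with $\rho$.

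One more detail: the net remainder of the Brownian term is not controlled by the envelope. It is bounded by $2\epsilon\sum_k\norm{\Sigma\Delta W_k}$, whose mean is of order $\epsilon\sum_k\sqrt{\Delta_k}$, i.e.\ $\epsilon T_\circ/\sqrt{\Delta}$ on a uniform grid. So $\epsilon$ must be tuned to the mesh, and the paper handles this remainder pathwise rather than as part of the martingale tail.
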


Setting $\lambda = \lambda_0 T_\circ$ (where $0 < \lambda_0 < \smin^2 / \dmax$) allows us to identify the mask exactly and the drift up to a given level of error from finitely many observations. Achieving consistency inherently requires joint conditions on the observation duration, mesh size, and model class complexity; increasing the number of samples alone is not sufficient. The bound $\varepsilon_n$ is explicit, but it should be read as a qualitative consistency statement:
it contains factors $1/\delta$ and a local envelope $E_\rho$ that may grow with the radius $\rho$, itself growing with $T_\circ$.

\subsection{Implementation and optimization}
\label{implementation}

The \Lzero-norm used in our identifiability theorems (\ref{thm:sde_identifiability}, \ref{thm:finite-sample}) is not differentiable. To allow for differentiable optimization, we implement the relaxed $L_0$ penalization proposed by \citet{louizos2018l0} and parametrize the matrix $M$ as the sigmoid of differentiable logit parameters. 
The functions $\widetilde f$ are parametrized with simple MLPs, with $\text{tanh}$ activation functions and an affine output layer. With discrete, regularly sampled observations, we train the model to predict $X_{t+1} - X_t$ given $X_t, X_{t-1}, ..., X_{t-\tau}$. With irregular observations, it is possible to choose any numerical integrator, as when training Neural ODEs \citep{neural_odes}. Here, we use a simple Euler integrator, as all datasets in this paper have regular discrete temporal resolutions. We standardize each dimension by the estimated gradient mean and standard deviation. Otherwise, the method may learn to predict the mean for smooth dimensions, and mask off all of their potential drivers. 
We use the Adam optimizer \citep{kingma2015adam}. We additionally propose to use gradient penalty \citep{gradient_penalty}, to regularize our network and ensure that it does not overfit to spurious correlations. 
Full implementation details are provided in \Cref{sec:implementation_details}. As the datasets considered in this paper are low-dimensional, and the architectures are relatively small, we run all experiments on CPUs. The code for implementing the methods can be found here \url{https://anonymous.4open.science/r/ODEDriverIdentifiability-C325/README.md}.

Assumption~\ref{ass:dissip} requires the true drift to be unbounded, and Assumption~\ref{ass:class} requires the drift to belong to the model class. However, MLPs with $\tanh$ activation functions are bounded by construction. To satisfy both assumptions needed for finite-sample identifiability, we parametrize $\widetilde{G}$ as $\widetilde{G}_{R}(z) = \widetilde f(z) - K \cdot(u - \Pi_{R, c}(u))$, where $R$ is an arbitrarily large radius parameter, $c$ the center of the system, $K=\mathrm{diag}(\kappa_1, \dots, \kappa_D)$ a diagonal matrix, and $\Pi_{R, c}(u)$ a projection onto the cube of center $c$ and radius $R$. In practice, we can choose $R$ large enough so that the cube encompasses all our training observations, $c$ to be equal to zero. This correction is seen as a linear restoring force outside a cube and can be applied at inference only. More details on this modification
are given in \ref{subsec:learned-ergodic}.

\section{Datasets and Experiments}
\label{experiments}

\subsection{CausalDynamics Dataset}

The CausalDynamics dataset \citep{herdeanu2025causaldynamicslargescalebenchmarkstructural} contains more than 14,000 simulated dynamical systems grouped in different classes: ``Simple'' three-dimensional chaotic dynamical systems (e.g. the Lorenz system), with or without added noise (high noise level, 2) and confounders (585 graphs); ``Coupled'' systems generated from simulated interactions between simple MLPs, of dimension 3--10,  with added noise and with or without confounders, lagged interactions, or standardization (14096  graphs); and 2 simulated ``Climate'' datasets, one corresponding to a simplified coupled ocean-atmosphere model in dimension 10, and one simulating the decoupled El-Ni\~{n}o Southern Oscillation (ENSO), an important mode of variability of the climate system (12 graphs). 

For each graph, 10 timeseries of 1000 timesteps are generated with different random seeds. The area under the receiver-operator curve (AUROC), area under the precision-recall curve (AUPRC) and structural Hamming distance (SHD) are computed for each time series, before being averaged across the system class (e.g. ``Simple, Confounder, No noise''). 
Hyperparameter search is done for all evaluated methods by maximizing AUROC on the ``Simple, No confounder, No noise, Lorenz84'' graph, which is then excluded from the results. The same set of parameters is used for evaluation on all other graphs. 
We implement \Lzero-NDDE and additional baselines: AGL \citep{bellot2022ngm, lagged_agl} and C-NODE \citep{cnode}, which uses the \Lone-penalty. We extend C-NODE to SDDEs similarly as Neural ODEs, by adding neural networks to capture lagged connections. All methods share the same core architecture. Hyperparameter search is conducted for each method separately, following the benchmark procedure. 
Details on the hyperparameter search, and the final set of parameters are reported in \Cref{sec:hyperparams}. 

\subsection{The Dysts Benchmark}
\label{subsec:dysts_benchmark}

To evaluate the quality of the learned dynamics, we use the Dysts benchmark \citep{gilpin2023chaosinterpretablebenchmarkforecasting}. It allows us to generate trajectories for low-dimensional chaotic systems (e.g. ``Lorenz'' system), which we use to evaluate \Lzero-NDDE, C-NODE, AGL, as well as PathReg, which uses \Lzero \space to regularize entire input-output paths, and regular NODEs. 
In total, we use the 40 systems for which the Jacobian is given by the dataset, which is needed for our metric computation (\Cref{subsec:dysts_metric_computation}). 
For each system, we generate one noisy training trajectory of 50 Lyapunov timesteps, with stochastic noise forcing of level 0.05, and 3 test trajectories of 250 Lyapunov timesteps, with different initial conditions. The Lyapunov timestep is the timescale over which a chaotic system loses its predictability \citep{LYAPUNOV01031992}.
These trajectories are long enough to densely cover the system's space. 
We train each method on the training trajectory, generate 3 trajectories using the 3 test initial conditions, and compare the predicted trajectories to the true unseen test trajectories for evaluation. 

We compute two metrics to evaluate short-term accuracy: the normalized root mean-square error (NRMSE) averaged over the first 5 Lyapunov timesteps, and the valid Lyapunov prediction time (VLPT), the number of Lyapunov timesteps before the predicted trajectories diverge from the true trajectory by more than 1 standardized unit. We compute two metrics to evaluate the reconstructed systems' attractor geometry and chaoticity: the first Lyapunov exponent squared error (FLEE), indicative of the level of chaoticity of the system, and the squared error between the true and estimated Kaplan-Yorke dimension ($D_{KY}$, \citealp{dky_error}), a proxy for the system's attractor dimension. We compute two metrics to evaluate the long-term statistics of the predicted trajectories: the log-spectral distance (LSD), to evaluate whether the oscillation frequencies are accurately captured, and the state-space distance ($D_{stsp}$, \citealp{dstsp}), a divergence between the true and estimated equilibrium distribution. 
We report, for each metric and method, the median value across all datasets and the average fractional rank, to assess whether one method systematically outperforms another. More details on the metric computation are given in \Cref{subsec:dysts_metric_computation}. 

We choose 5 systems at random for hyperparameter search and we select, for each method, the set of parameters that maximizes the VLPT before evaluating the method on the other systems. A detailed description of the hyperparameter search and final parameters is given in \Cref{subsec:hyperparam_search_dysts}.


\section{Results}
\label{results}

\subsection{CausalDynamics Benchmark}
\label{causaldynamicsresults}

Results on the CausalDynamics benchmark are reported in \Cref{tab:causaldynamics_results}. \Lzero-NDDE achieves lower average rank on all three metrics. It generally has higher AUROC/AUPRC than competing methods on the ``Simple'', ``Coupled'' and the ``Climate'' datasets, and consistently achieves low SHD, although the hyperparameters, only selected on the ``Simple lorenz84'' dataset, lead to too dense graphs on the ``Coupled'' datasets. 
Some results lead to ``degenerate'' metrics (e.g. AUROC of 0.50), as some methods sometimes fully sparsify or do not sparsify at all, and thus do not rank the edges. NGC and AGL achieve the lowest SHD on the MAOOAM dataset despite low AUROC, as they do not sparsify the learned graph. 
C-NODE performs well in general, despite not providing any identifiability guarantee. C-NODE also does not explicitly remove input features and even when a low weight is given to an input feature, the neural network can compensate for it. 

\begin{table*}[hbt!]\centering
\resizebox{\textwidth}{!}{
\begin{tabular}
{c c ccc ccccc cc c}
\toprule
& \multirow{2}{*}{\textbf{Experiment}}  & \multicolumn{3}{c}{\textbf{Simple}} & \multicolumn{5}{c}{\textbf{Coupled}} & \multicolumn{2}{c}{\textbf{Climate}} & \multirow{2}{*}{\textbf{Avg. rank}} \\
\cmidrule(lr){3-5} \cmidrule(lr){6-10} \cmidrule(lr){11-12}
& & Default & Confounder & Noise & Default & Noise & Confounder & Time-lag & Standardize & MAOOAM & ENSO \\
\midrule

 \multirow{9}{*}{$\vcenter{\hbox{\rotatebox{90}{\textbf{SHD}}}}$}
& PCMCI+ & 41.04 & 23.02 & 47.64 & 224.80 & 183.90 & 324.63 & 327.72 & 228.32 & 80.00 & 529.36 & 7.30 \\
& F-PCMCI & 35.30 & 21.07 & 45.09 & 192.90 & \underline{149.60} & 195.74 & 350.61 & \underline{201.79} & 130.00 & 530.27 & 5.80 \\
& NGC & 28.91 & 19.96  & 28.84 & 840.95 & 842.55 & 670.53 & 793.67 & 840.26 & \underline{31.00} & 337.09 & 7.90 \\
& CUTS+ & 48.11 & 24.04 & 61.32 & \textbf{152.00} & 150.50 & 272.68 & 247.22 & 310.63 & 130.00 & 608.73 & 7.50 \\
& RCD & 61.85 & 26.74 & 61.38 & \underline{157.05} & 155.65 & \underline{136.53} & \textbf{201.11} & \textbf{159.84} & 130.00 & 665.36 & 6.80 \\
& GRaSP & 59.04 & 27.09 & 60.18 & 215.94 & 842.55 & \textbf{136.00} & 793.67 & 840.26 & 126.00 & 666.27 & 9.70 \\
 & AGL & 26.60 & \underline{16.96} & \underline{21.28} & 319.04$^*$ & 222.41$^*$ & 246.37$^*$ & 315.54$^*$ & 312.31$^*$ & \textbf{30.00}$^*$ & 337.09 & 5.55 \\
 & C-NODE (L1) & \textbf{24.07} & 17.10 & \textbf{20.50} & 253.91$^*$ & 176.30$^*$ & 179.58$^*$ & 273.57$^*$ & 258.09 & 42.00 & \underline{335.36} & \underline{4.50} \\
& \textbf{\Lzero-NDDE (Ours)} & \underline{26.42} & \textbf{16.25} & 23.67 & 205.35 & \textbf{144.20} & 143.33 & \underline{222.34} & 212.11 & 40.00 & \textbf{325.54} & \textbf{2.50} \\

\midrule

 \multirow{9}{*}{$\vcenter{\hbox{\rotatebox{90}{\textbf{AUROC / AUPRC} }}}$} 

& PCMCI+ & .52 / .71  & .49 / .59 & .50 / .69 & \textbf{.67} / .25 & \underline{.64} / .25 & .58 / .20 & .58 / .24 & \textbf{.69} / .27 & .69 / .88 & \underline{.57 / .70} & 4.10 / 5.25 \\
& F-PCMCI & .51 / .70 & .50 / .59  & .52 / .70 & \textbf{.67} / .27 & .57 / .21 & .55 / .19 & \underline{.59} / .24 & \underline{.68} / .28 & .50 / .81 & \underline{.57 / .70} & 4.80 / 6.05 \\
& NGC & .50 / .69 & .50 / .58  & .50 / .68 & .50 / .15 & .50 / .15 & .50 / .16 & .50 / .20 & .50 / .15 & .50 / .81 & .50 / .67 & 9.85 / 10.90 \\
& CUTS+ & .50 / .69 & .50 / .58 & .50 / .68 & .50 / .15 & .50 / .15 & .49 / .16 &  .50 / .20 &  .50 / .15 &  .50 / .81 & .50 / .67 & 10.10 / 10.90 \\
& RCD & .50 / .69 & .50 / .58 & .50 / .68 & .50 / .15 & .50 / .15 & .51 / .18 & .50 / .20 & .50 / .16 &  .50 / .81 &  .50 / .67 & 9.55 / 10.25 \\
& GRaSP & .52 / .71 & .55 / .64 & .49 / .68 & .49 / .15 & .50 / \textbf{.50} & .50 / .17 & .50 / \textbf{.50} & .50 / \textbf{.50} & .48 / .81 & .50 / .50 & 9.60 / 6.70 \\
 & AGL & .52 / .71$^*$ & \underline{.56} / .64 & .53 / .74$^*$ & .53 / .33$^*$ & .53 / .33$^*$ & .54 /  .35$^*$ &.52 / .35$^*$ & .54 / .34$^*$ & .56 / .86$^*$ & .54 / .69$^*$ & 5.80 / 4.35 \\
  & C-NODE (L1) & .64 / \underline{.79} & .54 / \underline{.65} & \underline{.61 / .79}$^*$ & .64 / \textbf{.43} & .63 / .43 & \textbf{.67} / \underline{.47} & \underline{.59} / .43 & \underline{.68} / .47 & \textbf{.88 / .97} & .49 / .66$^*$ & \underline{3.45 / 3.35} \\
& \textbf{\Lzero-NDDE (Ours)} & \textbf{.64 / .80} & \textbf{.58 / .66} & \textbf{.63 / .80} & .64 / \textbf{.43} & \textbf{.65} / .44 & \textbf{.67} / \underline{.47} & \textbf{.60} / .43 & .66 / \underline{.45} & \underline{.86 / .96} & \textbf{.58 / .75} & \textbf{1.75 / 2.10} \\
\bottomrule
\end{tabular}
}
\caption{\textbf{\Lzero-NDDE outperforms competing methods on the CausalDynamics benchmark.} SHD (lower is better) and AUROC / AUPRC (higher is better) are reported for \Lzero-NDDE (Ours), AGL, C-NODE and the benchmark causal discovery methods. The best values are bold, the second best values are underlined. $^*$ indicates statistical significance between AGL or C-NODE and \Lzero-NDDE, calculated with a Welch's t-test (p-value 0.05). The benchmark only provides the mean values for the causal discovery methods, and we thus do not run statistical significance tests for these methods. For AUROC and AUPRC, statistical significance results always match, so we report only one $^*$ for each pair. The last column reports the average rank for each method and metric. For readability, methods that never score best are not shown here, but in the Appendix (\Cref{tab:full_causal_dyn_new})}
\label{tab:causaldynamics_results}
\end{table*}

\subsection{Dysts Benchmark}
\label{dystsresults}

\begin{table*}[hbt!]\centering

\resizebox{\textwidth}{!}{
\begin{tabular}{c cc cc cc cc cc cc cc}

\toprule
\multirow{2}{*}{\textbf{Method}} & \multicolumn{2}{c}{NRMSE ($\downarrow$)} & \multicolumn{2}{c}{VLPT ($\uparrow$)} & \multicolumn{2}{c}{FLEE ($\downarrow$)} & \multicolumn{2}{c}{$D_{KY}$ ($\downarrow$)} & \multicolumn{2}{c}{LSD ($\downarrow$)} & \multicolumn{2}{c}{$D_{stsp}$ ($\downarrow$)} & \multicolumn{2}{c}{SHD ($\downarrow$)}\\
\cmidrule(lr){2-3} \cmidrule(lr){4-5} \cmidrule(lr){6-7} \cmidrule(lr){8-9} \cmidrule(lr){10-11} \cmidrule(lr){12-13} \cmidrule(lr){14-15}
 & Med. & Rank & Med. & Rank & Med. & Rank & Med. & Rank & Med. & Rank & Med. & Rank & Med. & Rank \\
\midrule
AGL & \textbf{1.15} & \underline{2.69} & 2.85 & 3.14 & 0.25 & 3.47 & 0.88$^*$ & 3.82 & 14.71$^*$ & 2.82 & 3e$^6$$^*$ & 4.37 & 0.44$^*$ & 4.46 \\
PathReg & \underline{1.19} & \textbf{2.49} & 4.33 & 3.24 & 0.25 & 3.78 & 0.89$^*$ & 3.9 & 16.99 & 3.02 & 3e$^6$$^*$ & 4.62 & 0.22 & 2.70 \\
C-NODE & 1.53 & 4.05 & 4.61 & 3.39 & 0.20 & 3.00 & 0.72$^*$ & 3.17 & 37.79$^*$ & 4.08 & 34.60$^*$ & 2.86 & \underline{0.17} & \underline{2.67} \\
NODE & 1.20 & 2.94 & \underline{5.12} & \underline{2.63} & \textbf{0.16} & \underline{2.51} & \underline{0.076} & \underline{2.17} & \textbf{13.99} & \underline{2.57} & \underline{4.29} & \underline{1.62} & 0.22 & 2.70 \\
\Lzero-NDDE & \underline{1.19} & 2.83 & \textbf{6.54} & \textbf{2.60} & \textbf{0.16} & \textbf{2.25} & \textbf{0.075} & \textbf{1.94} & \underline{14.75} & \textbf{2.50} & \textbf{2.50} & \textbf{1.54} & \textbf{0.11} & \textbf{2.45} \\

\bottomrule

\end{tabular}
}
\caption{\textbf{\Lzero-NDDE outperforms other feature selection methods.} \Lzero-NDDE ranks best in five of the metrics. The difference with NODE is non significant, as NODE also performs very well, but \Lzero-NDDE consistently ranks better than the other methods performing input feature selection, especially on attractor dimension reconstruction ($D_{KY}$) and convergence in distribution ($D_{stsp}$). $^*$ indicates statistical significance with the \Lzero-NDDE metric, obtained using a permutation test (p-value 0.05). For each metric, best results are bolded and second best are underlined.}
\label{tab:dysts_results}
\end{table*} 

Results are reported in \Cref{tab:dysts_results}. \Lzero-NDDE and NODE consistently rank better than other methods in all metrics except the NRMSE. The median $D_{stsp}$ is extremely high for AGL and PathReg, as this metric explodes when the two distributions are very different. It is an artefact of $D_{stsp}$. Additionally, we show predicted versus true trajectories for \Lzero-NDDE, NODE, and C-NODE on 26 test systems, in \Cref{fig:full_dysts_results}. Both NODE and \Lzero-NDDE are capable of modelling the systems very well. 

To explain why NODE and \Lzero-NDDE outperform other methods, we report in \Cref{fig:dysts_statistical_association} the relationship between each metric and the SHD (upper half) and training MSE (lower half). We run a Kendall-$\tau$ test \citep{kendall_rank}, standard for non-Gaussian low-samples regimes, to measure the strength of association between the two sets of values. We find that $D_{KY}$, $D_{stsp}$ and VLPT have a statistically significant association with both SHD and training MSE, indicating that on this benchmark, methods that better learn the drivers, and better fit the training data perform better. C-NODE, PathReg, and AGL all apply a regularization on the weights of the neural network, preventing these methods from reaching low training error. PathReg tends to sparsify specific input-output paths without masking off input features, thus not performing input feature selection. This is also why we do not evaluate it on CausalDynamics. With no weight penalty, neural ODE achieves low training error and thus good reconstruction performance. 
\Lzero-penalty does not penalize the weights of the network but instead turns the inputs on/off, allowing \Lzero-NDDE to achieve low train error while masking off irrelevant input features. With the lowest SHD, \Lzero-NDDE thus outperforms other methods. 

\section{Discussion}

We hope that the theoretical framework proposed in this paper lays the foundations for the development of robust methods for modelling lagged dynamical systems. 
One limitation of our method is that we can only optimize for an approximate \Lzero-penalty, as the true \Lzero-loss is not differentiable, although existing methods also rely on approximations. We show empirical improvements on two benchmarks that evaluate the methods mostly on SDE-driven systems as, to the best of our knowledge, no comprehensive benchmark exists for evaluating lagged systems modelling. 

Despite these promising results, future work is needed to apply this method to real physical systems. Our method's computational requirements scale quadratically with the input dimension. To apply it to high-dimensional systems, it would need to be combined with an identifiable dimensionality reduction technique \citep{hizli2024identifyinglatentstatetransition}.  
Our method learns probabilities for the input-output mapping, but the learned drift is deterministic. Extending our method to be fully probabilistic could allow us to better represent uncertainty during long-term autoregressive rollouts. 
We see several exciting applications of our method. Identifying correct drivers in dynamical systems could open the door to scientific discovery across several domains. Our method could also be integrated into physical models to robustly parametrize SDDE-driven subprocesses. It could also be used to stabilize gradient estimates in differentiable models of chaotic systems \citep{metz2022gradientsneed}.

\section*{Acknowledgements}

We thank Julia Kaltenborn and David Rolnick for their feedback on the manuscript. We thank Benjamin Herdeanu, Carla Roesch and Juan Nathaniel for helping us run the CausalDynamics benchmark. This research was supported in part by the Canada CIFAR AI Chairs program. 

This research was enabled in part by compute resources provided by Mila (\url{mila.quebec}).


\section*{AI use statement}

In this work, we used generative AI tools for providing critical ingredients for proving mathematical claims, assisting in the writing of proofs, and implementing methods. 
We have not used generative AI tools for generating synthetic data sets, helping develop theoretical models or conceptual frameworks, formulating mathematical claims, proposing or refining hypotheses, designing or providing feedback on research methodology or experiments, assisting with translation, cleaning and reformatting datasets, supporting qualitative and thematic data analysis, or interpreting results.
Additionally, we used generative AI tools to identify relevant literature. 
We have reviewed all AI-assisted work. 
For the mathematical proofs, we first sketched the proofs ourselves before asking an LLM to review them and suggest corrections or improvements. We have reviewed every suggestion manually before incorporating them into the final version of the proofs. For implementing methods, we carefully verified and tested all code used for the experiments in this paper. All the identified literature and references were manually checked, and we read every paper that our work builds upon or takes inspiration from; e.g., we did not rely on an LLM-generated summary of other research papers to design this paper, or to decide whether to include or not include references. 
We take responsibility for the final content of this work, including text, claims or artifacts produced with the aid of generative AI.

\section*{Reproducibility statement}

The code to implement the methods has been shared in \Cref{implementation}. This code uses poetry to manage the environment. Detailed parameters, as well as hyperparameter search procedures are given in the main text as well as the appendix. As mentioned in the main text, all experiments have been run on CPU, allowing for easy reproducibility i.e. without intensive computational requirements. The metric computation is described in details in the appendix (\Cref{subsec:dysts_metric_computation}). The data used in this paper is taken from peer-reviewed, publicly available benchmarks.

\bibliography{iclr2026_conference}
\bibliographystyle{iclr2026_conference}

\newpage
\appendix

%
%

\appendix
\section{Proofs}
\label{app:proofs}

We dedicate this section to proving the different lemmas, propositions and theorems from the main content of the paper as well as some auxiliary lemmas and corollaries. We first establish that the model is well posed: 
Proposition~\ref{prop:strong_solution} gives a unique, non-exploding strong solution. We then treat identifiability at the population level, first for the exact-risk formulation (Theorem~\ref{thm:sde_identifiability}) and then for the penalized objective (Theorem~\ref{thm:penalized}).
The remaining three subsections provide the probabilistic machinery to go from the population result to the finite-sample one: uniform moment bounds, ergodicity of the true process, and exponential mixing of the sampled lagged state.
These feed the finite-sample recovery guarantee (Theorem~\ref{thm:finite-sample}). We provide a graph illustrating the relationships between the assumptions, lemmas, propositions, and theorems in Figure~\ref{fig:proof-map}.

\subsection{Notation}
\label{subsec:notation}

We denote the true data-generating objects using standard symbols: $G$ for the drift, $G_j$ for its coordinates, $f_j$ for the link functions, and $M$ for the binary mask. Candidates within the model class are indicated by a tilde (e.g., $\tilde{G}$), whereas empirical minimizers are denoted by a hat (e.g., $\hat{G}$). Matrix indices follow the mask variables, where $M^a_{ij}=1$ indicates that the driver coordinate $i$ at lag $\tau_a$ influences the target coordinate $j$. The lag index $a \in \{0, \dots, q\}$ specifies the lag block, with $a=0$ representing the instantaneous interactions. We let $m^a_j$ denote the $j$-th column of $M^a$, and define $S_j = \{(i,a) : M^a_{ij}=1\}$ as the active set of drivers for coordinate $j$. The delay sequence is strictly ordered as $0 < \tau_1 < \dots < \tau_q$, with the maximum delay defined as $\taumax = \tau_q$.

Because the drift evaluates the trajectory at multiple past times, the process $X$ is non-Markovian, though the process of its recent histories (segments) is Markovian. We therefore operate on the \emph{segment space} of continuous functions from $[-\taumax,0]$ to $\mathbb{R}^D$,
$$
\Cseg = C\!\left([-\taumax,0];\R^D\right), \qquad X_t^{\seg}(s) = X_{t+s},\quad s\in[-\taumax,0],
$$
equipped with the supremum norm $\norm{\cdot}_{\Cseg}$. Generic segments in this space are denoted by Greek letters such as $\varphi$, $\psi$, and $\eta$, with $X_0^{\seg}$ serving as the initial condition. The drift relies strictly on $q+1$ discrete time slices of a segment; thus, we define the \emph{lagged state} as $Z_t = (X_t, X_{t-\tau_1}, \dots, X_{t-\tau_q})$. Observations of this state are taken on a temporal grid $t_0 < \dots < t_n = T$, where we write $Z_k = Z_{t_k}$. The effective horizon over which the drift is evaluated is $T_\circ = T - \taumax$. Reference balls centered at the constant $c$ (introduced in our regularity and dissipativity assumptions) are denoted by $B_R \subset \Cseg$ in the segment space and $K_\rho \subset \R^{D(q+1)}$ in the lagged state space, and $c^{(q)} \in \R^{D(q+1)}$ denotes the concatenation of $q+1$ copies of $c$. When $q = 0$ (no delays) we use the conventions $\taumax = 0$ and $\Cseg = \R^D$, so that the segment process is $X$ itself.

We use subscripts on the probability operators to distinguish the initial state of the process. The probability measure and expectation for a solution initialized at a fixed segment $X_0^{\seg} = \varphi$ are written as $\Prob_\varphi$ and $\E_\varphi$, with the corresponding transition kernel $P_t(\varphi, \cdot)$ and semigroup $(P_t)_{t \ge 0}$. This specific notation is utilized for moment and coupling estimates that require uniform bounds over $\varphi \in B_R$. Conversely, when the process is initialized from equilibrium ($X_0^{\seg} \sim \pi^{\seg}$, independent of the driving noise $W$), we may use $\Prob_\pi$ and $\E_\pi$. Here, $\pi^{\seg}$ represents the invariant measure of the system, which forms the probabilistic foundation for our finite-sample analysis. Undecorated expectations $\E$ and probabilities $\Prob$ denote an arbitrary initial distribution consistent with the underlying data-generating process.

We compare candidate drifts in the $L^2$ space associated with the joint occupation measure $\occq$, which represents the time-averaged law of $Z_t$ over the interval $[\taumax, T]$. Identifiability is first established as an identity in $L^2(\occq)$ before being extended to all of $\R^{D(q+1)}$. Under stationary conditions, $\occq$ coincides with the stationary law $\mu_\infty^q$ of $Z_t$, rendering our bounding constants independent of the horizon $T$.

For a given candidate drift $\tilde{G}$ and mask $\tilde{M}$, the population risk is defined as $\Risk_T(\widetilde G) = T_\circ\norm{\widetilde G-G}_{L^2(\occq)}^2$. Its empirical counterpart is the Euler criterion $\eRisk_n$, leading to the penalized objective $J_\lambda(\widetilde G, \tilde M) = \Risk_T(\widetilde G) + \lambda\sum_a\norm{\tilde M^a}_0$. Because $\eRisk_n$ contains a quadratic-variation term identical across all candidates, we evaluate only the excess empirical risk $\eRisk_n(\widetilde G) - \eRisk_n(G)$ against the population risk $\Risk_T$. 
Although $\Risk_T$ involves the unknown $G$, it is determined by observable quantities up to a candidate-independent constant: for every $\widetilde G$ with finite risk, the contrast $\mathcal C_T(\widetilde G) := \E\bigl[\int_{\taumax}^T \norm{\widetilde G(Z_t)}^2\,dt - 2\int_{\taumax}^T \inner{\widetilde G(Z_t)}{dX_t}\bigr]$ satisfies $\mathcal C_T(\widetilde G) - \mathcal C_T(G) = \Risk_T(\widetilde G)$, as follows by substituting the SDDE and using that the square-integrable stochastic integral has zero mean.

Finally, successful recovery is governed by three primary structural quantities. The first is the signal strength $s^a_{ij} = \norm{G_j - \Pi_{-(i,a)}G_j}_{L^2(\occq)}$, where $\Pi_{-(i,a)}$ denotes the orthogonal projection onto the subspace of functions that do not depend on (i.e.\ do not use) coordinate $i$ in lag block $a$. The remaining two are $\smin$, the minimum signal strength across all true edges, and $\dmax$, the maximal in-degree of the system. These combine to define the margin $\gamma(\lambda) = \min\{\lambda, T_\circ\smin^2 - \lambda\dmax\}$, which acts as the minimum objective gap incurred by any incorrectly identified mask.


\begin{figure}[htp]
\centering
\definecolor{asmFill}{HTML}{FFF7ED}
\definecolor{asmLine}{HTML}{D97706}
\definecolor{lemFill}{HTML}{EFF6FF}
\definecolor{lemLine}{HTML}{2563EB}
\definecolor{dfnFill}{HTML}{F5F3FF}
\definecolor{dfnLine}{HTML}{7C3AED}
\definecolor{thmFill}{HTML}{ECFDF5}
\definecolor{thmLine}{HTML}{059669}
\definecolor{bandA}{HTML}{F8FAFC}
\definecolor{bandB}{HTML}{FAFAF9}
\definecolor{bandC}{HTML}{FAF8FF}
 
\def\nref#1{{\color{black!55}#1}}
 
\resizebox{\textwidth}{!}{%
\begin{tikzpicture}[
  box/.style    = {draw, rounded corners=4pt, line width=0.9pt, inner sep=5pt,
                   text width=2.25cm, align=center, font=\sffamily\footnotesize},
  asmn/.style   = {box, rounded corners=9pt, fill=asmFill, draw=asmLine, text width=2.0cm},
  wide/.style   = {text width=2.5cm},
  lemn/.style   = {box, fill=lemFill, draw=lemLine},
  dfnn/.style   = {box, fill=dfnFill, draw=dfnLine, densely dashed},
  thmn/.style   = {box, fill=thmFill, draw=thmLine, line width=1.4pt},
  banner/.style = {draw=thmLine, fill=thmFill, rounded corners=4pt, line width=1.4pt,
                   inner sep=8pt, minimum width=17.2cm, align=center,
                   font=\sffamily\footnotesize},
  swatch/.style = {draw, rounded corners=3pt, line width=0.9pt,
                   minimum width=7mm, minimum height=3.6mm, inner sep=0pt},
  ttl/.style    = {font=\sffamily\bfseries\small, text=black!45, align=center},
  note/.style   = {font=\sffamily\scriptsize\itshape, text=black!50},
  dep/.style    = {-{Stealth[round,length=2.2mm,width=1.7mm]}, draw=black!55,
                   line width=0.9pt, rounded corners=4pt},
  adep/.style   = {dep, draw=asmLine!85, densely dashed},
  bus/.style    = {draw=asmLine!85, densely dashed, line width=0.9pt, rounded corners=4pt},
  trunk/.style  = {draw=black!55, line width=0.9pt, rounded corners=4pt}
]
 
\begin{scope}[on background layer]
  \fill[bandA, rounded corners=12pt] (-2.30,  1.90) rectangle ( 6.30, -12.60);
  \fill[bandB, rounded corners=12pt] ( 7.20,  1.90) rectangle (10.40, -20.30);
  \fill[bandC, rounded corners=12pt] (11.30, -3.40) rectangle (14.50, -20.30);
\end{scope}
\node[ttl] at ( 2.00,  1.15) {Well-posedness \&\\population identifiability};
\node[ttl] at ( 8.80,  1.15) {Moments \&\\ergodicity};
\node[ttl] at (12.90, -3.90) {Mixing};
 
\node[asmn,wide] (A1)  at ( 0.50,  -0.70) {\textbf{Data-generating process}\\[2pt]\nref{Asm.~\ref{assu:dgp}}};
\node[asmn]      (A2)  at ( 3.30,  -0.70) {\textbf{Regularity}\\[2pt]\nref{Asm.~\ref{assu:regularity}}};
\node[lemn]      (P1)  at ( 1.90,  -3.00) {\textbf{Non-explosion \& strong solution}\\[2pt]\nref{Prop.~\ref{prop:strong_solution}}};
\node[lemn]      (LJS) at ( 1.90,  -5.30) {\textbf{Full support\\of $\occq$}\\[2pt]\nref{Lem.~\ref{lem:joint-support}}};
\node[asmn]      (A4)  at (-0.65,  -8.40) {\textbf{Faithfulness}\\[2pt]\nref{Asm.~\ref{ass:faithful}}};
\node[asmn]      (A5)  at ( 1.75,  -8.40) {\textbf{Structural sparsity}\\[2pt]\nref{Asm.~\ref{assu:sparsity}}};
\node[asmn]      (A3)  at ( 4.35,  -8.40) {\textbf{Model class \& risk min.}\\[2pt]\nref{Asm.~\ref{assu:risk_minimization}}};
\node[thmn, text width=3.4cm, anchor=north] (T1) at ( 0.80, -10.30)
  {\textbf{Identifiability}\\[2pt]\nref{Thm.~\ref{thm:sde_identifiability}}};
\node[thmn, anchor=north] (T2) at ( 4.35, -10.30)
  {\textbf{\Lzero-penalised identifiability}\\[2pt]\nref{Thm.~\ref{thm:penalized}}\\[2pt]
   };
 
\node[asmn,text width=2.25cm] (A6) at (8.80,  -0.70) {\textbf{Delay-adapted dissipativity}\\[2pt]\nref{Asm.~\ref{ass:dissip}}};
\node[lemn] (L2) at (8.80,  -3.00) {\textbf{Global\\dissipativity estimate}\\[2pt]\nref{Lem.~\ref{lem:dissip-global}}};
\node[lemn] (L3) at (8.80,  -5.30) {\textbf{One-point moment bounds}\\[2pt]\nref{Lem.~\ref{lem:moments-point}}};
\node[lemn] (L4) at (8.80,  -7.60) {\textbf{Segment moment bounds}\\[2pt]\nref{Lem.~\ref{lem:moments}}};
\node[lemn] (L5) at (8.80, -10.10) {\textbf{Uniform kernel\\overlap}\\[2pt]\nref{Lem.~\ref{lem:segment-overlap}}};
\node[thmn] (T3) at (8.80, -12.60) {\textbf{Ergodicity of the true process}\\[2pt]\nref{Thm.~\ref{thm:ergodic}}};
\node[thmn, text width=2.0cm] (T6)  at (8.95, -14.90) {\textbf{Ergodicity of the learned system}\\[2pt]\nref{Thm.~\ref{thm:transfer}}};
\node[lemn, text width=2.0cm] (PLD) at (8.95, -17.10) {\textbf{Confinement by construction}\\[2pt]\nref{Prop.~\ref{prop:learned-damping}}};
\node[lemn, text width=2.0cm] (PIC) at (8.95, -19.30) {\textbf{Stationary error bound on invariant laws}\\[2pt]\nref{Prop.~\ref{prop:learned-invariant-comparison}}};
 
\node[dfnn] (D1) at (12.90,  -5.30) {\textbf{$\beta$-mixing coefficient}\\[2pt]\nref{Def.~\ref{def:beta-mixing}}};
\node[lemn] (L6) at (12.90,  -8.60) {\textbf{$\beta$-coefficient of a stationary Markov process}\\[2pt]\nref{Lem.~\ref{lem:markov-beta}}};
\node[thmn] (T4) at (12.90, -12.60) {\textbf{Exponential $\beta$-mixing of $Z_t$}\\[2pt]\nref{Thm.~\ref{thm:exp-mixing}}};
\node[asmn,text width=2.25cm] (A7) at (12.90, -18.40) {\textbf{Finite-sample model class}\\[2pt]\nref{Asm.~\ref{ass:class}}};
 
\node[banner, anchor=north] (T5) at (6.20, -20.90)
  {\textbf{Finite-sample exact support recovery (\Lzero{} estimator)}\\[2pt]
   \nref{Thm.~\ref{thm:finite-sample}}};
 
\draw[bus]  (A1.south) -- (0.50,-1.95);
\draw[bus]  (A2.south) -- (3.30,-1.95);
\draw[bus]  (0.50,-1.95) -- (3.30,-1.95);
\draw[adep] (1.90,-1.95) -- (P1.north);
 
\draw[dep] (P1.south) -- (LJS.north);
 
\draw[dep]  (LJS.west) -- (-2.00,-5.30) |- (T1.west);
\draw[dep]  (LJS.east) -- ( 6.00,-5.30) |- (T2.east);
\draw[adep] (A4.south) -- (A4.south |- T1.north);
\draw[adep] (A5.south) -- (A5.south |- T1.north);
\draw[bus]  (A3.south) -- (4.35,-9.70);
\draw[adep] (4.35,-9.70) -- (T2.north);
\draw[adep] (4.35,-9.70) -| ($(T1.north)+(1.50,0)$);
\node[note, anchor=west] at (4.40,-10.02) {(i) only};
 
\draw[adep] (A6.south) -- (L2.north);
\draw[dep]  (L2.south) -- (L3.north);
\draw[dep]  (L3.south) -- (L4.north);
\draw[dep]  (L4.south) -- (L5.north);
\draw[dep]  (L5.south) -- (T3.north);
\draw[dep]  (P1.east)  -- (6.65,-3.00) -- (6.65,-4.15) -- (8.10,-4.15) -- (8.10,-4.15 |- L3.north);
\draw[dep]  (L4.east)  -- (10.85,-7.60) -- (10.85,-11.45) -- (9.60,-11.45) -- (9.60,-11.45 |- T3.north);
\node[note, rotate=90, anchor=south] at (10.85,-9.60) {Lyapunov};
\draw[trunk] (T3.west) -- (7.35,-12.60) -- (7.35,-17.10);
\draw[dep]   (7.35,-14.90) -- (T6.west);
\draw[dep]   (7.35,-17.10) -- (PLD.west);
\draw[dep]   (7.35,-17.10) |- (PIC.west);
 
\draw[dep] (D1.south) -- (L6.north);
\draw[dep] (L6.south) -- (T4.north);
\draw[dep] ($(T3.east)+(0,0.30)$) -- ($(T4.west)+(0,0.30)$);
 
\draw[dep]      (T2.south) -- (T2.south |- T5.north);
\draw[dep]      (L4.west)  -- ( 6.65, -7.60) -- ( 6.65,-20.90);
\draw[dep]      ($(T3.east)+(0,-0.30)$) -- (10.85,-12.90) -- (10.85,-20.90);
\draw[dep]      (T4.east)  -- (14.75,-12.60) -- (14.75,-20.90);
\draw[adep]     (A7.south) -- (12.90,-20.90);
 
\node[note, rotate=90, anchor=west] at ( 4.18,-20.55) {population margin};
\node[note, rotate=90, anchor=west] at ( 6.48,-20.55) {segment moments};
\node[note, rotate=90, anchor=west] at (10.68,-20.55) {stationary law \& moments};
\node[note, rotate=90, anchor=west] at (14.58,-20.55) {independent blocks};
 
\begin{scope}
  \draw[draw=black!15, fill=white, rounded corners=8pt] (11.15,1.75) rectangle (14.55,-0.95);
  \node[swatch, fill=asmFill, draw=asmLine, anchor=west] at (11.45, 1.30) {};
  \node[anchor=west, font=\sffamily\scriptsize, text=black!70] at (12.35, 1.30) {Assumption};
  \node[swatch, fill=dfnFill, draw=dfnLine, densely dashed, anchor=west] at (11.45, 0.70) {};
  \node[anchor=west, font=\sffamily\scriptsize, text=black!70] at (12.35, 0.70) {Definition};
  \node[swatch, fill=lemFill, draw=lemLine, anchor=west] at (11.45, 0.10) {};
  \node[anchor=west, font=\sffamily\scriptsize, text=black!70] at (12.35, 0.10) {Lemma / Prop.};
  \node[swatch, fill=thmFill, draw=thmLine, line width=1.4pt, anchor=west] at (11.45,-0.50) {};
  \node[anchor=west, font=\sffamily\scriptsize, text=black!70] at (12.35,-0.50) {Theorem};
\end{scope}

\end{tikzpicture}}
\caption{Dependency map of the theoretical results. 
Yellow boxes are assumptions, blue boxes are lemmas and propositions, and green boxes are theorems. 
Arrows denote when an assumption or a result is used in another proof. A use already implied by a chain of arrows is not drawn, unless it plays a separate role (labelled arrows).
The proofs are grouped vertically into the three groups of proofs in \Cref{app:proofs}, contributing the finite-sample guarantee at the bottom.
}
\label{fig:proof-map}
\end{figure}

\subsection{Non-explosion and existence of a strong solution}
\label{sec:proof_strongsol}

Since the process is not Markov, we cannot apply the classical existence theorems for SDEs directly. Instead, we exploit the fact that the delays are bounded away from zero and prove existence of a unique solution by induction over intervals of length $\taumin := \min_a \tau_a$ (the so-called method of steps, see e.g.\ \citealp{halelunel1993}). 
On each such interval the delayed arguments are already known from the previous interval, so that the SDDE reduces to a standard (non-delayed) It\^o equation with a random, time-dependent drift: we solve that by truncation on a closed ball and rule out explosion with a Khasminskii-type Lyapunov test.
The induction then extends the solution to all of $[0,\infty)$.
The proof of the following statement only uses the initial-condition clause of Assumption~\ref{assu:dgp} (independence of $X_0^{\seg}$ and $W$, and its finite second moment) together with Assumption~\ref{assu:regularity}.

\begin{manualproposition}{prop:strong_solution}{Non-explosion \& Strong solution}
    Under Assumptions \ref{assu:dgp} and \ref{assu:regularity}, the SDDE defined in \Eqref{eq:sde_model} has a unique strong solution for every initial condition with $\mathbb{E}\|X_0^{\mathrm{seg}} \|_\Cseg^2 < \infty$ without explosion i.e. it does not diverge to infinity in a finite amount of time. 
\end{manualproposition}
\begin{proof}
\stp{1: Initialisation on $[-\taumax,\taumin]$}
The initial segment $X_0^{\rm seg}$ provides the solution on $[-\taumax, 0]$. For $s \in [0, \taumin]$ and every lag $a \ge 1$, the delayed argument satisfies $s - \tau_a \le \taumin - \tau_a \le 0$, so $X_{s-\tau_a}$ is read from the initial segment.
Define
\[Y_s = (X_{s - \tau_1}, \dots, X_{s - \tau_q}), \quad s \in [0, \taumin]\]
This is a continuous, $\mathcal{F}_0$-measurable path with compact range $K_Y \subset \Bbb R^{Dq}$ (as a continuous function on the compact $[0, \taumin]$ composed with the continuous initial segment).
On this segment, the SDDE reduces to a standard (non-delayed) It\^o equation:
\[dX_s = b(s, X_s) \, ds + \Sigma \, dW_s, \quad X_0 = X_0^{\rm seg}(0)\]
with time-dependent drift $b(s, x) := G(x, Y_s)$.

  \substp{(a) Existence and uniqueness via truncated approximation.}
    For $n \in \Bbb N$, consider the metric projection onto the closed ball of radius $n$ centered at $c$, $\pi_n : \Bbb{R}^D \rightarrow \bar{B}_n(c)$ which is $1$-Lipschitz by convexity. Define the \textit{truncated drift}:
    \[b_n(s, x) := b(s, \pi_n(x)) = G(\pi_n(x), Y_s)\]
    Since $G$ is locally Lipschitz by Assumption~\ref{assu:regularity} and both $\pi_n$, $Y_s$ remain in a compact set ($\bar{B}_n(c) \times K_Y$), the truncated drift $b_n$ is globally Lipschitz in $x$ uniformly in $s \in [0, \taumin]$ with Lipschitz constant $L_n$ (random but $\mathcal{F}_0$-measurable).
    
    Since $L_n$ and $Y$ are $\mathcal F_0$-measurable while the Brownian motion $W$ is independent of $\mathcal F_0$ (Assumption~\ref{assu:dgp}), we may apply the Picard-Lindel\"of theorem for SDEs with globally Lipschitz coefficients \citep[Ch.~2]{mao2007} conditionally on $\mathcal F_0$, i.e.\ for each fixed realisation of the initial segment (alternatively, one may localise on the events $\{L_n \le m\} \in \mathcal F_0$, $m \in \Bbb N$). Hence the truncated equation
    \[dX_s^{(n)} = b_n(s, X_s^{(n)})\, ds + \Sigma \, dW_s, \quad X_0^{(n)} = X_0^{\rm seg}(0)\]
    admits a unique non-exploding strong solution on $[0, \taumin]$.
    
    Define the exit time from $\bar{B}_n(c)$, $\zeta_n := \inf \{s \ge 0, \|X_s^{(n)} - c \| \ge n\}$. For any $n \le m$, the pathwise uniqueness on $[0, \zeta_n]$ gives $X_s^{(n)} = X_s^{(m)}$ for all $s \le \zeta_n$ (since $b_n = b_m = b$ on the ball $\bar{B}_n(c)$). Thus, $\zeta_n$ is non-decreasing in $n$, and given the limit $\zeta = \lim_{n \rightarrow \infty} \zeta_n$, define the process $X_s := X_s^{(n)}$ for $s < \zeta_n$. This defines a unique strong solution on $[0, \zeta)$.

  \substp{(b) Non-explosion via Khasminskii's test (\citealp{khasminskii2012}; see also \citealp{mao2007}).}
    Set $\Phi(x) := 1 + \|x-c\|^2$. We work conditionally on $\mathcal F_0$, under which $Y$ and the constants below are fixed while $W$ remains a Brownian motion. By Dynkin's formula applied to $\Phi(X_{s \wedge \zeta_n})$:
    \[\mathbb{E}[\Phi(X_{s \wedge \zeta_n}) \mid \mathcal F_0] = \Phi(X_0) + \mathbb{E} \Bigl[\int_0^{s \wedge \zeta_n} \mathcal{L}_u \Phi(X_u) \, du \Bigm| \mathcal F_0\Bigr]\]
    where $\mathcal{L}_u\Phi(x) := 2\langle x-c, b(u,x)\rangle + \tr(\Sigma\Sigma^\top)$ is the (time-dependent) generator of the equation with frozen delayed arguments.

    Let $\rho_Y^2 := \sup_{s \in [0,\taumin]} \sum_{a=1}^q \|X_{s-\tau_a} - c\|^2 < \infty$, which is $\mathcal F_0$-measurable since $K_Y$ is compact, so that $\|(x_0, Y_s) - c^{(q)}\|^2 \le \|x_0 - c\|^2 + \rho_Y^2$.
    By Assumption~\ref{assu:regularity} (ii), whenever $\|(x_0,Y_s)-c^{(q)}\|\ge R$ we have $2\langle x_0-c, G(x_0,Y_s)\rangle \le 2\beta(1 + \|x_0-c\|^2 + \rho_Y^2) \le 2\beta(1+\rho_Y^2)\Phi(x_0)$. Otherwise $\|x_0-c\| < R$. For $\|x_0-c\|\le R$, set
    \[
    M_R := \sup\{\|G(x_0,y)\| : \|x_0-c\|\le R,\ y\in K_Y\} < \infty \quad\text{a.s.},
    \]
    finite by continuity of $G$ on the compact $\bar B_R(c)\times K_Y$, so that $2\langle x_0-c, G(x_0,Y_s)\rangle \le 2R M_R \le 2C_R\Phi(x_0)$ with $C_R := R M_R$, using $\Phi\ge 1$.
    Hence, for all $x\in\R^D$ and $s\in[0,\taumin]$,
    \[
    \mathcal{L}_s\Phi(x) = 2\langle x-c, G(x,Y_s)\rangle + \tr(\Sigma\Sigma^\top) \le C\,\Phi(x),
    \]
where $C := 2\max\bigl(\beta(1+\rho_Y^2), C_R\bigr) + \tr(\Sigma\Sigma^\top)$ is $\mathcal{F}_0$-measurable and a.s. finite.

    As $\int_0^{s\wedge\zeta_n}\mathcal L_u\Phi(X_u)\,du \le C\int_0^s \Phi(X_{u\wedge\zeta_n})\,du$, Gr\"onwall's inequality gives
    \[\mathbb{E}[\Phi(X_{s \wedge \zeta_n}) \mid \mathcal F_0] \le \Phi(X_0)e^{Cs}, \quad s \in [0, \taumin].\]
    Since $\Phi(X_{\zeta_n}) \ge 1 + n^2$ on $\{\zeta_n \le \taumin\}$, we have that
    \[(1+n^2)\mathbb{P}(\zeta_n \le \taumin \mid \mathcal F_0) \le \mathbb{E}[\Phi(X_{\zeta_n}) \mathbf{1}_{\zeta_n \le \taumin} \mid \mathcal F_0] \le \Phi(X_0) e^{C \taumin}.\]
    Since $\{\zeta \le \taumin\} \subseteq \{\zeta_n \le \taumin\}$, letting $n \rightarrow \infty$ yields $\mathbb{P}(\zeta \le \taumin \mid \mathcal F_0) = 0$ a.s., and integrating these conditional probabilities (which are bounded by one) gives $\mathbb{P}(\zeta \le \taumin) = 0$, i.e.\ the solution does not explode on $[0, \taumin]$. No integrability of $\Phi(X_0)e^{C\taumin}$ is needed.

\stp{2: Induction on $[k\taumin,(k+1)\taumin]$}
The delayed arguments now satisfy $s-\tau_a \le k\taumin$, hence lie on the
already-constructed path; parts (a)--(b) apply verbatim with $\mathcal F_0$ replaced
by $\mathcal F_{k\taumin}$.

Induction over $k$ yields a unique strong solution on $[0,\infty)$ with
$\sup_{s\le T}\norm{X_s}<\infty$ a.s.\ for every finite $T$.

Note: When $q = 0$ there are no delayed arguments: Step~1 applies directly on any finite interval $[0, T]$ (with $Y$ void and $\taumin$ replaced by $T$), and no induction is needed.
\end{proof}

\subsection{Population identifiability}
\label{subsec:sde_identifiability}

We prove population identifiability in two stages. First, we show that the drift and the mask are pinned down on the support of the joint occupation measure: matching the risk forces $\widetilde G = G$ there by continuity, and faithfulness upgrades this to equality of the masks.
Second, Lemma~\ref{lem:joint-support} shows that, under non-degenerate noise, the occupation measure has full support, so ``on the support'' is in fact ``everywhere'' and the faithfulness hypothesis reduces to plain faithfulness.

The proof follows the structure of the assumptions: a zero risk forces $\widetilde G = G$ on the support of $\occq = \R^{D(q+1)}$ which fills the whole domain from the non-degenerate noise (\Cref{lem:joint-support}), faithfulness prevents the learned mask from dropping a true edge, and sparsity prevents it from adding a spurious one. We compare these assumptions to those of \citet{bellot2022ngm} in \Cref{subsec:assumptions_bellot}.

\begin{manualtheorem}{thm:sde_identifiability}{Identifiability}
    Under Assumptions~\ref{assu:dgp}--\ref{assu:sparsity}, $G$ and $M$ are identifiable: 
    \begin{align}
    \widetilde G = G \quad \text{on } \supp(\occq) , \qquad \tilde M = M .
    \end{align}
    Moreover, under the non-degenerate noise of Assumption~\ref{assu:dgp} the occupation measure has full support, $\supp(\occq) = \R^{D(q+1)}$ by \Cref{lem:joint-support} (see below), so the drift equality extends to all of $\R^{D(q+1)}$.
\end{manualtheorem}
\begin{proof}
\stp{1: Drift identifiability on $\supp(\occq)$}
    The true pair $(G, M)$ is feasible by Assumption~\ref{assu:risk_minimization}(i) and has risk zero.
    Nonnegativity and optimality (Assumption~\ref{assu:risk_minimization}(ii)) therefore give $\Risk_T(\widetilde G) = 0$, hence by \Eqref{eq:risk-identity} $\widetilde G = G$ $\occq$-a.e.; $G$ is continuous by Assumption~\ref{assu:regularity} and $\widetilde G$ by Assumption~\ref{assu:risk_minimization}(i), and the complement of a $\occq$-null set meets every nonempty relatively open subset of $\supp(\occq)$, so equality on this dense subset of the (closed) support upgrades to equality on all of $\supp(\occq)$.

\stp{2: Mask identifiability on the support}
    If $M^a_{ij} = 1$ but $\tilde M^a_{ij} = 0$ --- that is, if a true edge were dropped --- then $\widetilde G_j$ is constant in coordinate $i$ of block $a$ on all of $\R^{D(q+1)}$ (its mask removes that input), so $G_j = \widetilde G_j$ on $\supp(\occq)$ takes equal values at the axis-aligned witnessing pair $z, z' \in \supp(\occq)$ of Assumption~\ref{ass:faithful}, which is a contradiction.
    Hence $\tilde M^a_{ij} \ge M^a_{ij}$ entry-wise for all $a$, i.e.\ $\sum_a \norm{\tilde M^a}_0 \ge \sum_a \norm{M^a}_0$ with equality iff $\tilde M^a = M^a$ for all $a$; Assumption~\ref{assu:sparsity} forces equality.
\end{proof}

It remains to remove the ``on the support'' qualifier. The next lemma shows that non-degenerate noise makes the joint law of the lagged state charge every open set, so its support is the whole space and the identity is global.

\begin{lemma}[Full support of the joint occupation measure]\label{lem:joint-support}
Under Assumptions~\ref{assu:dgp} and \ref{assu:regularity}, for every $t > \taumax$ the law of $\left(X_t, X_{t-\tau_1}, \dots, X_{t-\tau_q}\right)$ on $\R^{D(q+1)}$ has full support; consequently $\supp(\occq) = \R^{D(q+1)}$ for every $T > \taumax$.
\end{lemma}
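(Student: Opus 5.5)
We plan to prove full support of the law of $Z_t=(X_t,X_{t-\tau_1},\dots,X_{t-\tau_q})$ for a fixed $t>\taumax$ by a \emph{support theorem / controllability} argument built on the method of steps from the proof of Proposition~\ref{prop:strong_solution}, combined with a Girsanov change of measure. Fix a target point $z^\star=(x_0^\star,\dots,x_q^\star)$ and $\epsilon>0$; we must show $\Prob(\norm{Z_t-z^\star}<\epsilon)>0$. The times $t_a:=t-\tau_a$ satisfy $t_q<t_{q-1}<\dots<t_1<t_0=t$, with $t_q=t-\taumax>0$, so all evaluation times lie in the random (noise-driven) part $(0,t]$ of the trajectory. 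The occupation-measure statement then follows immediately: if $U$ is open and nonempty, $t\mapsto\Prob(Z_t\in U)$ is positive on $(\taumax,T]$, so $\occq(U)>0$.

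\textbf{Step 1 (reduction to a driftless reference).} We condition on $\mathcal F_0$, i.e.\ on a realisation of the initial segment $\varphi$. Under the reference measure $\mathbb Q$ where $\widetilde W_s:=W_s+\int_0^s\Sigma^{-1}G(Z_u)\,du$ is a Brownian motion, the process solves $dX_s=\Sigma\,d\widetilde W_s$ on $[0,t]$, so $X_s=\varphi(0)+\Sigma\widetilde W_s$. Under $\mathbb Q$, the increments $X_{t_q}-\varphi(0)$, $X_{t_{q-1}}-X_{t_q}$, \dots, $X_{t_0}-X_{t_1}$ are independent Gaussians with nondegenerate covariances $(t_{a-1}-t_a)\Sigma\Sigma^\top$, because $\Sigma$ is invertible (Assumption~\ref{assu:dgp}(ii)). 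The map from these increments to $Z_t$ is a linear bijection (a lower-triangular partial-sums map), so $Z_t$ has a strictly positive Gaussian density on $\R^{D(q+1)}$ under $\mathbb Q$. This is where the delays being distinct and $t>\taumax$ are used.

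\textbf{Step 2 (equivalence of measures).} We need $\Prob\sim\mathbb Q$ on $\mathcal F_t$ so that positive-probability events transfer. We localise, since $G$ is only locally Lipschitz and Novikov may fail. Set $\sigma_n:=\inf\{s:\norm{X_s-c}\ge n\}\wedge t$. The stopped drift $\Sigma^{-1}G(Z_{s\wedge\sigma_n})$ is bounded, because the lagged arguments before $\sigma_n$ lie either in the compact range of $\varphi$ or in $\bar B_n(c)$. Novikov therefore holds, and $\Prob$ and $\mathbb Q$ are mutually equivalent on $\mathcal F_{\sigma_n}$. For the event $A=\{\norm{Z_t-z^\star}<\epsilon\}$ we write $\mathbb Q(A\cap\{\sigma_n=t\})>0$ for $n$ large. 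This holds because under $\mathbb Q$ the Brownian path is a.s.\ bounded on $[0,t]$, so $\mathbb Q(\sigma_n<t)\to0$, while $\mathbb Q(A)>0$ by Step 1. On $\{\sigma_n=t\}$ the event $A$ is $\mathcal F_{\sigma_n}$-measurable, which gives $\Prob(A\cap\{\sigma_n=t\})>0$. Non-explosion from Proposition~\ref{prop:strong_solution} guarantees that the stopped $\Prob$-process coincides with the true solution up to $\sigma_n$. Integrating the positive conditional probability over $\varphi$ yields $\Prob(A)>0$.

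\textbf{Main obstacle.} The delicate point is Step 2: making the Girsanov transfer rigorous for a path-dependent, only locally Lipschitz drift, and making sure that the measurability and stopping arguments at $\{\sigma_n=t\}$ are correct. We also need to check that the unbounded drift does not destroy equivalence; localisation handles this. If the localisation becomes messy, the fallback is a Stroock--Varadhan support argument. In that argument we would construct an explicit smooth control $h$ steering the controlled skeleton $\dot x=G(\cdot)+\Sigma\dot h$ through the points $x^\star_q,\dots,x^\star_0$ at times $t_q,\dots,t_0$; this is possible since $\Sigma$ is invertible. We would then use the continuity of the solution map on tubes around $h$, via Gronwall on a compact neighbourhood, to conclude that a small Brownian tube around $h$ has positive probability.
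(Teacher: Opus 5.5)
Your proof is correct, but your primary route differs from the paper's. Your ``fallback'' is essentially what the paper does.

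\textbf{The paper's route.} It fixes a deterministic initial segment $\varphi$ and builds a pilot path $h$ that agrees with $\varphi$ on $[-\taumax,0]$, is $C^1$ on $[0,t]$, and passes through $z_a$ at each time $t-\tau_a$. It defines the control $w_h=\Sigma^{-1}\bigl[h-\varphi(0)-\int_0^{\cdot}G(h(u),h(u-\tau_1),\dots)\,du\bigr]$, which lies in the Cameron--Martin space. A localized Gr\"onwall estimate, using a Lipschitz constant of $G$ on a compact tube around $h$, shows that the tube event $\{\sup_{s\le t}\norm{W_s-w_h(s)}<\eta\}$ forces $\sup_{s\le t}\norm{X_s-h(s)}<\varepsilon$. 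The paper then concludes from positivity of Brownian tube probabilities.

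\textbf{Your route.} You remove the drift entirely and compare with the explicit Gaussian path $\varphi(0)+\Sigma\widetilde W$. Its evaluation at the distinct positive times $t-\tau_a$ is a nondegenerate Gaussian vector.

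\textbf{What each buys.} Your route gives more: conditionally on the initial segment, the law of $Z_t$ is equivalent to Lebesgue measure on $\R^{D(q+1)}$, not merely of full support. The support step itself needs only boundedness of $G$ on compacts, with no interpolating path and no Lipschitz constant. The price is the bookkeeping for a stochastic Girsanov density. The paper's argument is pathwise and constructive. It uses only a deterministic Cameron--Martin shift, which raises no integrability issues, and it yields an explicit lower bound through a Brownian small-ball probability. Both arguments rest on the additive, invertible $\Sigma$.

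\textbf{One point to tidy.} As written, Step~1 posits a single measure $\mathbb Q$ on $\mathcal F_t$. That requires $\mathcal E_t:=\mathcal E\bigl(-\int_0^{\cdot}\Sigma^{-1}G(Z_u)\,dW_u\bigr)_t$ to be a true martingale. Step~2, however, only constructs the stopped measures $\mathbb Q_n$ with density $\mathcal E_{t\wedge\sigma_n}$, so the argument is slightly circular when it invokes $\mathbb Q(\sigma_n<t)\to 0$. Either fix below closes this.
\begin{itemize}
\item Phrase everything through $\mathbb Q_n$. Under $\mathbb Q_n$, the stopped path of $X$ is exactly $\varphi(0)+\Sigma\widetilde W^{(n)}$ stopped at its own exit time from $B_n(c)$. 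Hence $\mathbb Q_n(A\cap\{\sigma_n=t\})$ is a Gaussian-path probability that increases to the Gaussian probability of $A$, which is positive. No uniqueness-in-law argument is needed, because the driftless equation is solved explicitly.
\item Alternatively, note that the same localisation gives $\E\,\mathcal E_t\ge\E\bigl[\mathcal E_{t\wedge\sigma_n}\mathbf 1_{\{\sigma_n=t\}}\bigr]=\mathbb Q_n(\sigma_n=t)\to 1$. Combined with the supermartingale bound $\E\,\mathcal E_t\le 1$, this shows the global $\mathbb Q$ exists and is equivalent to $\Prob$ on $\mathcal F_t$.
\end{itemize}
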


\begin{proof}
We first prove the statement for a deterministic initial history and then integrate.

\stp{1: Deterministic initial segment and pilot path}
    Fix a deterministic initial segment $\varphi \in \Cseg$, a time $t > \taumax$, a target $z = (z_0, z_1, \ldots, z_q) \in \R^{D(q+1)}$ and $\varepsilon > 0$, and write $\Prob_\varphi$ for the law of the solution started from $X_0^{\seg} = \varphi$.
    We must show:
    \[
    \Prob_\varphi\left(\norm{X_t-z_0}<\varepsilon, \norm{X_{t-\tau_1}-z_1}<\varepsilon, \ldots, \norm{X_{t-\tau_q}-z_q} < \varepsilon\right) > 0.
    \]
    With the convention $\tau_0 = 0$, the times $t - \tau_a$ for $a = 0, 1, \ldots, q$ are distinct and strictly positive, since $t > \taumax = \tau_q$.
    Choose a path $h : [-\taumax, t] \to \R^D$ which is continuously differentiable on $[0,t]$ and satisfies
    \[
    h|_{[-\taumax, 0]} = \varphi, \qquad h(t - \tau_a) = z_a \quad (0 \le a \le q).
    \]
    Such a path exists because we are prescribing finitely many values at distinct times of $(0, t]$, together with the value $h(0) = \varphi(0)$ at the left endpoint (e.g.\ by spline interpolation).
    Note that $h$ need not be differentiable at $0$ and no regularity of $\varphi$ beyond continuity is required.

\stp{2: The driving control}
    Since $\Sigma$ is invertible by Assumption~\ref{assu:dgp}, we may define
    \[
    w_h(s) \;:=\; \Sigma^{-1}\left[h(s) - \varphi(0) - \int_0^s G\bigl(h(u), h(u - \tau_1), \ldots, h(u - \tau_q)\bigr)\, du\right], \qquad s \in [0,t].
    \]
    The integrand is continuous on $[0,t]$, so $w_h$ is continuously differentiable with $w_h(0) = 0$. In particular, $w_h$ has square-integrable derivative and belongs to the Cameron--Martin space of the Wiener measure on $C([0,t]; \R^D)$ (see \citealp[Ch.~1]{nualart2006}).
    By construction $h$ solves the integral equation associated with \Eqref{eq:sde_model} on $[0,t]$ driven by $w_h$ in place of $W$:
    \begin{equation}
    \label{eq:pilot_integral}
    h(s) = \varphi(0) + \int_0^s G\bigl(h(u), h(u-\tau_1), \ldots, h(u-\tau_q)\bigr)\, du + \Sigma\, w_h(s).
    \end{equation}
\stp{3: Localized Gr\"onwall estimate}
    Let
    \[
        K := \Bigl\{ (y_0, \ldots, y_q) \in \R^{D(q+1)} \;:\; \exists\, u \in [0,t] \text{ with } \max_{0 \le a \le q} \norm{y_a - h(u - \tau_a)} \le 1 \Bigr\},
    \]
    a compact subset of $\R^{D(q+1)}$, and let $L < \infty$ be a Lipschitz constant for $G$ on $K$, which exists by Assumption~\ref{assu:regularity} (i).
    Both $K$ and $L$ are deterministic as functions of $h$ alone.
    Choose $\eta > 0$ such that
    \begin{equation}
    \label{eq:eta_choice}
    \norm{\Sigma}_{\mathrm{op}}\, \eta \, e^{L \sqrt{q+1}\, t} \;<\; \min(1, \varepsilon),
    \end{equation}
    and define the Brownian tube event and the exit time
    \[
    A_\eta := \Bigl\{ \sup_{s \le t} \norm{W_s - w_h(s)} < \eta \Bigr\}, \qquad \sigma := t \wedge \inf\{ s \ge 0 : \norm{X_s - h(s)} \ge 1 \}.
    \]
    Since $X$ and $h$ agree with $\varphi$ on $[-\taumax, 0]$, we have $X_u - h(u) = 0$ for every $u \in [-\taumax, 0]$, so that for $s \in [0, \sigma]$
    \[
    \norm{X_{s - \tau_a} - h(s - \tau_a)} \;\le\; D(s) := \sup_{0 \le u \le s} \norm{X_u - h(u)}, \qquad 0 \le a \le q.
    \]
    Subtracting \Eqref{eq:pilot_integral} from the integral form of \Eqref{eq:sde_model} and using that both argument vectors lie in $K$ before time $\sigma$, we obtain on $A_\eta$, for every $s \le \sigma$,
    \[
    D(s) \;\le\; \norm{\Sigma}_{\mathrm{op}}\, \sup_{u \le t} \norm{W_u - w_h(u)} + L \sqrt{q+1} \int_0^s D(u)\, du \;\le\; \norm{\Sigma}_{\mathrm{op}}\, \eta + L \sqrt{q+1} \int_0^s D(u)\, du,
    \]
    the factor $\sqrt{q+1}$ coming from bounding the Euclidean norm on $\R^{D(q+1)}$ by $\sqrt{q+1}$ times the maximum of the blockwise norms.
    Gr\"onwall's inequality and \Eqref{eq:eta_choice} then give
    \[
    D(\sigma) \;\le\; \norm{\Sigma}_{\mathrm{op}}\, \eta\, e^{L\sqrt{q+1}\, t} \;<\; \min(1, \varepsilon).
    \]
    In particular $\norm{X_\sigma - h(\sigma)} < 1$ strictly, so by continuity of $s \mapsto X_s - h(s)$ the exit time cannot occur before $t$, i.e.\ $\sigma = t$ on $A_\eta$, and the estimate $\sup_{s \le t} \norm{X_s - h(s)} < \varepsilon$ holds on all of $[0,t]$.

\stp{4: Positivity of the Brownian tube probability}
    The path $w_h$ is continuously differentiable with $w_h(0) = 0$, hence lies in the Cameron--Martin space, so the law of $W - w_h$ on $C([0,t]; \R^D)$ is equivalent to the Wiener measure.
    Since the Brownian small-ball probability $\Prob(\sup_{s \le t} \norm{W_s} < \eta)$ is strictly positive, we conclude that $\Prob(A_\eta) > 0$ \citep[Sec.~3]{stroockvaradhan1972}.
    Combining with Step 3 and $h(t - \tau_a) = z_a$,
    \[
    \Prob_\varphi\Bigl( \max_{0 \le a \le q} \norm{X_{t - \tau_a} - z_a} < \varepsilon \Bigr) \;\ge\; \Prob(A_\eta) \;>\; 0.
    \]
\stp{5: Random initial history and conclusion}
    By Proposition~\ref{prop:strong_solution}, the solution is a measurable functional of $(\varphi, W)$, so $\varphi \mapsto \Prob_\varphi(\cdot)$ is a probability kernel; using Assumption~\ref{assu:dgp}, the initial segment $X_0^{\seg}$ is independent of $W$.
    Conditioning on $X_0^{\seg} = \varphi$ and integrating the strictly positive conditional probabilities of Step 4 against $\mathrm{Law}(X_0^{\seg})$ yields
    \[
    \Prob\Bigl( \max_{0 \le a \le q} \norm{X_{t - \tau_a} - z_a} < \varepsilon \Bigr) \;>\; 0.
    \]
    Since $z \in \R^{D(q+1)}$ and $\varepsilon > 0$ were arbitrary, the law of $(X_t, X_{t - \tau_1}, \ldots, X_{t - \tau_q})$ charges every nonempty open subset of $\R^{D(q+1)}$, i.e.\ it has full support.
    Finally, for every nonempty open $O \subset \R^{D(q+1)}$ the map $t \mapsto \Prob\bigl((X_t, X_{t-\tau_1}, \ldots, X_{t-\tau_q}) \in O\bigr)$ is strictly positive on $(\taumax, T]$, so its integral over that interval is strictly positive.
    The single endpoint $t = \taumax$, at which the lagged coordinate $X_{t - \tau_q} = X_0$ is still determined by the initial history, has zero Lebesgue measure and does not affect the integral.
    Hence $\supp(\occq) = \R^{D(q+1)}$.
\end{proof}

\subsection{Identifiability with the penalized objective}
\label{subsec:penalized}

The penalized objective decouples across output coordinates, so it is enough to argue one coordinate $j$ at a time. For each $j$, we compare the true active set $S_j$ against a candidate $\tilde S_j$ in two exhaustive cases; the minimum-signal condition $\smin>0$ makes dropping a true input strictly costly, and this yields the sufficient $\lambda$-window \Eqref{eq:lambda-window}.

\begin{manualtheorem}{thm:penalized}{Identifiability with the penalized objective}
Let Assumptions~\ref{assu:dgp}, \ref{assu:regularity} and \ref{assu:risk_minimization}(i) hold. Suppose $M \neq 0$ and $\smin > 0$ i.e. the system has at least one true driver, and every true driver has a measurable impact on the dynamics. This replaces Assumption~\ref{ass:faithful} (faithfulness).
    If
    \begin{equation}\label{eq:lambda-window}
    0 \;<\; \lambda \;<\; \frac{(T - \taumax) \smin^2}{\dmax},
    \end{equation}
    then every minimizer of $J_\lambda$ over $\Hclass^q$ satisfies $\tilde M = M$ and $\widetilde G = G$ \text{on }$\R^{D(q+1)}$. Moreover, every feasible pair with $\tilde M \neq M$ satisfies
    \[J_\lambda(\tilde{G}, \tilde{M}) - J_\lambda(G, M) \ge \gamma(\lambda) := \min(\lambda, (T - \taumax) s_{\min}^2 - \lambda d_{\max}) > 0\]
\end{manualtheorem}
\begin{proof}
The true pair $(G , M)$ is feasible by Assumption~\ref{assu:risk_minimization}(i) and satisfies $\Risk_T(G) = 0$, so $J_\lambda(G , M) = \lambda \sum_{j = 1}^{D} | S_j |$, where $S_j := \{ (i , a) : M^a_{ij} = 1 \}$ is the true active set of coordinate $j$ and $| S_j | = \sum_{a = 0}^{q} \norm{m^a_j}_0 \le \dmax$.
Let $(\widetilde G , \tilde M) \in \Hclass^q$ be any feasible pair and write $\tilde S_j := \{ (i , a) : \tilde M^a_{ij} = 1 \}$.
Since the squared $L_2(\occq)$ risk and the $\ell_0$ penalty of \Eqref{eq:penalized_risk} are both sums over the output coordinates, the objective gap decomposes as
\begin{equation}
\label{eq:pen-decouple}
J_\lambda(\widetilde G , \tilde M) - J_\lambda(G , M) \;=\; \sum_{j = 1}^{D} \Delta_j , \qquad \Delta_j \;:=\; (T - \taumax) \, \norm{G_j - \widetilde G_j}_{L_2(\occq)}^2 + \lambda \bigl( | \tilde S_j | - | S_j | \bigr) .
\end{equation}
This decomposes the \emph{value} of the gap at a fixed feasible pair.
It therefore suffices to bound $\Delta_j$ from below in the two exhaustive cases and to sum.

Consider first a coordinate whose mask keeps every true input, $\tilde S_j \supseteq S_j$.
Then $| \tilde S_j | \ge | S_j |$, and since the risk term is nonnegative we obtain $\Delta_j \ge 0$; if the containment is strict then $| \tilde S_j | \ge | S_j | + 1$ and
\begin{equation}
\label{eq:pen-caseA}
\Delta_j \;\ge\; \lambda \;>\; 0 .
\end{equation}
Consider next a coordinate whose mask omits some true input $(i , a) \in S_j \setminus \tilde S_j$, that is $\tilde M^a_{ij} = 0$ while $M^a_{ij} = 1$.
By the masked parametrization of Assumption~\ref{assu:risk_minimization}(i), the candidate $\widetilde G_j = \widetilde f_j(\tilde m^0_j \odot x_0 , \ldots , \tilde m^q_j \odot x_q)$ does not depend on coordinate $i$ of block $a$, so it belongs to the closed subspace onto which $\Pi_{-(i,a)}$ projects (the $L_2(\occq)$ space of the $\sigma$-algebra generated by the remaining coordinates; note that $G_j \in L_2(\occq)$ by the finite-energy clause of Assumption~\ref{assu:dgp}, hence $\widetilde G_j \in L_2(\occq)$ by finite risk).

Since $\Pi_{-(i,a)} G_j$ is the nearest point of that subspace to $G_j$, the signal strength \Eqref{eq:signal} lower-bounds the distance to any of its elements, and in particular
\[
\norm{G_j - \widetilde G_j}_{L_2(\occq)} \;\ge\; \bignorm{\, G_j - \Pi_{-(i,a)} G_j \,}_{L_2(\occq)} \;=\; s^a_{ij} \;\ge\; \smin .
\]
Bounding the penalty term by $| \tilde S_j | \ge 0$ and $| S_j | \le \dmax$ gives
\begin{equation}
\label{eq:pen-caseB}
\Delta_j \;\ge\; (T - \taumax) \, ( s^a_{ij} )^2 - \lambda \, | S_j | \;\ge\; (T - \taumax) \, \smin^2 - \lambda \dmax ,
\end{equation}
which is strictly positive under the window \Eqref{eq:lambda-window}.

Now let $(\widetilde G , \tilde M)$ be feasible with $\tilde M \neq M$ and let $j$ be a coordinate in which the two masks differ.
That coordinate falls into exactly one of the two cases: either it omits a true input, and \Eqref{eq:pen-caseB} applies, or its mask strictly contains $S_j$, and \Eqref{eq:pen-caseA} applies.
In both cases $\Delta_j \ge \gamma(\lambda)$, while every remaining coordinate satisfies $\Delta_j \ge 0$ under \Eqref{eq:lambda-window}.
Summing in \Eqref{eq:pen-decouple} yields $J_\lambda(\widetilde G , \tilde M) - J_\lambda(G , M) \ge \gamma(\lambda) > 0$, so no pair with an incorrect mask can be a minimizer and every minimizer satisfies $\tilde M = M$.
Given $\tilde M = M$ the true pair is feasible with objective $\lambda \sum_j | S_j |$, so any minimizer has $\Risk_T(\widetilde G) = 0$, that is $\widetilde G = G$ $\occq$-almost everywhere.
$G$ is continuous by Assumption~\ref{assu:regularity} (i) and $\widetilde G$ by Assumption~\ref{assu:risk_minimization}(i), and $\supp(\occq) = \R^{D(q+1)}$ by Lemma~\ref{lem:joint-support}, so $\widetilde G = G$ everywhere, exactly as in the proof of \Cref{thm:sde_identifiability}.
\end{proof}
\subsection{Uniform moment bounds}
\label{subsec:moments}

In order to prove some ergodicity results in \Cref{subsec:ergodicity}, we will need some estimates on the moments of the process both point-wise and in the segment space. This section provides: first a bound on the one-point moments $\E\norm{X_t-c}^{2p}$, obtained from It\^o's formula and a Halanay comparison to handle the delayed feedback, second a lifting of these to the segment norm via the Burkholder--Davis--Gundy inequality \citep{karatzas1991brownian}.

We first give a small lemma combining the radial bound and the residual bound from the Assumption~\ref{ass:dissip} to get a bound on the whole space.

\begin{lemma}[Global dissipativity estimate]\label{lem:dissip-global}
    Under Assumption~\ref{ass:dissip}, set $b_a := \beta_a + \varepsilon_a$ for $a = 1, \dots, q$, so that $B = \sum_{a=1}^q b_a < \beta_0$ by \Eqref{eq:strict-domination}. Combining the radial bound outside $\bar B_R(c)$ with the residual bound inside it, we obtain, for all $(x_0 , x_1 , \dots , x_q) \in \R^{D(q+1)}$,
    \begin{equation}
    \label{eq:dissip-explicit}
    \langle x_0 - c , \, G(x_0 , x_1 , \dots , x_q) \rangle \;\le\; A -\beta_0 \norm{x_0 - c}^2 + \sum_{a=1}^{q} b_a \norm{x_a - c}^2,
    \end{equation}
    where $A := \max\bigl\{ \alpha , \; C_R + \beta_0 R^2 \bigr\}$ and $C_R$ is the constant defined in \Eqref{eq:ball-control}.
\end{lemma}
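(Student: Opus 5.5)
The plan is a two-case split on the position of the current state $x_0$ relative to the ball $\bar B_R(c)$. In each case we compare the relevant clause of Assumption~\ref{ass:dissip} with the right-hand side of \Eqref{eq:dissip-explicit}. Since $A$ is defined as a maximum of two quantities, each case only needs to dominate its own clause by one of them. The statement is deterministic and pointwise in $z=(x_0,\dots,x_q)$, so no probabilistic input is needed.

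\emph{Case $\norm{x_0-c}\ge R$.} Here \Eqref{eq:delay-dissip} gives
\[
\inner{x_0-c}{G(z)} \le \alpha - \beta_0\norm{x_0-c}^2 + \sum_{a=1}^q \beta_a\norm{x_a-c}^2 .
\]
We then use $\alpha\le A$ and $\beta_a\le\beta_a+\varepsilon_a=b_a$, which holds since $\varepsilon_a\ge 0$. Because $\norm{x_a-c}^2\ge 0$, the delayed terms are bounded above by $\sum_a b_a\norm{x_a-c}^2$, and \Eqref{eq:dissip-explicit} follows.

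\emph{Case $\norm{x_0-c}\le R$.} Here \Eqref{eq:ball-control} gives
\[
\inner{x_0-c}{G(z)} \le C_R + \sum_{a=1}^q \varepsilon_a\norm{x_a-c}^2 .
\]
The right-hand side contains no restoring term, so we insert one artificially. We write $C_R = (C_R+\beta_0R^2) - \beta_0 R^2$ and use $\beta_0R^2 \ge \beta_0\norm{x_0-c}^2$ inside the ball, which gives $C_R \le (C_R+\beta_0R^2) - \beta_0\norm{x_0-c}^2$. Then $C_R+\beta_0R^2\le A$ and $\varepsilon_a\le b_a$ (using $\beta_a\ge 0$) yield \Eqref{eq:dissip-explicit}.

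The boundary $\norm{x_0-c}=R$ is covered by both cases, which is harmless. The only step requiring care is this second case: the $-\beta_0\norm{x_0-c}^2$ term must be manufactured by paying $\beta_0R^2$ in the constant, which is exactly why $A$ contains $C_R+\beta_0R^2$. Finally, we note that $B=\sum_a b_a<\beta_0$ is simply a restatement of \Eqref{eq:strict-domination} under the new notation.
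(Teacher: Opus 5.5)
Your proposal is correct and follows essentially the same route as the paper. Both split on whether $\norm{x_0-c}\ge R$: outside the ball they use $\alpha\le A$ and $\beta_a\le b_a$, and inside it they trade $\beta_0\norm{x_0-c}^2\le\beta_0R^2$ into the constant together with $\varepsilon_a\le b_a$.
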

\begin{proof}
    Outside the ball, \Eqref{eq:dissip-explicit} follows from the radial bound \Eqref{eq:delay-dissip}, since $A \ge \alpha$ and $b_a \ge \beta_a$.
    Inside it, $\beta_0 \norm{x_0 - c}^2 \le \beta_0 R^2$, so adding and subtracting this term in the residual bound \Eqref{eq:ball-control} and using $b_a \ge \varepsilon_a$ gives \Eqref{eq:dissip-explicit} with the constant $C_R + \beta_0 R^2 \le A$.
\end{proof}

\begin{lemma}[One-point moment bounds]\label{lem:moments-point}
    Suppose that Assumptions~\ref{assu:dgp}, \ref{assu:regularity} and \ref{ass:dissip} hold.
    For every integer $p\ge 1$ with $\E\norm{X_0^{\seg}-c}_{\mathcal{C}}^{2p}<\infty$ there are constants $\gamma_p > 0$ and $K_p^\star < \infty$, depending only on $\Sigma$, $p$, the delays and the constants from the assumption bounds, such that
    \begin{equation}
        \mathbb{E} \norm{X_t - c}^{2p} \le \mathbb{E}\norm{X_0^\seg - c}_\Cseg^{2p} \, e^{-\gamma_p t} + K_p^\star
    \end{equation}
\end{lemma}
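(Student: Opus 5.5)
Apply It\^o's formula to $V_p(x):=\norm{x-c}^{2p}$ along the solution of Proposition~\ref{prop:strong_solution}, localized at the exit times $\zeta_n$ so that the stochastic integral is a true martingale. Write $r_t := \norm{X_t-c}^2$. We have $\nabla V_p = 2p\,r^{p-1}(x-c)$ and the Hessian term is bounded by $p\,r^{p-1}\tr(\Sigma\Sigma^\top) + 2p(p-1) r^{p-1}\norm{\Sigma}_{\mathrm{op}}^2$. Hence
\[
d\,r_t^{p} \le 2p\,r_t^{p-1}\Bigl(\inner{X_t-c}{G(Z_t)} + c_{\Sigma,p}\Bigr)dt + dM_t,
\]
and Lemma~\ref{lem:dissip-global} bounds the drift term by $2p\,r_t^{p-1}\bigl(A+c_{\Sigma,p} - \beta_0 r_t + \sum_a b_a r_{t-\tau_a}\bigr)$. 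The cross terms $r_t^{p-1} r_{t-\tau_a}$ are decoupled by Young's inequality with exponents $p/(p-1)$ and $p$: $r_t^{p-1}r_{t-\tau_a} \le \tfrac{p-1}{p} r_t^{p} + \tfrac1p r_{t-\tau_a}^{p}$. The constant term is handled the same way: $r^{p-1}(A+c_{\Sigma,p}) \le \eta r^p + C_\eta$ for a small $\eta$. The result is the differential inequality
\[
\tfrac{d}{dt} u(t) \le -a_0\, u(t) + \sum_{a=1}^q a_a\, u(t-\tau_a) + C, \qquad u(t):=\E r_{t}^{p},
\]
with $a_0 = 2p\beta_0 - 2(p-1)B - 2p\eta$ and $a_a = 2b_a$. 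Summing the delayed coefficients gives $\sum_a a_a = 2B$, so $a_0 - \sum_a a_a = 2p(\beta_0 - B) - 2p\eta > 0$ once $\eta$ is small. This is exactly where the strict domination \Eqref{eq:strict-domination} is used.

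The localized version is first obtained with $u_n(t)=\E r_{t\wedge\zeta_n}^p$. One then passes $n\to\infty$ by Fatou, since Proposition~\ref{prop:strong_solution} ensures $\zeta_n\to\infty$. This step is a little delicate because the delayed terms are not stopped. I would handle it by working in the integral form, bounding $\E\int_0^{t\wedge\zeta_n} r_{s-\tau_a}^p\,ds \le \int_0^t \E r_{s-\tau_a}^p\,ds$, and first establishing finiteness of $\sup_{s\le t}\E r_s^p$ via a crude Gr\"onwall argument with the method of steps. The initial history enters only through $\sup_{s\in[-\taumax,0]} u(s) \le \E\norm{X_0^\seg-c}_\Cseg^{2p}$.

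The main step is a Halanay-type comparison for this delayed inequality. Choose $\gamma_p\in(0,a_0)$ solving $a_0 - \gamma_p = \sum_a a_a e^{\gamma_p\tau_a}$; such a root exists by continuity and monotonicity, because at $\gamma=0$ the left side exceeds the right. Set $K_p^\star := C/(a_0-\sum_a a_a)$ and $w(t) := \E\norm{X_0^\seg-c}_\Cseg^{2p} e^{-\gamma_p t} + K_p^\star$. Then $w$ satisfies the reverse inequality with equality structure, and it dominates $u$ on $[-\taumax,0]$, where $e^{-\gamma_p t}\ge1$. A standard first-crossing argument then shows $u\le w$ for all $t\ge0$. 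Fix $\delta>0$ and let $t^*$ be the first time $u = w+\delta$. At $t^*$ we have $u'\ge w'$, but the inequality combined with $u(t^*-\tau_a)\le w(t^*-\tau_a)+\delta$ yields $u'(t^*) < w'(t^*)$, a contradiction. Letting $\delta\to0$ completes the comparison. Since $u$ is only absolutely continuous, I would run this argument on the integral form, or with upper Dini derivatives. I expect this comparison step, together with the justification of the unstopped delayed terms in the localization, to be the main technical obstacle. All the constants depend only on $p$, $\Sigma$, the delays and the constants of Assumption~\ref{ass:dissip}, as claimed.
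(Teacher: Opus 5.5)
Your algebra coincides with the paper's. The generator bound, the Young splitting of $r_t^{p-1}r_{t-\tau_a}$, the scaled absorption of the lower-order term (your $2p\eta$ is the paper's $\eta_p$), the margin $a_0-2B>0$, the characteristic root and the comparison function $w$ are all the same. The gap is in the step you flag: your proposed handling does not close it. Letting $n\to\infty$ by Fatou in the identity stopped at $\zeta_n$ from time $0$ gives only
\[
u(t)\le u(0)+\int_0^t\Bigl(-a_0u(v)+\sum_{a}a_a u(v-\tau_a)+C\Bigr)\,dv ,
\]
an inequality anchored at $t=0$. To get it on an arbitrary window $[s,t]$ from $u_n$ you would need $\E r^p_{s\wedge\zeta_n}\to u(s)$, and Fatou points the wrong way for that term. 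This would require uniform integrability, which $\sup_{s\le t}\E r_s^p<\infty$ does not provide. The anchored form cannot drive your first-crossing argument. Because of the $-a_0u$ term, a very negative $u-w$ at early times makes $-a_0(u-w)$ large and positive, so nothing is contradicted at $t^*$. The argument also presupposes that $u$ is continuous (so that a $t^*$ with $u(t^*)=w(t^*)+\delta$ exists) and a.e.\ differentiable. Your a priori bound gives neither.

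The paper closes this with a stronger a priori estimate. For the martingale part $N$ (your $M$) one has $d[N]_s\le 4p^2\norm{\Sigma}_{\rm op}^2\,r_s^{2p-1}\,ds$ and $r^{2p-1}=r^p\cdot r^{p-1}$. The Burkholder--Davis--Gundy inequality with $\sqrt{xy}\le\tfrac14 x+y$ then absorbs half of $\E\sup_{u\le t}r^p_{u\wedge\zeta_n}$, and Gr\"onwall plus Fatou give $\E\sup_{u\le H}\norm{X_u-c}^{2p}<\infty$. Hence $N$ is a true martingale, It\^o's formula can be taken in expectation without stopping, $u$ is locally absolutely continuous, and the delay inequality holds a.e. The comparison is then done by the method of steps rather than first crossing. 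One has $u-w\le 0$ on $[-\taumax,0]$, and on each $[k\taumin,(k+1)\taumin]$ the delayed values of $u-w$ are nonpositive, so $(e^{a_0t}(u-w))'\le 0$; only $b_a\ge 0$ is used. A cheaper repair within your own localization scheme is to apply It\^o to $e^{a_0t}\norm{X_t-c}^{2p}$, which cancels the negative term. Stopping, Fatou and monotone convergence then give $e^{a_0t}u(t)\le u(0)+\int_0^t e^{a_0v}\bigl(\sum_a a_a u(v-\tau_a)+C\bigr)\,dv$, whose integrand is nonnegative. Subtracting the corresponding identity for $w$ and inducting over intervals of length $\taumin$ yields $u\le w$, with no continuity or uniform integrability required.
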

\begin{proof}
Write $Q := \Sigma\Sigma^\top$, $\Phi_1(x) := \norm{x-c}^2$ and $\Phi_p := \Phi_1^{\,p}$, so that $\Phi_p(X_t) = \norm{X_t - c}^{2p}$, and recall $b_a = \beta_a + \varepsilon_a$ and $B = \sum_{a=1}^{q} b_a$ from \Cref{lem:dissip-global}, and set $M_0 := \E\norm{X_0^{\seg}-c}_\Cseg^{2p} < \infty$.

\stp{1: Generator bound for $\Phi_p$}
    Since $\nabla \Phi_p = 2p \, \Phi_1^{p-1} (x - c)$ and $D^2 \Phi_p = 2p \, \Phi_1^{p-1} I + 4p(p-1) \, \Phi_1^{p-2} (x-c)(x-c)^\top$, It\^o's formula applied to \Eqref{eq:sde_model} gives
    \[
    \mathcal{L}\Phi_p = 2p \, \Phi_1^{p-1}(x_0) \, \langle x_0 - c , G \rangle + p \, \Phi_1^{p-1}(x_0) \tr(Q) + 2p(p-1) \, \Phi_1^{p-2}(x_0) \bignorm{\Sigma^\top (x_0 - c)}^2 .
    \]
    Note that the last term vanishes when $p = 1$.
    Bounding $\norm{\Sigma^\top(x_0-c)}^2 \le \norm{\Sigma}_{\rm op}^2 \, \Phi_1(x_0)$ and inserting \Eqref{eq:dissip-explicit} yields,
    \begin{equation}\label{eq:gen-raw}
    \mathcal{L}\Phi_p \;\le\; -2p\beta_0 \, \Phi_p(x_0) + 2p \sum_{a=1}^{q} b_a \, \Phi_1^{p-1}(x_0) \Phi_1(x_a) + k_p \, \Phi_1^{p-1}(x_0),
    \end{equation}
    with $k_p := 2pA + p \tr(Q) + 2p(p-1) \norm{\Sigma}_{\rm op}^2$.
    The delayed terms $\Phi_1(x_a)$ are the reason a plain Gr\"onwall argument does not apply and a delay comparison is needed using Halanay's inequality \citep{halanay1966}.

\stp{2: Scaled Young inequality and the dissipativity margin}
    For $p > 1$, Young's inequality with conjugate exponents $p/(p-1)$ and $p$ gives
    \[
    \Phi_1^{p-1}(x_0) \, \Phi_1(x_a) \;\le\; \tfrac{p-1}{p} \, \Phi_p(x_0) + \tfrac1p \, \Phi_p(x_a) ,
    \]
    so the cross terms in \Eqref{eq:gen-raw} are bounded by $2(p-1) B \, \Phi_p(x_0) + 2 \sum_a b_a \, \Phi_p(x_a)$.
    
    The lower-order term $k_p \, \Phi_1^{p-1}(x_0)$ is treated separately using a scale $\eta_p$. Fix any
    \[
    \eta_p \in \bigl( 0 , \, 2p(\beta_0 - B) \bigr) , \qquad
    C_p := \sup_{y \ge 0} \bigl( k_p \, y^{p-1} - \eta_p \, y^{p} \bigr) = \frac{k_p^{\,p} \, (p-1)^{p-1}}{p^{\,p} \, \eta_p^{\,p-1}} < \infty ,
    \]
    so that $k_p \, \Phi_1^{p-1}(x_0) \le \eta_p \, \Phi_p(x_0) + C_p$; for $p = 1$ this term is already the constant $k_1$ and we set $\eta_1 := 0$, $C_1 := k_1$.
    
    Substituting both bounds into \Eqref{eq:gen-raw} yields the delay-dissipative estimate
    \begin{equation}
    \label{eq:gen-final}
    \mathcal{L}\Phi_p \;\le\; -a_p \, \Phi_p(x_0) + 2 \sum_{a=1}^{q} b_a \, \Phi_p(x_a) + C_p ,
    \qquad a_p := 2p\beta_0 - 2(p-1)B - \eta_p .
    \end{equation}
    The role of the free parameter is that the margin required by the comparison step is now automatic:
    \[
    a_p - 2B \;=\; 2p(\beta_0 - B) - \eta_p \;>\; 0 \qquad \text{for every } p \ge 1 ,
    \]
    by the choice of $\eta_p$.
    Applying an unscaled Young inequality to the lower-order term would instead add $k_p$ to the coefficient of $\Phi_p(x_0)$ and can make $a_p$ negative, which is what must be avoided when using Halanay inequality.

\stp{3: A priori integrability}
    Before applying the inequality, we must show that $m_p(t) := \E \Phi_p(X_t)$ is finite and locally absolutely continuous.
    Let $\zeta_n := \inf\{ s \ge 0 : \norm{X_s - c} \ge n \}$, which increases to $+\infty$ a.s.\ by Proposition~\ref{prop:strong_solution}, and set
    \[
    S_n(t) := \max\Bigl\{ \norm{X_0^{\seg} - c}_\Cseg^{2p} , \; \sup_{u \le t} \Phi_p(X_{u \wedge \zeta_n}) \Bigr\} ,
    \]
    so that $\E S_n(t) \le M_0 + n^{2p} < \infty$ and $S_n(s)$ dominates every delayed term $\Phi_p(X_{s - \tau_a})$ for $s \le t$ --- those with $s < \tau_a$ being read off the initial segment.
    
    It\^o's formula for $\Phi_p(X_{u \wedge \zeta_n})$ together with \Eqref{eq:gen-final}, after discarding the nonpositive term $-a_p \Phi_p$, gives the following estimate
    \[
    \E S_n(t) \;\le\; 2 M_0 + C_p \, t + 2B \int_0^t \E S_n(s) \, ds + \E \sup_{u \le t} \bigl| N_{u \wedge \zeta_n} \bigr| .
    \]
    where we wrote $N_t := \int_0^{t} 2p \, \Phi_1^{p-1}(X_s) (X_s - c)^\top \Sigma \, dW_s$ for the martingale part.
    
    Since $d[N]_s \le 4p^2 \norm{\Sigma}_{\rm op}^2 \, \Phi_1^{2p-1}(X_s) \, ds$, the Burkholder--Davis--Gundy inequality, the factorisation $\Phi_1^{2p-1} = \Phi_p \cdot \Phi_1^{p-1}$ and Young's inequality $\sqrt{uv} \le \tfrac14 u + v$ bound the last term by $\tfrac12 \E S_n(t) + C \int_0^t ( 1 + \E S_n(s) ) \, ds$, with $C$ depending only on $p$ and $\norm{\Sigma}_{\rm op}$.
    Absorbing $\tfrac12 \E S_n(t)$ on the left and applying Gr\"onwall's lemma gives a bound uniform in $n$ on every finite horizon; letting $n \to \infty$ and using Fatou's lemma,
    \[
    \E \sup_{u \le H} \norm{X_u - c}^{2p} < \infty \qquad \text{for every finite } H .
    \]
    Consequently, by the same Burkholder--Davis--Gundy bound, $\E \sup_{u \le H} |N_u| < \infty$, so $N$ is a true martingale on finite horizons and expectations may be taken in It\^o's formula without stopping; the positive part of $\mathcal{L}\Phi_p$ is integrable by \Eqref{eq:gen-final} and It\^o's identity then controls its negative part, so $m_p$ is locally absolutely continuous and, for almost every $t > 0$,
    \begin{equation}
    \label{eq:delay-ineq}
    m_p'(t) \;\le\; -a_p \, m_p(t) + 2 \sum_{a=1}^{q} b_a \, m_p(t - \tau_a) + C_p .
    \end{equation}
\stp{4: Halanay comparison}
    Consider the characteristic equation
    \begin{equation}\label{eq:gamma-root}
    \gamma_p + 2 \sum_{a=1}^{q} b_a \, e^{\gamma_p \tau_a} \;=\; a_p .
    \end{equation}
    Its left-hand side is strictly increasing in $\gamma_p$, equals $2B < a_p$ at $\gamma_p = 0$ by Step~2, and tends to infinity, so \Eqref{eq:gamma-root} has a unique positive root $\gamma_p \in (0 , a_p)$; when all $b_a$ vanish it reduces to $\gamma_p = a_p$.
    Put $K_p^\star := C_p / (a_p - 2B)$ and
    \[
    w(t) := M_0 \, e^{-\gamma_p t} + K_p^\star , \qquad t \ge -\taumax .
    \]
    A direct computation using \Eqref{eq:gamma-root} shows that $w$ satisfies \Eqref{eq:delay-ineq} with equality, and $w(t) \ge M_0 \ge m_p(t)$ on $[-\taumax , 0]$ since $e^{-\gamma_p t} \ge 1$ there.
    Set $u := m_p - w$, so that $u \le 0$ on $[-\taumax , 0]$ and $u'(t) \le -a_p u(t) + 2 \sum_a b_a \, u(t - \tau_a)$ almost everywhere.
    On $[0 , \taumin]$ all delayed values $u(t - \tau_a)$ are nonpositive, hence $(e^{a_p t} u)' \le 0$ and $u(t) \le u(0) e^{-a_p t} \le 0$; since $\taumin$ is the smallest delay, induction over the intervals $[k \taumin , (k+1) \taumin]$ propagates $u \le 0$ to all of $[0,\infty)$ (for $q = 0$ ordinary Gr\"onwall suffices).
    Therefore $m_p(t) \le w(t)$ for every $t \ge 0$, that is
    \[
    \E \norm{X_t - c}^{2p} \;\le\; \E\norm{X_0^{\seg} - c}_\Cseg^{2p} \, e^{-\gamma_p t} + K_p^\star , \qquad t \ge 0 ,
    \]
    which is the announced bound, with $\gamma_p$ the positive root of \Eqref{eq:gamma-root} and $K_p^\star = C_p / (a_p - 2B)$.
\end{proof}

With the one-point bound in hand, we lift to the segment norm and record the stationary and finite-horizon consequences (parts (b) and (c) of the lemma).

\begin{lemma}[Segment moment bounds]\label{lem:moments}
    Under the assumptions of \Cref{lem:moments-point} and for the same range of $p$, write $U_p(\varphi):=\norm{\varphi-c}_\Cseg^{2p}$ for $\varphi\in\Cseg$ and let $(P_t)_{t\ge0}$ denote the semigroup of the segment process on $\Cseg$.
    \begin{enumerate}
        \item[(a)] There are constants $A_p^{\seg},B_p^{\seg}<\infty$ and $\gamma_p>0$ such that
        \begin{equation}\label{eq:moment-segment}
        P_tU_p(\phi)=\E_\phi\norm{X_t^{\seg}-c}_{\Cseg}^{2p}\;\le\;A_p^{\seg}\,e^{-\gamma_p t}\,U_p(\phi)+B_p^{\seg},\qquad t\ge0,\ \phi\in\Cseg .
        \end{equation}
        In particular $M_{2p}(\phi):=\sup_{t\ge0}\E_\phi\norm{X_t^{\seg}-c}_{\Cseg}^{2p}\le A_p^{\seg}\,U_p(\phi)+B_p^{\seg}<\infty$.
        \item[(b)] If the segment process admits an invariant probability measure $\pi^{\seg}$ on $\Cseg$, then
        \begin{equation}\label{eq:moment-stationary}
        \int_{\Cseg}\norm{\eta-c}_{\Cseg}^{2p}\,\pi^{\seg}(d\eta)<\infty .
        \end{equation}
        \item[(c)] For every finite horizon $H$ there is $C_{p,H}<\infty$ with
        \begin{equation}\label{eq:moment-horizon}
        \E_\phi\sup_{t\le H}\norm{X_t^{\seg}-c}_{\Cseg}^{2p}\;\le\;C_{p,H}\bigl(1+U_p(\phi)\bigr),\qquad \phi\in\Cseg,
        \end{equation}
        which is uniform over any ball $B_R:=\{\phi\in\Cseg:\norm{\phi-c}_{\Cseg}\le R\}$.
    \end{enumerate}
\end{lemma}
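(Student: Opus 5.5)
The plan is to establish (c) first, since it is the basic lifting tool; (a) follows from it combined with the one-point bound of \Cref{lem:moments-point} via the Markov property; and (b) follows from (a) by a truncation argument.

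\textbf{Part (c).} Part (c) is essentially already contained in Step~3 of the proof of \Cref{lem:moments-point}. That step showed $\E\sup_{u\le H}\norm{X_u-c}^{2p}<\infty$, with a Gr\"onwall constant depending only on $p$, $H$, $\norm{\Sigma}_{\rm op}$ and the dissipativity constants, and linear in $M_0=U_p(\phi)$ when started from a deterministic $\phi$. I would redo that argument with explicit constants, obtaining $\E_\phi\sup_{u\le H}\norm{X_u-c}^{2p}\le C(1+U_p(\phi))$. Since
\[
\sup_{t\le H}\norm{X^{\seg}_t-c}_{\Cseg}\le \max\Bigl(\norm{\phi-c}_{\Cseg},\ \sup_{u\le H}\norm{X_u-c}\Bigr),
\]
part (c) follows, with a constant uniform over any ball $B_R$.

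\textbf{Part (a).} For $t\ge\taumax$, write $\norm{X^{\seg}_t-c}_{\Cseg}=\sup_{s\in[t-\taumax,t]}\norm{X_s-c}$. Apply the Markov property at time $r:=t-\taumax$ and use (c) with horizon $H=\taumax$ and initial segment $X^{\seg}_r$. This gives
\[
P_tU_p(\phi)\le C_{p,\taumax}\bigl(1+\E_\phi U_p(X^{\seg}_r)\bigr).
\]
The difficulty is that this feeds a segment moment back in rather than a one-point moment. To avoid that circularity, I would instead apply It\^o's formula plus BDG directly on $[r,t]$, started from $X_r$ but with the delayed arguments read from the path on $[r-\taumax,t]$. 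The delayed terms then contribute $\int_r^t\E\Phi_p(X_{s-\tau_a})\,ds$, which by \Cref{lem:moments-point} is at most $C(U_p(\phi)e^{-\gamma_p(t-2\taumax)}+K_p^\star)$. The drift part uses the generator bound \Eqref{eq:gen-final} with $-a_p\Phi_p$ discarded. The martingale part is controlled by BDG and absorbed exactly as in Step~3. After Gr\"onwall on the fixed-length window $[r,t]$, this yields
\[
\E_\phi\sup_{s\in[r,t]}\norm{X_s-c}^{2p}\le C\bigl(\E\Phi_p(X_r)+\textstyle\sup_{s\in[r-\taumax,t]}\E\Phi_p(X_s)+1\bigr)\le A\,e^{-\gamma_p t}U_p(\phi)+B.
\]
Here the factor $e^{2\gamma_p\taumax}$ is absorbed into $A$. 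For $t<\taumax$, part (c) with $H=\taumax$ gives the bound directly, after enlarging $A_p^{\seg}$ by $e^{\gamma_p\taumax}$. The bound on $M_{2p}(\phi)$ is then immediate.

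\textbf{Part (b).} Let $\pi^{\seg}$ be invariant. For $N>0$, invariance gives
\[
\int (U_p\wedge N)\,d\pi^{\seg}=\int P_t(U_p\wedge N)\,d\pi^{\seg}\le\int \min\bigl(N,\ A_p^{\seg}e^{-\gamma_pt}U_p+B_p^{\seg}\bigr)\,d\pi^{\seg}.
\]
Letting $t\to\infty$ by dominated convergence (the integrand is bounded by $N$) gives $\int(U_p\wedge N)\,d\pi^{\seg}\le B_p^{\seg}$. Monotone convergence as $N\to\infty$ then gives \Eqref{eq:moment-stationary}.

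\textbf{Main obstacle.} The main obstacle is the window estimate in (a): the supremum over a window must be controlled by one-point moments only, without circularity. The key point is that BDG and Gr\"onwall are applied on a window of fixed length $\taumax$, so the constants do not depend on $t$.
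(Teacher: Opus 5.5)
Your proof is correct. Part (a) is essentially the paper's argument. The paper also applies It\^o's formula and BDG on the window $[t-\taumax,t]$, discards $-a_p\Phi_p$, and bounds everything by one-point moments through the envelope $m_p(s)\le U_p(\phi)e^{-\gamma_p s}+K_p^\star$ for all $s\ge-\taumax$. For negative $s$ this envelope is just the trivial bound by $U_p(\phi)$, which you use implicitly when some $s-\tau_a<0$.

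The paper needs no Gr\"onwall step on the window. It bounds the quadratic-variation term by $\int_v^w\E\Phi_1^{p-1}(X_s)\,ds\le\int_v^w(1+m_p(s))\,ds$, which is already a one-point quantity. Your Gr\"onwall step is therefore harmless but unnecessary.

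The genuine differences are in (c) and (b).

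\emph{Part (c).} The paper proves it last: it covers $[0,H]$ by $\lceil H/\taumax\rceil$ windows and sums the window estimates. Because this exploits the dissipative one-point bound, the resulting constant $C_{p,H}$ grows only linearly in $H$. You instead read (c) off the global Gr\"onwall estimate in Step~3 of \Cref{lem:moments-point}. This gives constants exponential in $H$, but it is shorter. It also makes explicit the a priori finiteness that the paper itself borrows from that same step before absorbing the half-supremum.

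\emph{Part (b).} The paper fixes $h$ with $\kappa=A_p^{\seg}e^{-\gamma_p h}<1$ and uses $P_h(U_p\wedge N)\le(\kappa U_p+B_p^{\seg})\wedge N$. Fatou's lemma then gives $\int U_p\,d\pi^{\seg}\le B_p^{\seg}/(1-\kappa)$. You let $t\to\infty$ inside the truncated integral by dominated convergence. This is simpler and gives the sharper bound $\int U_p\,d\pi^{\seg}\le B_p^{\seg}$. The paper's version has the advantage that it needs only a single-step drift inequality $P_hU_p\le\kappa U_p+\varrho$, not decay for all $t$.
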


\begin{proof}
    Recall $\Phi_1(x)=\norm{x-c}^2$, $\Phi_p=\Phi_1^{\,p}$ and $\sigma^2:=\norm{\Sigma}_{\rm op}^2$, and note that $U_p(X_t^{\seg})=\sup_{u\in[t-\taumax,\,t]}\Phi_p(X_u)$, so that (a) is exactly an upgrade of the one-point control of \Cref{lem:moments-point} to a supremum over a window of length $\taumax$.
    We keep the constants $b_i$, $B=\sum_{i=1}^qb_i$, $C_p$, $\gamma_p$ and $K_p^\star$ of that lemma, together with its generator bound \Eqref{eq:gen-final}.
    All suprema below are first taken up to the localising time $\theta_N:=\inf\{s:\norm{X_s-c}>N\}$; the resulting constants do not depend on $N$, and $\theta_N\uparrow\infty$ a.s.\ by \Cref{prop:strong_solution}, so monotone convergence removes the localisation at the end of each step.
    The finite-horizon integrability established in the proof of \Cref{lem:moments-point} guarantees that every expectation written below is finite before the limit is taken.
    When $q=0$ we use the conventions $\taumax=0$ and $\Cseg=\R^D$: then $U_p(X_t^{\seg})=\Phi_p(X_t)$, part (a) is \Cref{lem:moments-point} itself, and in Step~5 the windows of length $\taumax$ are replaced by windows of unit length, to which the window estimate \Eqref{eq:seg-window} of Step~2 applies verbatim.

\stp{1: A one-point envelope valid on negative times}
    Set $m_p(s):=\E_\phi\Phi_p(X_s)$ for $s\ge-\taumax$.
    For $s\in[-\taumax,0]$ the path is the deterministic initial segment, so $m_p(s)\le U_p(\phi)$.
    Since moreover $m_p(s)\le U_p(\phi)e^{-\gamma_ps}+K_p^\star$ for $s\ge0$ by \Cref{lem:moments-point}, the inequality
    \begin{equation}\label{eq:envelope}
    m_p(s)\;\le\;U_p(\phi)\,e^{-\gamma_ps}+K_p^\star
    \end{equation}
    holds for every $s\ge-\taumax$.

\stp{2: Window estimate}
    Fix $0\le v<w$ with $w-v\le\taumax$.
    In integrated form, for $u\in[v,w]$,
    \[
    \Phi_p(X_u)=\Phi_p(X_v)+\int_v^u\mathcal{L}\Phi_p(X_s)\,ds+(N_u-N_v),
    \]
    with $N_s:=\int_0^s2p\,\Phi_1^{p-1}(X_r)\,(X_r-c)^\top\Sigma\,dW_r$ as in the previous lemma.
    
    Discarding the favourable term $-a_p\Phi_p\le0$ in \Eqref{eq:gen-final}, taking the supremum over $u\in[v,w]$ and then expectations,
    \begin{equation}\label{eq:seg-sup}
    \E\sup_{v\le u\le w}\Phi_p(X_u)\;\le\;m_p(v)+2\sum_{i=1}^{q}b_i\int_v^wm_p(s-\tau_i)\,ds+C_p(w-v)+\E\sup_{v\le u\le w}\bigl|N_u-N_v\bigr| .
    \end{equation}
    Re-anchored at $v$, the increment $u\mapsto N_u-N_v$ is a continuous local martingale started at $0$, so the Burkholder--Davis--Gundy inequality at exponent $1$ applies with a universal constant $C_{\mathrm{BDG}}$.
    Using $d[N]_s\le4p^2\sigma^2\Phi_1^{2p-1}(X_s)\,ds$ and the factorisation $\Phi_1^{2p-1}=\Phi_p\cdot\Phi_1^{p-1}$,
    \[
    \E\sup_{v\le u\le w}\bigl|N_u-N_v\bigr|\;\le\;2p\sigma C_{\mathrm{BDG}}\;\E\left[\Bigl(\sup_{v\le u\le w}\Phi_p(X_u)\Bigr)^{1/2}\Bigl(\int_v^w\Phi_1^{p-1}(X_s)\,ds\Bigr)^{1/2}\right] .
    \]
    Young's inequality $ab\le\tfrac12a^2+\tfrac12b^2$, with the constant absorbed into the second factor and $C_N:=(2p\sigma C_{\mathrm{BDG}})^2$, gives
    \begin{equation}\label{eq:seg-mart}
    \E\sup_{v\le u\le w}\bigl|N_u-N_v\bigr|\;\le\;\tfrac12\,\E\sup_{v\le u\le w}\Phi_p(X_u)+\tfrac{C_N}{2}\int_v^w\E\,\Phi_1^{p-1}(X_s)\,ds ,
    \end{equation}
    and $\Phi_1^{p-1}\le\tfrac{p-1}{p}\Phi_p+\tfrac1p\le\Phi_p+1$ bounds the last integral by $\int_v^w(1+m_p(s))\,ds$.
    Substituting \Eqref{eq:seg-mart} into \Eqref{eq:seg-sup} and absorbing the half-supremum into the left-hand side,
    \begin{equation}\label{eq:seg-window}
    \E\sup_{v\le u\le w}\Phi_p(X_u)\;\le\;2m_p(v)+4\sum_{i=1}^{q}b_i\int_v^wm_p(s-\tau_i)\,ds+2C_p(w-v)+C_N\int_v^w\bigl(1+m_p(s)\bigr)\,ds .
    \end{equation}

\stp{3: Geometric drift for the segment process}
    Let $t\ge\taumax$ and apply \Eqref{eq:seg-window} with $v=t-\taumax$ and $w=t$.
    Every time argument occurring on its right-hand side lies in $[t-2\taumax,\,t]$, so \Eqref{eq:envelope} bounds each of them by $e^{2\gamma_p\taumax}U_p(\phi)e^{-\gamma_pt}+K_p^\star$.
    Writing $\Lambda_p:=2+4B\taumax+C_N\taumax$, we obtain \Eqref{eq:moment-segment} for $t\ge\taumax$ with
    \[
    A_p^{\seg}:=\Lambda_p\,e^{2\gamma_p\taumax},\qquad B_p^{\seg}:=\Lambda_p K_p^\star+2C_p\taumax+C_N\taumax .
    \]
    For $0\le t<\taumax$ the window $[t-\taumax,\,t]$ straddles the origin, and the initial history is not an It\^o process on its negative part.
    We therefore split
    \[
    U_p(X_t^{\seg})=\sup_{u\in[t-\taumax,\,t]}\Phi_p(X_u)\;\le\;U_p(\phi)+\sup_{0\le u\le t}\Phi_p(X_u),
    \]
    bound the first term directly and the second by \Eqref{eq:seg-window} with $v=0$, $w=t$, whose length is at most $\taumax$.
    All time arguments in the latter lie in $[-\taumax,t]\subseteq[t-2\taumax,t]$, so the computation above bounds the expectation of the second term by $A_p^{\seg}e^{-\gamma_pt}U_p(\phi)+B_p^{\seg}$, while the first term is at most $e^{\gamma_p\taumax}e^{-\gamma_pt}U_p(\phi)$ since $e^{-\gamma_pt}\ge e^{-\gamma_p\taumax}$ on this range.
    Replacing $A_p^{\seg}$ by $A_p^{\seg}+e^{\gamma_p\taumax}$ therefore extends \Eqref{eq:moment-segment} to all $t\ge0$.
    Taking the supremum over $t\ge0$ gives the uniform bound $M_{2p}(\phi)\le A_p^{\seg}U_p(\phi)+B_p^{\seg}$ announced in (a), established without any growth assumption on $G$.

\stp{4: Stationary integrability (b)}
    Choose $h>0$ so that $\kappa:=A_p^{\seg}e^{-\gamma_ph}<1$ and write $\varrho:=B_p^{\seg}$, so that \Eqref{eq:moment-segment} reads $P_hU_p\le\kappa U_p+\varrho$ pointwise on $\Cseg$.
    One cannot integrate this against $\pi^{\seg}$ and cancel the two sides, since $\int U_p\,d\pi^{\seg}$ is not yet known to be finite.
    Therefore, set $u_N:=U_p\wedge N$, a bounded measurable function.
    Concavity of $v\mapsto v\wedge N$ and Jensen's inequality give $P_hu_N\le(P_hU_p)\wedge N\le(\kappa U_p+\varrho)\wedge N$, and invariance of $\pi^{\seg}$ applied to the bounded function $u_N$ yields
    \[
    \int_{\Cseg}\Bigl[u_N-\bigl((\kappa U_p+\varrho)\wedge N\bigr)+\varrho\Bigr]\,d\pi^{\seg}\;\le\;\varrho .
    \]
    The integrand is nonnegative, because $\kappa\le1$ implies $(\kappa U_p+\varrho)\wedge N\le(U_p\wedge N)+\varrho=u_N+\varrho$, and it converges pointwise to $(1-\kappa)U_p$ as $N\to\infty$.
    Fatou's lemma therefore gives $(1-\kappa)\int_{\Cseg}U_p\,d\pi^{\seg}\le\varrho<\infty$, which is \Eqref{eq:moment-stationary}.
    The argument applies separately for each $p$ and uses no moment of $\pi^{\seg}$ in its hypotheses.

\stp{5: Finite horizon (c)}
    Cover $[0,H]$ by the $n:=\lceil H/\taumax\rceil$ windows $[k\taumax,(k+1)\taumax]\cap[0,H]$, $k=0,\dots,n-1$, each of length at most $\taumax$.
    Since the supremum over a union is at most the sum of the suprema,
    \[
    \E_\phi\sup_{t\le H}U_p(X_t^{\seg})=\E_\phi\sup_{u\in[-\taumax,\,H]}\Phi_p(X_u)\;\le\;U_p(\phi)+\sum_{k=0}^{n-1}\E_\phi\sup_{u\in[k\taumax,(k+1)\taumax]\wedge H}\Phi_p(X_u),
    \]
    and each summand is bounded by \Eqref{eq:seg-window} together with the envelope \Eqref{eq:envelope}, which gives $m_p(s)\le U_p(\phi)+K_p^\star$ for every $s\ge-\taumax$.
    Collecting the $n$ contributions yields \Eqref{eq:moment-horizon} with a constant $C_{p,H}$ depending only on $p$, $H$, the delays and the constants of \Cref{lem:moments-point}, and in particular uniform over $\phi\in B_R$ for each fixed $R$.
\end{proof}

\subsection{Ergodicity of the true process}
\label{subsec:ergodicity}

Having those moment estimates both in the segment space and point-wise at hand, it is now possible to show that the true process admits a unique invariant probability and that we have geometric ergodicity in Wasserstein-$d$ distance for some distance on the segment space. The proof is based on the application of the generalized Harris theorem for delayed SDEs by \citet{HMS2011}.

Besides the Lyapunov condition provided by \Cref{lem:moments}, Harris' theorem needs the transition laws started from any two histories in a bounded set to  overlap. For SDDEs this is the tricky part: indeed, the two initial segments may differ over the whole window $[-\taumax, 0]$, while the noise only acts on the present. The idea of the proof is to steer one copy of the process onto the other with a control acting on $[0, h]$. After time $h > \taumax$ the two segments coincide, and since $\Sigma$ is invertible, Girsanov's theorem shows that the price paid for this control is a change of measure with a bounded energy.

\begin{lemma}[Uniform overlap of the segment transition kernels]\label{lem:segment-overlap}
    Under Assumptions~\ref{assu:dgp}, \ref{assu:regularity} and \ref{ass:dissip}, fix $h>\taumax$ and $R_0>0$, and set $B_{R_0}:=\{\varphi\in\Cseg:\norm{\varphi-c}_{\Cseg}\le R_0\}$.
    There exists $\delta>0$ depending on $h$ and $R_0$ (and on $G$ through a local Lipschitz constant, see \Eqref{eq:ergodic-control-energy}) such that
    \begin{equation}\label{eq:segment-overlap}
        \sup_{\varphi,\psi\in B_{R_0}}
        \norm{P_h(\varphi,\cdot)-P_h(\psi,\cdot)}_{\mathrm{TV}}
        \le 1-\delta.
    \end{equation}
\end{lemma}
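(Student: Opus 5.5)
Our plan is to build, for each pair $\varphi,\psi\in B_{R_0}$, a controlled copy of the process started from $\psi$ that coincides with the solution started from $\varphi$ on $[h-\taumax,h]$, and to pay for the control with Girsanov's theorem. Let $X$ solve the SDDE from $\varphi$ with Brownian motion $W$. On a fixed deterministic transition window $[0,s_0]$ with $s_0:=h-\taumax>0$, we define the target path $Y$ from $\psi$ by linear interpolation of the \emph{difference}: $Y_t:=X_t+(1-t/s_0)\bigl(\psi(0)-\varphi(0)\bigr)$ for $t\in[0,s_0]$, $Y_t=X_t$ for $t\ge s_0$, and $Y=\psi$ on $[-\taumax,0]$. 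Then $Y$ solves $dY_t=G(Y_t,Y_{t-\tau_1},\dots)\,dt+\Sigma\,d\widetilde W_t$ with $d\widetilde W_t=dW_t+u_t\,dt$ and
\[
u_t:=\Sigma^{-1}\Bigl[G(Z^X_t)-G(Z^Y_t)-\tfrac{1}{s_0}\bigl(\psi(0)-\varphi(0)\bigr)\mathbf 1_{t\le s_0}\Bigr],
\]
where $Z^X_t,Z^Y_t$ are the lagged states. Since $X$ and $Y$ coincide on $[s_0,h]$, and $h-\taumax=s_0$, we have $Y^{\seg}_h=X^{\seg}_h$. Note $u_t$ need not vanish on $[s_0,h]$, because the lagged arguments of $Y$ may still lie in the region where $Y\neq X$ or read from the initial histories; this is exactly why $h>\taumax$ is needed and why the control lives on all of $[0,h]$.

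Next we bound the control energy on a good event. Let $\rho>R_0$ and $E_\rho:=\{\sup_{t\le h}\norm{X_t-c}\le\rho\}$. On $E_\rho$ all lagged states $Z^X_t,Z^Y_t$ lie in a compact set $K_{\rho'}$ with $\rho'$ depending on $\rho,R_0$, since $\norm{Y_t-X_t}\le 2R_0$ and the initial segments are in $B_{R_0}$. With $L_{\rho'}$ a Lipschitz constant of $G$ on $K_{\rho'}$ (Assumption~\ref{assu:regularity}(i)), $\norm{Z^X_t-Z^Y_t}\le \sqrt{q+1}\,2R_0$. This gives a deterministic bound
\begin{equation}\label{eq:ergodic-control-energy}
\int_0^h\norm{u_t}^2dt\le \norm{\Sigma^{-1}}_{\mathrm{op}}^2\Bigl(8(q+1)L_{\rho'}^2R_0^2\,h+\tfrac{8R_0^2}{s_0}\Bigr)=:\mathcal K .
\end{equation}
We use the localised control $u_t\mathbf 1_{t\le\sigma_\rho}$, where $\sigma_\rho$ is the exit time defining $E_\rho$, so Novikov holds trivially. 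By \Cref{lem:moments}(c), $\Prob_\varphi(E_\rho^c)\le C_{1,h}(1+R_0^2)/(\rho-\norm{c}\,)^2$-type Markov bound, uniformly in $\varphi\in B_{R_0}$; we fix $\rho$ so that this is at most $1/4$.

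Finally we convert to total variation. Under $\mathbb Q$ with density $\mathcal E=\exp(-\int u\,dW-\tfrac12\int\norm u^2)$, the process $\widetilde W$ is a Brownian motion, so by pathwise uniqueness (Proposition~\ref{prop:strong_solution}) $\mathrm{Law}_{\mathbb Q}(Y^{\seg}_h)\ge P_h(\psi,\cdot)$ restricted to $E_\rho$ (the localised and true solutions agree there). Hence for any measurable $A$, $P_h(\psi,A)\ge\E[\mathcal E\,\mathbf 1_{E_\rho}\mathbf 1_{\{X_h^{\seg}\in A\}}]$, and the two measures share the common component $\nu(A):=\E[(\mathcal E\wedge1)\mathbf 1_{E_\rho}\mathbf 1_{X^{\seg}_h\in A}]$, i.e. $P_h(\varphi,\cdot)\ge\nu$, $P_h(\psi,\cdot)\ge\nu$. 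It remains to lower bound $\nu(\Cseg)$: on $E_\rho\cap\{|\int u\,dW|\le k\}$ we have $\mathcal E\ge e^{-k-\mathcal K/2}$, and $\Prob(|\int u\,dW|>k)\le\mathcal K/k^2\le1/4$ for $k=2\sqrt{\mathcal K}$, so $\nu(\Cseg)\ge\tfrac12e^{-2\sqrt{\mathcal K}-\mathcal K/2}=:\delta$, and $\norm{P_h(\varphi,\cdot)-P_h(\psi,\cdot)}_{\mathrm{TV}}\le1-\delta$. The main obstacle is the rigorous bookkeeping of the Girsanov step for the delayed equation: verifying that $Y$ constructed from $X$ is adapted and indeed the unique strong solution of the SDDE from $\psi$ driven by $\widetilde W$ under $\mathbb Q$, with the localisation handled so that the compact-set Lipschitz bound is legitimate; the rest is routine.
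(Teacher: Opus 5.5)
Your proposal is correct and follows the paper's proof essentially step for step: the same interpolation of the history difference over $[0,h-\taumax]$, the same localized control with the same deterministic energy bound, and the same Girsanov-based common component $\nu$. The only real difference is that you lower-bound the density by Chebyshev applied to $\int u\,dW$ (via It\^o's isometry), whereas the paper uses $\E[\Lambda^{-1}]\le e^{K_{h,R_0}}$ plus Markov; both work. One point needs tidying: with the localized control, $Y$ should equal $X$ plus the interpolated difference only up to the exit time, and afterwards be continued by the uncontrolled SDDE, as the paper does. Then $Y$ solves the controlled equation on all of $[0,h]$, so $\mathrm{Law}_{\mathbb Q}(Y_h^{\seg})=P_h(\psi,\cdot)$ exactly, and your inequality $P_h(\psi,A)\ge\E[\mathcal E\,\mathbf 1_{E_\rho}\mathbf 1_{\{X_h^{\seg}\in A\}}]$ follows because $Y_h^{\seg}=X_h^{\seg}$ on $E_\rho$.
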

\begin{proof}
    Fix $\varphi,\psi\in B_{R_0}$, let $X$ start from $\varphi$, and put $t_1:=h-\taumax>0$ and $v:=\norm{\varphi-\psi}_{\Cseg}\le2R_0$.
    Define the deterministic difference pilot path
    \[
        g(t):=
        \begin{cases}
        \varphi(t)-\psi(t), & -\taumax\le t\le0,\\
        (1-t/t_1)\bigl(\varphi(0)-\psi(0)\bigr), & 0\le t\le t_1,\\
        0, & t_1\le t\le h.
        \end{cases}
    \]
    Then $g$ is continuous, $\sup_t\norm{g(t)}\le v$, and its restriction to $[0,h]$ is absolutely continuous with $\norm{g'(t)}\le(v/t_1)\mathbf{1}_{(0,t_1)}(t)$ almost everywhere.

\stp{1: Localized control}
    The finite-horizon estimate in \Cref{lem:moments} gives
    \[
    \sup_{\varphi\in B_{R_0}}
    \E_\varphi\sup_{0\le t\le h}\norm{X_t-c}^{2}<\infty.
    \]
    Consequently, we can choose a deterministic $M>R_0$ such that, with $\sigma:=\inf\{t\ge0:\norm{X_t-c}\ge M\}$,
    \[
    \inf_{\varphi\in B_{R_0}}\Prob_\varphi(\sigma>h)\ge\tfrac12.
    \]
    Write $Z_t:=(X_t,X_{t-\tau_1},\dots,X_{t-\tau_q})$ and $g_t^{\mathrm{arg}}:=(g(t),g(t-\tau_1),\dots,g(t-\tau_q))$.
    Define the progressively measurable control
    \begin{equation}\label{eq:ergodic-local-control}
    u_t:=\mathbf{1}_{\{t<\sigma\}}\Sigma^{-1}
    \bigl[G(Z_t)-G(Z_t-g_t^{\mathrm{arg}})-g'(t)\bigr],
    \qquad 0\le t\le h.
    \end{equation}
    All arguments in this expression belong to a fixed compact subset of $\R^{D(q+1)}$, uniformly over $\varphi,\psi\in B_{R_0}$ in the finite times $t < \sigma$.
    Let $L$ be a Lipschitz constant of $G$ on a ball containing this set.
    Since $\norm{g_t^{\mathrm{arg}}}\le\sqrt{q+1}\,v$, the control has the deterministic energy bound
    \begin{equation}\label{eq:ergodic-control-energy}
    \int_0^h\norm{u_t}^{2}\,dt
    \le 8R_0^2\norm{\Sigma^{-1}}_{\mathrm{op}}^2
    \left((q+1)L^2h+\frac1{t_1}\right)
    =:K_{h,R_0}<\infty.
    \end{equation}
    
    Set $Y_t=X_t-g(t)$ up to $\sigma$, and, when $\sigma<h$, continue $Y$ after $\sigma$ using the original SDDE with the same Brownian motion and its segment at $\sigma$ as initial condition.
    Strong existence and pathwise uniqueness justify this continuation.
    By substitution, $Y$ satisfies
    \[
    dY_t=G(Y_t,Y_{t-\tau_1},\dots,Y_{t-\tau_q})\,dt
    +\Sigma\,dW_t+\Sigma u_t\,dt,
    \qquad Y_0^{\seg}=\psi.
    \]
    On the event $E:=\{\sigma>h\}$, the two terminal segments agree because $g=0$ on $[h-\taumax,h]$.
    Thus
    \[
    Y_h^{\seg}=X_h^{\seg}\quad\text{on }E,
    \qquad \text{with } \Prob(E)\ge\tfrac12.
    \]
\stp{2: Correction of the controlled law}
    Define
    \[
    \Lambda:=\exp\left(-\int_0^h u_t\cdot dW_t
    -\frac12\int_0^h\norm{u_t}^{2}\,dt\right),
    \qquad d\mathbb{Q}:=\Lambda\,d\Prob.
    \]
    The deterministic bound \Eqref{eq:ergodic-control-energy} implies Novikov's condition.
    Under $\mathbb{Q}$, the process $W_t+\int_0^t u_s\,ds$ is Brownian, so uniqueness in law gives $\mathcal{L}_{\mathbb{Q}}(Y_h^{\seg})=P_h(\psi,\cdot)$.
    The exponential martingale with integrand $u$ also has expectation one, and therefore
    \[
    \E[\Lambda^{-1}]
    =\E\left[
    \exp\left(\int_0^h u_t\cdot dW_t-\frac12\int_0^h\norm{u_t}^{2}\,dt\right)
    \exp\left(\int_0^h\norm{u_t}^{2}\,dt\right)
    \right]
    \le e^{K_{h,R_0}}.
    \]
    Set $\ell:=e^{-K_{h,R_0}}/4$.
    Markov's inequality gives $\Prob(\Lambda<\ell)\le1/4$, hence
    \[
    \E\bigl[\mathbf{1}_E\min(1,\Lambda)\bigr]
    \ge \ell\,\Prob\bigl(E\cap\{\Lambda\ge\ell\}\bigr)
    \ge \frac{e^{-K_{h,R_0}}}{16}.
    \]
    For a Borel set $A\subseteq\Cseg$, define
    \[
    \nu(A):=\E\bigl[\mathbf{1}_E\min(1,\Lambda)
    \mathbf{1}_{\{X_h^{\seg}\in A\}}\bigr].
    \]
    Since $X_h^{\seg}=Y_h^{\seg}$ on $E$, this measure is dominated by both $P_h(\varphi,\cdot)$ and $P_h(\psi,\cdot)$.
    Its mass is at least $e^{-K_{h,R_0}}/16$, which proves \Eqref{eq:segment-overlap} with $\delta := e^{-K_{h,R_0}}/16$.
\end{proof}

\begin{manualtheorem}{thm:ergodic}{Ergodicity: existence, uniqueness, exponential convergence}
Under Assumptions~\ref{assu:dgp}, \ref{assu:regularity} and \ref{ass:dissip}, the segment process admits a unique invariant probability measure $\pi^{\seg}$ on $\Cseg$.
Write $P_t(\varphi,\cdot):=\mathcal{L}(X_t^{\seg}\mid X_0^{\seg}=\varphi)$ and $V(\varphi):=\norm{\varphi-c}_{\Cseg}^{2}$, where $c$ also denotes the constant segment with value $c$.
The invariant measure satisfies
\[
\int_{\Cseg}\norm{\eta-c}_{\Cseg}^{2p}\,\pi^{\seg}(d\eta)<\infty,
\qquad p\ge1 \text{ an integer}.
\]
There exist $C\ge1$ and $\gamma>0$ such that, for every $\epsilon>0$, the metric $d(\varphi,\psi):=1\wedge\epsilon^{-1}\norm{\varphi-\psi}_{\Cseg}$ satisfies
\begin{equation*}
\mathcal{W}_d\bigl(P_t(\varphi,\cdot),\pi^{\seg}\bigr)
\le \norm{P_t(\varphi,\cdot)-\pi^{\seg}}_{\mathrm{TV}}
\le C e^{-\gamma t}\bigl(1+V(\varphi)\bigr),
\qquad t\ge0,\quad \varphi\in\Cseg.
\tag{\ref{eq:ergodic-tv-wasserstein}}
\end{equation*}
\end{manualtheorem}



\begin{proof}
By \Cref{prop:strong_solution}, time-homogeneity and pathwise uniqueness define a Markov semigroup $(P_t)_{t\ge0}$ on $\Cseg$.

\stp{1: Lyapunov estimate}
    For $V(\varphi)=\norm{\varphi-c}_{\Cseg}^{2}$, the decaying segment-moment bound in \Cref{lem:moments} provides constants $A_V\ge1$, $B_V<\infty$ and $\gamma_V>0$ such that
    \begin{equation}\label{eq:ergodic-sup-lyapunov}
    P_tV(\varphi)\le A_Ve^{-\gamma_Vt}V(\varphi)+B_V,
    \qquad t\ge0.
    \end{equation}
    Choose $h>\taumax$ sufficiently large that $A_Ve^{-\gamma_Vh}<1$.
    Every sublevel set $\{V\le S\}$ is the supremum-norm ball $B_{\sqrt S}$, and \Cref{lem:segment-overlap} gives uniform transition overlap on this set for $P_h$.
    Since \Cref{lem:segment-overlap} holds for every radius $R_0$ at the same time $h$, this overlap is available on every sublevel set of $V$, however large; in particular, the size of the level set required in Harris' theorem (of order $B_V/(1-A_Ve^{-\gamma_Vh})$) imposes no restriction.

\stp{2: Harris' theorem in total variation}
    The estimate \Eqref{eq:ergodic-sup-lyapunov} is a Lyapunov condition, and \Eqref{eq:segment-overlap} is precisely the pairwise total-variation small-set condition in \citet[Theorem~1.5]{HMS2011}.
    The choice of $h$ satisfies the time requirement $h>\gamma_V^{-1}\log A_V$ in that theorem.
    It follows that $(P_t)_{t\ge0}$ has a unique invariant probability measure $\pi^{\seg}$ and that, for some $C\ge1$ and $\gamma>0$,
    \[
    \norm{P_t(\varphi,\cdot)-\pi^{\seg}}_{\mathrm{TV}}
    \le C e^{-\gamma t}\bigl(1+V(\varphi)\bigr),
    \qquad t\ge0,\quad\varphi\in\Cseg.
    \]
    Only pairwise overlap is required here; no common minorising measure for an entire sublevel set is asserted.

\stp{3: Stationary moments and Wasserstein convergence}
    Now that an invariant measure exists, the stationary-integrability part of \Cref{lem:moments} gives all the stated polynomial moments.
    For any two probability measures $\mu,\nu$ on $\Cseg$, a maximal coupling has probability $\norm{\mu-\nu}_{\mathrm{TV}}$ of unequal coordinates.
    Since $d\le1$ and $d(\varphi,\varphi)=0$, this coupling gives $\mathcal{W}_d(\mu,\nu)\le\norm{\mu-\nu}_{\mathrm{TV}}$.
    Combining this inequality with the preceding total-variation estimate proves \Eqref{eq:ergodic-tv-wasserstein}.
\end{proof}

\begin{remark}[Stabilisation of the identifiability constants]\label{rem:stabilise}
Under Theorem~\ref{thm:ergodic}, averaging \Eqref{eq:ergodic-tv-wasserstein} over $t \in [\taumax, T]$ against the initial law and applying the evaluation map at the lags gives
\[
\norm{\occq - \mu_\infty^q}_{\mathrm{TV}} \le \frac{C\bigl(1 + \E V(X_0^{\seg})\bigr)}{\gamma (T - \taumax)}\bigl(e^{-\gamma\taumax} - e^{-\gamma T}\bigr),
\]
where $\mu_\infty^q$ is the stationary joint law of $(X_0, X_{-\tau_1}, \dots, X_{-\tau_q})$ under $\pi^{\mathrm{seg}}$. Total-variation convergence alone does not control squared errors of an unbounded drift (the stationary drift need not even be square-integrable). If, in addition, $\norm{G}^2$ is uniformly integrable with respect to the family $\{\occq\}_{T > \taumax} \cup \{\mu_\infty^q\}$, then the signal strengths $s_{ij}^a$ and hence $\smin$ stabilize as $T \to \infty$ to their stationary counterparts,
and the $\lambda$ window \Eqref{eq:lambda-window} scales linearly in $T$ with an asymptotically constant slope. This is what makes the finite-horizon constants of \Cref{thm:penalized} meaningful uniformly in $T$, rather than per-$(T, \text{initial law})$. The finite-sample analysis of \Cref{subsec:finite-sample} avoids this issue by working directly under the stationary law and imposing a finite stationary drift energy in \Eqref{eq:finite-stationary-energy} explicitly.
\end{remark}

\subsection{Mixing of the sampled process}
\label{subsec:mixing}

The finite-sample analysis of \Cref{subsec:finite-sample} requires the sampled lagged state to forget its past at an exponential rate, in the sense of $\beta$-mixing.
Since $\beta$-mixing is a total-variation property, a Wasserstein contraction of the segment process would not be enough.
\Cref{thm:ergodic}, however, already provides geometric ergodicity in total variation: despite the infinite-dimensional state space, the noise is additive and non-degenerate, so that for $h>\taumax$ the kernels $P_h(\varphi,\cdot)$ and $P_h(\psi,\cdot)$ overlap (\Cref{lem:segment-overlap}).
The mixing rate is therefore a corollary of \Cref{thm:ergodic}.
We first bound the $\beta$-coefficient of a stationary Markov process by the $\pi$-average distance of its kernel from equilibrium (\Cref{lem:markov-beta}), and then pass from the segment process to the lagged state and to its deterministic subsamples by monotonicity (\Cref{thm:exp-mixing}).
No assumption beyond those of \Cref{thm:ergodic} is needed, and the non-delayed case $q=0$ is included.

\begin{definition}[$\beta$-mixing coefficient]\label{def:beta-mixing}
    We normalize the total-variation distance of two probability measures $\mu,\nu$ on a common measurable space as
    \[
    \norm{\mu-\nu}_{\mathrm{TV}} := \sup_{A}\lvert\mu(A) - \nu(A)\rvert
     = \tfrac12 \int \lvert\mathrm{d}\mu-\mathrm{d}\nu\rvert\ \in[0,1].
    \]
    For sub-$\sigma$-algebras $\mathcal A,\mathcal B\subseteq\mathcal F$, the \emph{$\beta$-mixing (absolute regularity) coefficient} is
    \[
    \beta(\mathcal A,\mathcal B) := \sup\ \tfrac12 \sum_{i=1}^{I} \sum_{j=1}^{J} \left\lvert\Prob(A_i\cap B_j) - \Prob(A_i)\,\Prob(B_j)\right\rvert,
    \]
    the supremum over all finite $\mathcal A$-measurable partitions $\{A_i\}_{i\le I}$ and $\mathcal B$-measurable partitions $\{B_j\}_{j\le J}$ of $\Omega$ \citep{Bradley2005}.
    Since any pair of partitions admissible for $(\mathcal A',\mathcal B')$ is admissible for $(\mathcal A,\mathcal B)$,
    \begin{equation}\label{eq:beta-monotone}
        \mathcal A'\subseteq\mathcal A,\ \mathcal B'\subseteq\mathcal B
        \quad\Longrightarrow\quad
        \beta(\mathcal A',\mathcal B')\le\beta(\mathcal A,\mathcal B).
    \end{equation}
    For a process $(Y_s)_{s\ge s_0}$, its \emph{$\beta$-mixing coefficient at gap $t>0$} is
    \begin{equation}\label{eq:beta-def}
        \beta_Y(t):=\sup_{s\ge s_0}\ \beta\Bigl(\sigma(Y_u:\ s_0\le u\le s),\ \sigma(Y_u:\ u\ge s+t)\Bigr),
    \end{equation}
    and for a sequence $(Y_k)_{k\ge0}$ we set $\beta_Y(m):=\sup_{k\ge0}\beta\bigl(\sigma(Y_j:j\le k),\sigma(Y_j:j\ge k+m)\bigr)$, $m\ge1$.
    By \Eqref{eq:beta-monotone}, $\beta_Y$ is nonincreasing.
    The process is \emph{$\beta$-mixing} if $\beta_Y(t)\to 0$ as $t\to\infty$, and \emph{exponentially $\beta$-mixing} if $\beta_Y(t)\le C_\beta\,e^{-\lambda_{\mathrm{erg}}t}$ for constants $C_\beta<\infty$, $\lambda_{\mathrm{erg}}>0$.
    For a stationary process indexed by $\R$ with past $\sigma(Y_u:u\le s)$, the supremum is redundant and one recovers the classical coefficient $\beta(\mathcal F_{\le 0},\mathcal F_{\ge t})$; the one-sided form \Eqref{eq:beta-def} avoids constructing a two-sided extension of the solution.
\end{definition}

\begin{lemma}[$\beta$-coefficient of a stationary Markov process]\label{lem:markov-beta}
    Let $(\xi_s)_{s\ge0}$ be a time-homogeneous Markov process on a Polish space $E$, Markov with respect to its natural filtration $\mathcal G_s:=\sigma(\xi_u:0\le u\le s)$, with transition kernels $(P_t)_{t\ge0}$ and invariant probability measure $\pi$, and started from $\xi_0\sim\pi$.
    Then, for every $s\ge0$ and $t>0$,
    \begin{equation}\label{eq:beta-markov}
        \beta\Bigl(\mathcal G_s,\ \sigma(\xi_u:u\ge s+t)\Bigr)
        =\int_E\norm{P_t(x,\cdot)-\pi}_{\mathrm{TV}}\,\pi(\mathrm{d}x),
    \end{equation}
    so that, in particular, $\beta_\xi(t)$ is given by the right-hand side.
\end{lemma}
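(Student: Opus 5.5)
The plan is to reduce the left-hand side of \Eqref{eq:beta-markov} to a statement about the joint law of two random elements, and then use the Markov property to identify that joint law. Set $\mathcal A:=\mathcal G_s$ and $\mathcal B:=\sigma(\xi_u:u\ge s+t)$. The starting point is the standard characterization of the $\beta$-coefficient for random elements with values in standard Borel spaces:
\[
\beta(\sigma(U),\sigma(V))=\norm{\mathcal L(U,V)-\mathcal L(U)\otimes\mathcal L(V)}_{\mathrm{TV}},
\]
applied with $U=(\xi_u)_{u\le s}$ and $V=(\xi_u)_{u\ge s+t}$, both viewed in path spaces. To keep things Polish, I will work with finite-dimensional projections. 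By the monotonicity \Eqref{eq:beta-monotone} and an approximation of the generated $\sigma$-algebras by increasing unions of finitely generated ones, $\beta(\mathcal A,\mathcal B)$ is the supremum of $\beta$ over the finite-dimensional marginals $U=(\xi_{u_1},\dots,\xi_{u_m},\xi_s)$ and $V=(\xi_{s+t},\xi_{v_2},\dots)$. The partition definition makes this continuity of $\beta$ along increasing $\sigma$-algebras elementary, since any finite partition of the limit $\sigma$-algebra can be approximated by partitions from the generating algebra.

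Next I will apply the Markov property to each finite-dimensional piece. Conditionally on $\mathcal G_s$, the future $V$ has law $Q_{\xi_s}$, where $Q_x$ is the law of the future block started from $P_t(x,\cdot)$ and propagated by the later kernels. Stationarity gives $\mathcal L(V)=Q_\pi:=\int Q_x\,\pi(dx)$, because $\xi_s\sim\pi$. Hence
\[
\mathcal L(U,V)-\mathcal L(U)\otimes\mathcal L(V)=\int \mathcal L(U)(du)\,\bigl(Q_{x(u)}-Q_\pi\bigr),
\]
where $x(u)$ denotes the last coordinate of $u$, that is $\xi_s$. Taking total variation, the disintegration identity
\[
\norm{\int\mu(du)\,(K_u-\nu)}_{\mathrm{TV}}=\int\mu(du)\,\norm{K_u-\nu}_{\mathrm{TV}}
\]
holds, because the signed measure is disintegrated over the first marginal. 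This yields
\[
\beta=\int\pi(dx)\,\norm{Q_x-Q_\pi}_{\mathrm{TV}}.
\]

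The final step is to show $\norm{Q_x-Q_\pi}_{\mathrm{TV}}=\norm{P_t(x,\cdot)-\pi}_{\mathrm{TV}}$. Here $Q_x$ and $Q_\pi$ are obtained from $P_t(x,\cdot)$ and $\pi P_t=\pi$ by attaching the same Markov kernel $y\mapsto\mathcal L(\text{rest of future}\mid\xi_{s+t}=y)$. Attaching a common kernel preserves total variation: it cannot increase it (data processing), and it cannot decrease it since projecting back onto the first coordinate recovers the original measures. This gives equality for every finite-dimensional marginal, with a value independent of the extra time points. The supremum over marginals is therefore attained and equals $\int\norm{P_t(x,\cdot)-\pi}_{\mathrm{TV}}\,\pi(dx)$. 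Since this does not depend on $s$, the expression for $\beta_\xi(t)$ follows.

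I expect the main obstacle to be the first reduction: rigorously passing from the partition-based definition on the full path $\sigma$-algebras to the joint-law total-variation formula, including the interchange of the supremum over partitions with the disintegration. I would handle it first with finitely many time points, where everything lives on products of Polish spaces and regular conditional distributions exist, and only then pass to the limit. I would also keep the normalization consistent: the factor $\tfrac12$ in Definition~\ref{def:beta-mixing} matches the $\tfrac12\int|d\mu-d\nu|$ convention, so no constants are lost.
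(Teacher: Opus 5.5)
Your proof is correct, but it is organised differently from the paper's. The paper attributes the identity to Davydov and Masuda and sketches two separate inequalities. For the upper bound, it conditions every event of the whole future $\sigma(\xi_u:u\ge s+t)$ on $\mathcal G_s$. It identifies the conditional law of the future with the law of the process started from $P_t(\xi_s,\cdot)$, and the unconditional law with the law started from $\pi$. It then uses that a Markov kernel does not increase total variation. For the lower bound, it shrinks both $\sigma$-algebras to $\sigma(\xi_s)$ and $\sigma(\xi_{s+t})$ via \eqref{eq:beta-monotone}, where $\beta$ becomes the total variation between $\pi(dx)P_t(x,dy)$ and $\pi\otimes\pi$.

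You instead compute $\beta$ exactly on every finite-dimensional cylinder containing $\xi_s$ and $\xi_{s+t}$. You use the joint-law formula, disintegration over the past, and the observation that attaching a common kernel to $P_t(x,\cdot)$ and to $\pi$ preserves total variation in both directions. You note that the value does not depend on the extra time points, and then pass to the full $\sigma$-algebras by approximating finite partitions from the generating algebra.

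Your route pays for that approximation step, but every conditional law you use lives on a finite product of Polish spaces. So you never need the Markov property for arbitrary events of the infinite-dimensional future. The paper's upper bound does use it, tacitly relying on a monotone-class extension from finite-dimensional distributions. The paper's route is shorter, and its lower bound needs no limit at all; indeed your cylinder with $U=\xi_s$, $V=\xi_{s+t}$ is exactly that lower bound.

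One notational point: your disintegration identity must be read as the total variation of the signed measure $\mu(du)\,(K_u-\nu)(dv)$ on the product space. Read as a mixture on the $V$-space alone, its left-hand side would be $0$, since $\int\mu(du)\,K_u=\nu$.
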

This identity is due to \citet{Davydov1973} for Markov chains on a general state space, and its continuous-time version is standard (see, e.g., \citet{Masuda2007}).
Only the inequality ``$\le$'' is used below.
It follows by conditioning each future event on $\mathcal G_s$: by the Markov property and stationarity, the conditional law of $(\xi_{s+t+r})_{r\ge0}$ is the law of the process started from $P_t(\xi_s,\cdot)$, while its unconditional law is the law of the process started from $\pi$, and a Markov kernel does not increase total variation.
For the reverse inequality, restrict both $\sigma$-algebras to the endpoints using \Eqref{eq:beta-monotone}: $\beta(\sigma(\xi_s),\sigma(\xi_{s+t}))$ is the total-variation distance between the joint law $\pi(\mathrm{d}x)P_t(x,\mathrm{d}y)$ of $(\xi_s,\xi_{s+t})$ and the product law $\pi(\mathrm{d}x)\pi(\mathrm{d}y)$, which equals the right-hand side of \Eqref{eq:beta-markov}.

\begin{theorem}[Exponential $\beta$-mixing of the lagged state]\label{thm:exp-mixing}
    Let Assumptions~\ref{assu:dgp}, \ref{assu:regularity} and \ref{ass:dissip} hold, let $C\ge1$ and $\gamma>0$ be the constants of \Cref{thm:ergodic}, and take the process in its stationary regime, $X_0^{\seg}\sim\pi^{\seg}$ independent of $W$.
    Set
    \[
        M_2^\pi:=\int_{\Cseg}\norm{\eta-c}_{\Cseg}^2\,\pi^{\seg}(\mathrm{d}\eta)<\infty,
        \qquad
        C_\beta:=C\bigl(1+M_2^\pi\bigr),
    \]
    and write $Z_t:=(X_t,X_{t-\tau_1},\dots,X_{t-\tau_q})$, $t\ge0$. Then:
    \begin{enumerate}[label=(\roman*)]
        \item the segment process is exponentially $\beta$-mixing: $\beta_{X^{\seg}}(t)\le C_\beta e^{-\gamma t}$ for all $t>0$;
        \item the lagged state is exponentially $\beta$-mixing: $\beta_{Z}(t)\le C_\beta e^{-\gamma t}$ for all $t>0$, and the same bound holds for $(Z_t)_{t\ge\taumax}$;
        \item for every deterministic grid $0\le t_0<t_1<\cdots$, the sampled sequence $Z_k:=Z_{t_k}$ satisfies, for all $k\ge0$ and $m\ge1$,
        \begin{equation}\label{eq:beta-bound-grid}
            \beta\Bigl(\sigma(Z_j:j\le k),\ \sigma(Z_j:j\ge k+m)\Bigr)\le C_\beta\,e^{-\gamma(t_{k+m}-t_k)};
        \end{equation}
        in particular $\beta_{(Z_k)}(m)\le C_\beta e^{-\gamma\Delta_{\min}m}$ whenever $\min_k(t_{k+1}-t_k)\ge\Delta_{\min}>0$.
    \end{enumerate}
    Consequently, the physical-time mixing estimate \Eqref{eq:finite-beta} used in \Cref{subsec:finite-sample} is implied by Assumptions~\ref{assu:dgp}, \ref{assu:regularity} and \ref{ass:dissip}, with $\lambda_{\mathrm{erg}}:=\gamma$ and $C_\beta$ as above.
\end{theorem}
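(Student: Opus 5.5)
We derive all three claims from \Cref{lem:markov-beta} applied to the segment process, followed by the monotonicity \Eqref{eq:beta-monotone}. The estimate \Eqref{eq:ergodic-tv-wasserstein} of \Cref{thm:ergodic} supplies the quantitative input.

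\emph{Step (i).} By \Cref{prop:strong_solution} and \Cref{thm:ergodic}, $(X^{\seg}_s)_{s\ge0}$ is a time-homogeneous Markov process on the Polish space $\Cseg$ (separable Banach under the sup norm). Its invariant law is $\pi^{\seg}$, and it is Markov with respect to its natural filtration; this holds because that filtration is contained in the Brownian-augmented filtration, for which the Markov property follows from pathwise uniqueness. Started from $X_0^{\seg}\sim\pi^{\seg}$, the ``$\le$'' direction of \Cref{lem:markov-beta} gives, for every $s\ge0$ and $t>0$,
\[
\beta\bigl(\sigma(X^{\seg}_u:u\le s),\sigma(X^{\seg}_u:u\ge s+t)\bigr)\le\int_{\Cseg}\norm{P_t(\varphi,\cdot)-\pi^{\seg}}_{\mathrm{TV}}\,\pi^{\seg}(d\varphi).
\]
Inserting the bound $Ce^{-\gamma t}(1+V(\varphi))$ from \Eqref{eq:ergodic-tv-wasserstein} and integrating gives $Ce^{-\gamma t}(1+M_2^\pi)=C_\beta e^{-\gamma t}$. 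The integral is finite because $M_2^\pi<\infty$ by the stationary-moment clause of \Cref{thm:ergodic} with $p=1$. The right-hand side does not depend on $s$, so taking the supremum over $s$ yields (i).

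\emph{Step (ii).} For $t\ge\taumax$, $Z_t$ is a measurable (indeed continuous) function of $X^{\seg}_t$, namely the evaluation at $0,-\tau_1,\dots,-\tau_q$. Hence $\sigma(Z_u:u\le s)\subseteq\sigma(X^{\seg}_u:u\le s)$ and $\sigma(Z_u:u\ge s+t)\subseteq\sigma(X^{\seg}_u:u\ge s+t)$, and \Eqref{eq:beta-monotone} transfers the bound. For $t<\taumax$, $Z_t$ reads values of the initial segment, which are still functions of $X^{\seg}_0$ and hence of the past $\sigma(X^{\seg}_u:u\le s)$. The only delicate point is on the future side: for $u\ge s+t$, $Z_u$ may involve $X_{u-\tau_a}$ with $u-\tau_a<s+t$. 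Such a value is nevertheless a coordinate of the segment $X^{\seg}_u$, since $u-\tau_a\in[u-\taumax,u]$, so $Z_u$ remains $\sigma(X^{\seg}_u)$-measurable. Future inclusion therefore holds verbatim, and the restriction to $t\ge\taumax$ is a further sub-$\sigma$-algebra of both sides.

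\emph{Step (iii).} For a deterministic grid, $\sigma(Z_j:j\le k)\subseteq\sigma(Z_u:u\le t_k)$ and $\sigma(Z_j:j\ge k+m)\subseteq\sigma(Z_u:u\ge t_{k+m})$. Applying (ii) with $s=t_k$ and gap $t_{k+m}-t_k>0$ gives \Eqref{eq:beta-bound-grid}. When the mesh is at least $\Delta_{\min}$, we have $t_{k+m}-t_k\ge m\Delta_{\min}$, and monotonicity of the exponential gives the sequence bound. The concluding sentence about \Eqref{eq:finite-beta} then follows by reading off $\lambda_{\mathrm{erg}}=\gamma$.

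\emph{Main obstacle.} The hard step is justifying the hypotheses of \Cref{lem:markov-beta}: the Markov property with respect to the natural filtration of the segment process, and measurability of $\varphi\mapsto\norm{P_t(\varphi,\cdot)-\pi^{\seg}}_{\mathrm{TV}}$. For the latter, we would note that TV can be computed as a supremum over a countable generating algebra of Borel sets in the Polish space $\Cseg$, so it is a countable supremum of measurable functions. All remaining steps are set inclusions.
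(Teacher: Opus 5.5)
Your proposal is correct and follows essentially the same route as the paper: you apply the ``$\le$'' half of \Cref{lem:markov-beta} to the stationary segment process, integrate the total-variation bound of \Cref{thm:ergodic} against $\pi^{\seg}$ to get $C_\beta e^{-\gamma t}$, and transfer this to $Z$ and its grid samples through the evaluation map and the monotonicity \Eqref{eq:beta-monotone}. Your case split at $t=\taumax$ in (ii) is unnecessary, since $Z_t=\mathrm{ev}(X^{\seg}_t)$ for every $t\ge0$. Your countable-algebra argument for the measurability of $\varphi\mapsto\norm{P_t(\varphi,\cdot)-\pi^{\seg}}_{\mathrm{TV}}$ supplies a detail the paper leaves implicit.
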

\begin{proof}
\stp{1: Stationary regime and Markov property}
    By \Cref{thm:ergodic} with $p=1$, $M_2^\pi<\infty$, so the initial condition $X_0^{\seg}\sim\pi^{\seg}$ satisfies the initial condition of Assumption~\ref{assu:dgp}, and \Cref{prop:strong_solution} provides a unique strong solution; invariance of $\pi^{\seg}$ gives $X_s^{\seg}\sim\pi^{\seg}$ for every $s\ge0$.
    Finite stationary drift energy is not needed for mixing; it is imposed separately in \Cref{subsec:finite-sample}.
    As in the proof of \Cref{thm:ergodic}, $(X_t^{\seg})_{t\ge0}$ is a time-homogeneous Markov process with kernels $(P_t)_{t\ge0}$ with respect to the filtration generated by $X_0^{\seg}$ and $W$.
    Since its natural filtration $\mathcal G_s:=\sigma(X_u^{\seg}:0\le u\le s)$ is smaller and $X^{\seg}$ is adapted to it, the tower property shows that $X^{\seg}$ is also Markov with respect to $(\mathcal G_s)$.
    The space $\Cseg$ is a separable Banach space, hence Polish, so \Cref{lem:markov-beta} applies with $E=\Cseg$ and $\pi=\pi^{\seg}$.

\stp{2: Segment process, item (i)}
    Fix $s\ge0$ and $t>0$.
    By \Cref{lem:markov-beta} and \Eqref{eq:ergodic-tv-wasserstein},
    \begin{align}
        \beta\Bigl(\mathcal G_s,\ \sigma(X_u^{\seg}:u\ge s+t)\Bigr)
        &= \int_{\Cseg}\norm{P_t(\eta,\cdot)-\pi^{\seg}}_{\mathrm{TV}}\,\pi^{\seg}(\mathrm{d}\eta)\nonumber\\
        &\le C e^{-\gamma t}\int_{\Cseg}\bigl(1+\norm{\eta-c}_{\Cseg}^2\bigr)\,\pi^{\seg}(\mathrm{d}\eta)
        = C_\beta\,e^{-\gamma t}.\label{eq:beta-segment}
    \end{align}
    Taking the supremum over $s\ge0$ gives (i).

\stp{3: Lagged state, item (ii)}
    The evaluation map $\mathrm{ev}:\Cseg\to\R^{D(q+1)}$, $\mathrm{ev}(\eta):=(\eta(0),\eta(-\tau_1),\dots,\eta(-\tau_q))$, is continuous, and $Z_u=\mathrm{ev}(X_u^{\seg})$ for every $u\ge0$.
    Hence
    \[
        \sigma(Z_u:0\le u\le s)\subseteq\mathcal G_s,
        \qquad
        \sigma(Z_u:u\ge s+t)\subseteq\sigma(X_u^{\seg}:u\ge s+t),
    \]
    and \Eqref{eq:beta-monotone} together with \Eqref{eq:beta-segment} gives $\beta_Z(t)\le C_\beta e^{-\gamma t}$.
    Restricting the index set to $u\ge\taumax$ shrinks both $\sigma$-algebras and the range of the supremum in \Eqref{eq:beta-def}, so the same bound holds for $(Z_t)_{t\ge\taumax}$.

\stp{4: Deterministic subsamples, item (iii)}
    Fix $k\ge0$, $m\ge1$ and set $s:=t_k$, $t:=t_{k+m}-t_k>0$.
    Then $\sigma(Z_j:j\le k)\subseteq\mathcal G_s$ and $\sigma(Z_j:j\ge k+m)\subseteq\sigma(X_u^{\seg}:u\ge s+t)$, so \Eqref{eq:beta-bound-grid} follows from \Eqref{eq:beta-monotone} and \Eqref{eq:beta-segment}.
    No regularity of the spacings is used.
    If all spacings are at least $\Delta_{\min}$, then $t_{k+m}-t_k\ge m\Delta_{\min}$, and taking the supremum over $k$ gives the last claim.
\end{proof}

\begin{remark}[Scope of \Cref{thm:exp-mixing}]\label{rem:mixing-scope}
    For $q=0$ the segment space reduces to $\R^D$ ($\taumax=0$) and \Cref{thm:exp-mixing} covers the Markovian SDE case.
    The additive form of the noise is what makes the total-variation proof valid: if the diffusion coefficient depended on the delayed state, the quadratic variation of the solution could reveal the initial segment, the kernels $P_h(\varphi,\cdot)$ and $P_h(\psi,\cdot)$ could then be mutually singular \citep{HMS2011}, and \Cref{lem:segment-overlap} (hence the bound \Eqref{eq:beta-segment}) could fail.
\end{remark}

\subsection{Finite-sample identifiability}
\label{subsec:finite-sample}

We now move from the population results to what can be guaranteed from a single trajectory observed on a finite grid. The argument compares the empirical excess risk to its population counterpart uniformly over the model class, and concludes with the margin $\gamma(\lambda)$ of \Cref{thm:penalized}.
We observe the stationary solution, with $X_0^{\seg}\sim\pi^{\seg}$ independent of the future Brownian motion $W$.
Write $\Prob_\pi$ and $\E_\pi$ for probability and expectation under this law, including both the initial segment and the subsequent noise.
For $Z_t:=(X_t,X_{t-\tau_1},\ldots,X_{t-\tau_q})$, let $\mu_\infty^q$ be its stationary distribution.
Thus $\occq=\mu_\infty^q$, and the signal strengths in \Eqref{eq:signal} do not depend on $T$ in this section.
The finite-energy condition in Assumption~\ref{assu:dgp} is imposed on this stationary solution:
\begin{equation}\label{eq:finite-stationary-energy}
m_G^2:=\E_\pi\norm{G(Z_0)}^2=\int\norm{G(z)}^2\,\mu_\infty^q(dz)<\infty.
\end{equation}

Observations are available on a deterministic grid $0=t_0<t_1<\cdots<t_n=T$, with $\Delta_k:=t_{k+1}-t_k$ and $\Delta_{\max}:=\max_{k<n}\Delta_k\le1$.
We assume that $\taumax$ is an observation time and that $X_{t_k-\tau_a}$ is observed exactly for every used left endpoint $t_k\ge\taumax$ and every lag $a$.
An equally spaced grid has this property when each delay is an integer multiple of its spacing.
All sums over $k$ below are over $k<n$ with $t_k\ge\taumax$, and we set
\[
Z_k:=Z_{t_k},\qquad T_\circ:=\sum_k\Delta_k=T-\taumax.
\]

\begin{manualassumption}{ass:class}{Finite-sample model class}
Let $\Hclass_n^q$ denote the restricted class of deterministic models derived from Assumption~\ref{assu:risk_minimization}(i) satisfying the following two conditions:
\begin{itemize}
    \item[(i)] \textbf{Local finite covering:} For every radius $\rho \ge 1$ and tolerance $\epsilon > 0$, we can select a finite number of representative functions from within $\Hclass_n^q$ (an internal $\epsilon$-net) such that every function in the class is within a maximum distance of $\epsilon$ from at least one representative over the compact set $K_\rho := \{z \in \R^{D(q+1)} : \norm{z - c^{(q)}} \le \rho\}$.
    The minimal number of representatives required to cover the class this way is denoted by $N_n(\rho, \epsilon)$.
    \item[(ii)] \textbf{Stationary envelope:} The pointwise diameter of the class, defined as $\mathcal{E}(z) := \sup_{\tilde{G}_1, \tilde{G}_2 \in \Hclass_n^q} \norm{\tilde{G}_1(z) - \tilde{G}_2(z)}$, has a finite second moment under the stationary measure: $m_{\mathcal{E}}^2 := \int \mathcal{E}(z)^2 \mu_\infty^q(dz) < \infty$.
\end{itemize}
\end{manualassumption}

The local covering condition makes $\mathcal E$ locally bounded, and it is measurable as a supremum of continuous functions.
Since the true drift is feasible, $\norm{\widetilde G(z)-G(z)}\le\mathcal E(z)$ for every candidate.
Thus the finite second moment of the previous assumption controls prediction errors, not the absolute size of a dissipative drift.
Any deterministic locally bounded measurable upper bound for these errors with finite stationary second moment can be used in its place.
All class-dependent quantities may depend on $n$.

Define the stationary mean-square modulus of the true drift by
\begin{equation}\label{eq:finite-drift-modulus}
\omega_G(h):=\sup_{0\le u\le h}\left(\E_\pi\norm{G(Z_u)-G(Z_0)}^2\right)^{1/2},\qquad 0\le h\le1.
\end{equation}
Continuity and \Eqref{eq:finite-stationary-energy} imply $\omega_G(h)\to0$ as $h\downarrow0$, as shown in the proof below.
By \Cref{thm:exp-mixing}, the total-variation conclusion of \Cref{thm:ergodic} and the stationary moment bound give the physical-time mixing estimate
\begin{equation}\label{eq:finite-beta}
\beta_Z(u)\le C_\beta e^{-\lambda_{\mathrm{erg}}u},\qquad u>0,
\end{equation}
where $\lambda_{\mathrm{erg}}=\gamma$ and $C_\beta=C\left(1+\int\norm{\varphi-c}_{\Cseg}^2\,\pi^{\seg}(d\varphi)\right)$ may be used, with $C,\gamma$ from \Cref{thm:ergodic}.

Consider the empirical criterion, and assume that a measurable global minimizer exists (a finite local cover does not by itself guarantee this):
\begin{equation*} 
\begin{aligned}
\eRisk_n(\widetilde G)&:=\sum_k\Delta_k\bignorm{\frac{X_{t_{k+1}}-X_{t_k}}{\Delta_k}-\widetilde G(Z_k)}^2,\\
(\widehat G,\widehat M)&\in\argmin_{(\widetilde G,\tilde M)\in\Hclass_n^q}\left\{\eRisk_n(\widetilde G)+\lambda\sum_{a=0}^q\norm{\tilde M^a}_0\right\}.
\end{aligned}
\tag{\ref{eq:empirical-risk}}
\end{equation*}
The relevant comparison is between the empirical excess risk $\eRisk_n(\widetilde G)-\eRisk_n(G)$ and
\[
\Risk_T(\widetilde G)=T_\circ\norm{\widetilde G-G}_{L^2(\mu_\infty^q)}^2.
\]

\begin{manualtheorem}{thm:finite-sample}{Finite-sample exact support recovery, $L_0$ estimator}
Let Assumptions~\ref{assu:dgp}, \ref{assu:regularity}, \ref{ass:dissip} and \ref{ass:class} hold for the stationary observations above.
Fix $\delta\in(0,1)$ and choose the deterministic quantities in \Eqref{eq:finite-choices}--\Eqref{eq:finite-entropy-choice} below.
With probability at least $1-\delta$,
\begin{equation}\label{eq:finite-uniform-deviation}
\sup_{\widetilde G\in\Hclass_n^q}\left|\bigl[\eRisk_n(\widetilde G)-\eRisk_n(G)\bigr]-\Risk_T(\widetilde G)\right|\le\varepsilon_n(\delta;\rho,\epsilon),
\end{equation}
where the explicit bound is given in \Eqref{eq:finite-error}.
Suppose additionally that $M\ne0$, $\smin>0$, and
\begin{equation} 
0<\lambda<\frac{T_\circ\smin^2}{\dmax},\qquad
\gamma(\lambda):=\min\{\lambda,T_\circ\smin^2-\lambda\dmax\}>0.
\end{equation}
If
\begin{equation}
\label{eq:condition}
\varepsilon_n(\delta;\rho,\epsilon)<\gamma(\lambda),
\end{equation}
then every measurable minimizer in \Eqref{eq:empirical-risk} satisfies
\begin{equation}\label{eq:finite-support-probability}
\Prob_\pi\!\left(\widehat M^a=M^a\text{ for all }a=0,\ldots,q,\quad
\norm{\widehat G-G}_{L^2(\mu_\infty^q)}^2\le\frac{\varepsilon_n(\delta;\rho,\epsilon)}{T_\circ}\right)\ge1-\delta.
\end{equation}
\end{manualtheorem}
The conclusion applies to both $q=0$ and $q\ge1$.
It identifies the mask exactly, not the drift exactly from finitely many observations.
With $\lambda=\lambda_0T_\circ$ for $0<\lambda_0<\smin^2/\dmax$, recovery follows whenever the displayed bound divided by $T_\circ$ is smaller than $\min\{\lambda_0,\smin^2-\lambda_0\dmax\}$.
Thus consistency requires a joint condition on the observation duration, mesh and class complexity; large $n$ alone does not suffice.

For $\rho\ge1$, define
\begin{equation}\label{eq:finite-envelope-constants}
E_\rho:=1\vee\sup_{z\in K_\rho}\mathcal E(z),\qquad
v_\rho:=\int_{K_\rho^c}\mathcal E(z)^2\,\mu_\infty^q(dz),\qquad
\sigma_{\mathrm{op}}:=\norm{\Sigma}_{\mathrm{op}},\quad
\sigma_F:=\sqrt{\tr(\Sigma\Sigma^\top)}.
\end{equation}
Here $E_\rho$ bounds errors on the truncation region and $v_\rho\to0$ is the remaining stationary squared-error mass.
For an integer $p\ge1$, let
\[
S_p:=\E_\pi\sup_{0\le u\le1}\norm{X_u^{\seg}-c}_{\Cseg}^{2p}<\infty.
\]
Finiteness follows by integrating the finite-horizon estimate of \Cref{lem:moments}(c) against $\pi^{\seg}$ and using its stationary moments.

For $\delta\in(0,1)$, choose $p\ge1$ and $\epsilon>0$, and set
\begin{equation}\label{eq:finite-choices}
\eta:=\delta/6,\qquad J:=\max\{1,\lceil T_\circ\rceil\},\qquad
\rho\ge\max\left\{1,\left(\frac{J(q+1)^pS_p}{\eta}\right)^{1/(2p)}\right\}.
\end{equation}
We define some more quantities: the block length and logarithmic complexity,
\begin{equation}\label{eq:finite-entropy-choice}
b:=\max\left\{1,\frac1{\lambda_{\mathrm{erg}}}\log\frac{C_\beta J}{\eta}\right\},\qquad
N:=N_n(\rho,\epsilon),\qquad
\Lambda:=\log\frac{4N}{\eta}.
\end{equation}

A valid bound in \Eqref{eq:finite-uniform-deviation} is then
\begin{equation}\label{eq:finite-error}
\begin{aligned}
\varepsilon_n(\delta;\rho,\epsilon)
={}&\frac{2m_{\mathcal E}T_\circ}{\eta}\,\omega_G(\Delta_{\max})\\
&+2\sigma_{\mathrm{op}}E_\rho\sqrt{2T_\circ\Lambda}
+\frac{2\epsilon\sigma_F}{\eta}\sum_k\sqrt{\Delta_k}\\
&+E_\rho^2\sqrt{2T_\circ(b+\Delta_{\max})\Lambda}
+4E_\rho\epsilon T_\circ+v_\rho T_\circ.
\end{aligned}
\end{equation}
The first line is the discretisation error.
The second line controls Brownian fluctuations and the approximation by a finite net.
The third line controls the centred time average, its net approximation and the stationary tail outside $K_\rho$.

Before giving the proof, let us describe its structure. The excess empirical risk $\eRisk_n(\widetilde G)-\eRisk_n(G)$ splits into three parts: a time average of $\norm{\widetilde G - G}^2$ along the grid, a stochastic integral against the Brownian increments, and a discretisation error due to the grid. The discretisation error is controlled by the modulus $\omega_G$ (Steps~1--2). We then truncate to the ball $K_\rho$ and replace the class by a finite net (Step~3). The time average concentrates around $\Risk_T$ thanks to the exponential $\beta$-mixing of \Cref{thm:exp-mixing}, via the independent-block argument of \citet{Yu1994} and Hoeffding's inequality (Step~4), while the Brownian term has Gaussian conditional increments and is controlled by a Chernoff bound (Step~5). A union bound over the net and the margin argument of \Cref{thm:penalized} conclude (Step~6).

\begin{proof}[Proof of \Cref{thm:finite-sample}]
\stp{1: The drift modulus needs no growth assumption}
Let $G_A$ be the projection of $G$ onto the closed Euclidean ball of radius $A$ in $\R^D$.
It is continuous and bounded, while $\norm{G-G_A}_{L^2(\mu_\infty^q)}\to0$ by \Eqref{eq:finite-stationary-energy}.
Stationarity and the triangle inequality in $L^2$ give
\[
\norm{G(Z_u)-G(Z_0)}_{L^2(\Prob_\pi)}
\le2\norm{G-G_A}_{L^2(\mu_\infty^q)}
+\norm{G_A(Z_u)-G_A(Z_0)}_{L^2(\Prob_\pi)}.
\]
For fixed $A$, continuity of the paths and bounded convergence make the last term tend to zero as $u\downarrow0$.
First choosing $A$ large and then $u$ small proves $\omega_G(h)\to0$ as $h\downarrow0$; also $\omega_G(h)\le2m_G$.

Fix a deterministic internal net $G_1,\ldots,G_N$ on $K_\rho$ and write $h=\widetilde G-G$ and $h_i=G_i-G$.
The suprema below are measurable: the continuous drifts form a separable space in the topology of uniform convergence on compact sets, and their population risks are continuous on the class by domination with $\mathcal E^2$.
One may therefore evaluate each supremum over a countable dense subset of the class.

\stp{2: Exact expansion and discretisation}
Set $\Delta W_k:=W_{t_{k+1}}-W_{t_k}$.
Inserting the SDDE increment into the square and cancelling the candidate-independent squared-increment term gives
\begin{equation}\label{eq:finite-excess-expansion}
\begin{aligned}
\eRisk_n(\widetilde G)-\eRisk_n(G)-\Risk_T(\widetilde G)
&=A_n(h)-M_n(h)-D_n(h),\\
A_n(h)&:=\sum_k\Delta_k\left(\norm{h(Z_k)}^2-\int\norm{h(z)}^2\,\mu_\infty^q(dz)\right),\\
M_n(h)&:=2\sum_k\inner{h(Z_k)}{\Sigma\Delta W_k},\\
D_n(h)&:=2\sum_k\int_{t_k}^{t_{k+1}}\inner{h(Z_k)}{G(Z_s)-G(Z_k)}\,ds.
\end{aligned}
\end{equation}
Stationarity makes the centring in $A_n$ exact; there is no further Riemann-sum error for its expectation.
For every candidate,
\[
|D_n(h)|\le\mathcal D_n:=2\sum_k\int_{t_k}^{t_{k+1}}\mathcal E(Z_k)\norm{G(Z_s)-G(Z_k)}\,ds.
\]
Cauchy--Schwarz and stationarity imply
\[
\E_\pi\mathcal D_n\le2m_{\mathcal E}\sum_k\int_0^{\Delta_k}\omega_G(u)\,du
\le2m_{\mathcal E}T_\circ\omega_G(\Delta_{\max}).
\]
Markov's inequality therefore bounds $\sup_h|D_n(h)|$ by the first line of \Eqref{eq:finite-error}, outside an event of probability at most $\eta$.
If this expectation bound is zero, $\mathcal D_n=0$ almost surely and no exceptional event is needed.

\stp{3: Localisation and the squared-loss net}
Let
\[
\mathcal L_\rho:=\left\{\sup_{\taumax\le t\le T}\norm{Z_t-c^{(q)}}\le\rho\right\}.
\]
Since $\norm{Z_t-c^{(q)}}\le\sqrt{q+1}\norm{X_t^{\seg}-c}_{\Cseg}$, a cover of $[\taumax,T]$ by $J$ intervals of length at most one, stationarity and Markov's inequality give
\begin{equation}\label{eq:finite-localisation}
\Prob_\pi(\mathcal L_\rho^c)\le\frac{J(q+1)^pS_p}{\rho^{2p}}\le\eta.
\end{equation}

Define $f_h(z):=\norm{h(z)}^2\mathbf1_{K_\rho}(z)\in[0,E_\rho^2]$ and
\[
A_n^\rho(h):=\sum_k\Delta_k\left(f_h(Z_k)-\int f_h(z)\,\mu_\infty^q(dz)\right).
\]
On $\mathcal L_\rho$, the empirical squared losses equal their truncated versions, so $|A_n(h)|\le|A_n^\rho(h)|+T_\circ v_\rho$.
For each $h$, choose a centre $h_i$ with $\sup_{K_\rho}\norm{h-h_i}\le\epsilon$.
Both errors are bounded by $E_\rho$ on this set, hence $\sup_z|f_h(z)-f_{h_i}(z)|\le2E_\rho\epsilon$ and
\begin{equation}\label{eq:finite-loss-net}
\sup_h|A_n(h)|\le\max_{i\le N}|A_n^\rho(h_i)|+4E_\rho\epsilon T_\circ+T_\circ v_\rho
\quad\text{on }\mathcal L_\rho.
\end{equation}

\stp{4: Time-average fluctuations on an irregular grid}
Partition the used left endpoints into bins $I_j=[\taumax+jb,\taumax+(j+1)b)$, for $0\le j<J_b:=\max\{1,\lceil T_\circ/b\rceil\}$, and set $w_j:=\sum_{k:t_k\in I_j}\Delta_k$.
The intervals with left endpoints in a nonempty bin are consecutive, and only the last can extend beyond its right boundary, by at most $\Delta_{\max}$.
Consequently,
\begin{equation}\label{eq:finite-bin-weights}
0\le w_j\le b+\Delta_{\max},\qquad
\sum_jw_j=T_\circ,\qquad
\sum_jw_j^2\le(b+\Delta_{\max})T_\circ.
\end{equation}
Successive nonempty bins of the same parity have their observations separated by at least $b$ in physical time.
For either parity, \Eqref{eq:finite-beta} bounds the total-variation distance between its joint block law and the product of its block marginals by $(m-1)_+C_\beta e^{-\lambda_{\mathrm{erg}}b}$, where $m$ is the number of its nonempty bins.
Indeed, successively separating the next block from all previous blocks costs at most the corresponding $\beta$ coefficient, and the triangle inequality adds these costs.
This comparison concerns the observation blocks themselves, so its error is not multiplied by the number $N$ of candidate functions.
It is the independent-block argument of \citet{Yu1994}, here applied in physical time.

Under the product law for one parity, the variables
\[
V_{j,i}:=\sum_{k:t_k\in I_j}\Delta_k\left(f_{h_i}(Z_k)-\int f_{h_i}(z)\,\mu_\infty^q(dz)\right)
\]
are independent and centred, with range length at most $E_\rho^2w_j$.
Hoeffding's inequality gives, for $x>0$,
\[
\Prob_{\mathrm{prod}}\!\left(\left|\sum_{j\text{ of one parity}}V_{j,i}\right|>x\right)
\le2\exp\left(-\frac{2x^2}{E_\rho^4\sum_jw_j^2}\right).
\]
An empty parity has sum zero and needs no bound.
Using \Eqref{eq:finite-bin-weights}, taking $x=E_\rho^2\sqrt{T_\circ(b+\Delta_{\max})\Lambda/2}$, and taking the union over the $N$ centres and two parities yield
\begin{equation}\label{eq:finite-empirical-tail}
\Prob_\pi\!\left(\max_{i\le N}|A_n^\rho(h_i)|>E_\rho^2\sqrt{2T_\circ(b+\Delta_{\max})\Lambda}\right)
\le4Ne^{-\Lambda}+J_bC_\beta e^{-\lambda_{\mathrm{erg}}b}\le2\eta.
\end{equation}
The last inequality uses $b\ge1$, hence $J_b\le J$, and \Eqref{eq:finite-entropy-choice}.
Together with \Eqref{eq:finite-loss-net}, this supplies the third line of \Eqref{eq:finite-error}.

\stp{5: Brownian fluctuations and their net approximation}
Use the predictable truncation
\[
M_n^\rho(h):=2\sum_k\inner{h(Z_k)\mathbf1_{K_\rho}(Z_k)}{\Sigma\Delta W_k}.
\]
For fixed $h_i$, conditional on the past at $t_k$, the $k$th summand is centred Gaussian with variance at most $4\sigma_{\mathrm{op}}^2E_\rho^2\Delta_k$.
Iterating the conditional moment-generating functions gives
\[
\E_\pi e^{sM_n^\rho(h_i)}\le\exp\left(2s^2\sigma_{\mathrm{op}}^2E_\rho^2T_\circ\right),\qquad s\in\R.
\]
The Gaussian Chernoff bound and a union over the fixed net imply
\begin{equation}\label{eq:finite-martingale-tail}
\Prob_\pi\!\left(\max_{i\le N}|M_n^\rho(h_i)|>2\sigma_{\mathrm{op}}E_\rho\sqrt{2T_\circ\Lambda}\right)
\le2Ne^{-\Lambda}=\eta/2.
\end{equation}
The net remainder is controlled pathwise, not by treating a data-selected remainder as a fixed martingale:
\[
|M_n^\rho(h)-M_n^\rho(h_i)|\le2\epsilon\sum_k\norm{\Sigma\Delta W_k},\qquad
\E_\pi\sum_k\norm{\Sigma\Delta W_k}\le\sigma_F\sum_k\sqrt{\Delta_k}.
\]
Markov's inequality therefore bounds the remainder simultaneously for all candidates by $2\epsilon\sigma_F\eta^{-1}\sum_k\sqrt{\Delta_k}$, outside an event of probability at most $\eta$.
On $\mathcal L_\rho$, $M_n(h)=M_n^\rho(h)$ for every $h$, so this and \Eqref{eq:finite-martingale-tail} give the second line of \Eqref{eq:finite-error}.

\stp{6: Uniform deviation and mask recovery}
The exceptional probabilities in Steps 2--5 sum to at most
\[
\eta+\eta+2\eta+\eta/2+\eta=\tfrac{11}{2}\eta<\delta.
\]
Adding the three bounds in \Eqref{eq:finite-excess-expansion} proves \Eqref{eq:finite-uniform-deviation} on the complementary event.
On that event, feasibility of $(G,M)$ and empirical optimality imply
\[
\Risk_T(\widehat G)+\lambda\sum_{a=0}^q\norm{\widehat M^a}_0
-\lambda\sum_{a=0}^q\norm{M^a}_0
\le\varepsilon_n(\delta;\rho,\epsilon).
\]
The population margin of \Cref{thm:penalized}, applied to $\mu_\infty^q=\occq$, makes the left-hand side at least $\gamma(\lambda)$ for every wrong mask.
This uses its lower bound for every feasible wrong mask, not a population-minimisation assumption on the empirical estimator.
Condition \Eqref{eq:condition} therefore excludes every wrong mask on the same event.
Once the masks agree, their penalties cancel, leaving $\Risk_T(\widehat G)\le\varepsilon_n(\delta;\rho,\epsilon)$.
Dividing by $T_\circ$ proves \Eqref{eq:finite-support-probability}.
\end{proof}

We showed in the proof that the discretisation term is $2m_{\mathcal E}T_\circ\omega_G(\Delta_{\max})/\eta$.
When the stronger estimate $\omega_G(h)\le C_G\sqrt h$ is available, Step 2 improves this term to $4m_{\mathcal E}C_G(3\eta)^{-1}\sum_k\Delta_k^{3/2}$.
It does not become $O(T_\circ\Delta_{\max})$ merely from a fourth drift moment.
On an equally spaced used grid, we have the equality $\sum_k\sqrt{\Delta_k}=T_\circ/\sqrt\Delta$, so the martingale net remainder divided by $T_\circ$ is proportional to $\epsilon/(\eta\sqrt\Delta)$.
On a general grid, the displayed sum must be retained; it is not bounded using $\Delta_{\max}$ alone.
The factors $1/\eta=6/\delta$ together with the local envelope $E_\rho$, which may grow with the radius $\rho\ge(J(q+1)^pS_p/\eta)^{1/(2p)}$ and hence with $T_\circ$, make \Eqref{eq:finite-error} a qualitative consistency statement rather than a sharp rate.


\paragraph{Optimization accuracy and sampling endpoints.}
If the returned feasible pair has a certified empirical objective gap at most $\zeta_n\ge0$, replace \Eqref{eq:condition} by $\varepsilon_n+\zeta_n<\gamma(\lambda)$ and the prediction bound by $(\varepsilon_n+\zeta_n)/T_\circ$.
This follows by adding $\zeta_n$ to the final optimality comparison; an arbitrary output of a nonconvex optimization algorithm is not automatically certified.
If $\taumax$ is not observed, let $a$ be the first used left endpoint, replace $T_\circ$ by $T-a$, and use $\Risk_{a,T}(\widetilde G):=(T-a)\norm{\widetilde G-G}_{L^2(\mu_\infty^q)}^2$ throughout this section and in its population margin.
The proof is otherwise unchanged.
Interpolated lag values require an additional error analysis and are not covered by the exact-observation statement above.

\subsection{Dissipativity and ergodicity of the learned system}
\label{subsec:learned-ergodic}

In this section, we analyze the ergodicity of both the data-generating process and the learned simulator, providing theoretical insights beyond the empirical assumptions made in our experiments. Ergodicity of the true data-generating process guarantees that a single, long trajectory visits all states in proportion to their true probabilities. This allows us to learn the invariant measure because time averages converge to ensemble averages. Conversely, ergodicity of the learned model ensures stability during forward simulation: rather than drifting indefinitely or exploding, the generated paths settle into their own stationary distribution (which ideally matches the true one). In what follows, we discuss sufficient conditions and architectural choices to guarantee the ergodicity of the learned system.

We parameterize the drift using an MLP with $\tanh$ hidden activations and an affine output layer. Provided there are no skip connections to the final layer, the output of this architecture is globally bounded. Consequently, when used as the drift in \Eqref{eq:sde_model}, the resulting process is non-explosive. However, a globally bounded drift cannot satisfy the dissipation condition of Assumption~\ref{ass:dissip} outside a compact set, nor can it uniformly approximate a drift that does over the entire domain. As a result, \Cref{thm:transfer} does not apply to an unconstrained setup. We emphasize that this highlights a limitation of the sufficient condition rather than the model itself, as bounded drifts can still be ergodic. For instance, the SDE $d X_t = -\tanh(X_t)\, dt + \sigma \, dW_t$ yields an integrable invariant density proportional to $(\cosh x)^{-2/\sigma^2}$, and its drift can be exactly represented by our unconstrained chosen architecture.

In what follows, we describe several fixes and results to maintain dissipativity and thus ergodicity of the learned drift. We chose for our experiments to apply the post-training exterior confinement defined by \Eqref{eq:post-hoc-network} which imposes no restriction to the architecture during the training and only applies a restoring force outside a compact set that encompasses all of our samples.

Although an unconstrained MLP architecture does not inherently satisfy these global dissipation conditions, we nevertheless provide a conditional ergodicity result in \Cref{thm:transfer}. This global perturbation result becomes highly relevant when the learned and true drifts exhibit compatible tail behaviors --- for instance, when they share a dissipative physical baseline and differ only by bounded residuals. Thus, we retain this theorem to address scenarios equipped with such structural priors, with the caveat that its hypotheses do not automatically emerge simply from fitting an unconstrained neural network.

\begin{theorem}[Conditional ergodicity transfer under bounded drift error]
\label{thm:transfer}
Suppose Assumptions~\ref{assu:dgp}, \ref{assu:regularity} and \ref{ass:dissip} hold for the true system, with \(B=\sum_{a=1}^q(\beta_a+\varepsilon_a)<\beta_0\). Let \(\widetilde G\) be locally Lipschitz, of at most linear growth, and satisfy
\[
 \sup_{z\in\R^{D(q+1)}}\norm{\widetilde G(z)-G(z)}\le\delta_0<\infty.
\]
For any \(0<\eta<2(\beta_0-B)\), it satisfies Assumption~\ref{ass:dissip} with the same centre and radius and with
\begin{align*}
 \widetilde\beta_0&=\beta_0-\eta/2,&
 \widetilde\alpha&=\alpha+\delta_0^2/(2\eta),&
 \widetilde\beta_a&=\beta_a,\\
 \widetilde C_R&=C_R+\delta_0^2/(2\eta)+\eta R^2/2,&
 \widetilde\varepsilon_a&=\varepsilon_a\quad(a\ge1).
\end{align*}
With the same invertible additive diffusion, the learned SDDE is non-explosive and its segment process has a unique invariant probability measure, all polynomial stationary segment moments, and geometric convergence in total variation as in \Cref{thm:ergodic}, with model-dependent constants.
\end{theorem}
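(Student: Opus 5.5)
The plan is to reduce everything to the true-system results already proved. We verify Assumptions~\ref{assu:regularity} and \ref{ass:dissip} for $\widetilde G$ with the displayed constants, and note that Assumption~\ref{assu:dgp} is unchanged: same $\Sigma$, same initial law. We then invoke \Cref{prop:strong_solution}, \Cref{lem:moments} and \Cref{thm:ergodic} verbatim for the learned SDDE.

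\textbf{Step 1: the dissipativity inequalities.} Write $\widetilde G=G+e$ with $\norm{e(z)}\le\delta_0$. The key inequality is the scaled Young bound
\[
\inner{x_0-c}{e(z)}\le\tfrac{\eta}{2}\norm{x_0-c}^2+\tfrac{\delta_0^2}{2\eta}.
\]
On $\norm{x_0-c}\ge R$ we add this to \Eqref{eq:delay-dissip}. The result is the radial bound with $\widetilde\beta_0=\beta_0-\eta/2$, $\widetilde\alpha=\alpha+\delta_0^2/(2\eta)$ and unchanged $\beta_a$. On $\norm{x_0-c}\le R$ we add it to \Eqref{eq:ball-control} and bound $\tfrac{\eta}{2}\norm{x_0-c}^2\le\eta R^2/2$, which gives $\widetilde C_R$ with unchanged $\varepsilon_a$. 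Strict domination holds because $\widetilde\beta_0-B=\beta_0-B-\eta/2>0$ by the choice $\eta<2(\beta_0-B)$. Positivity of $\widetilde\beta_0$ follows from $\beta_0>B\ge0$.

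\textbf{Step 2: regularity and non-explosion.} Local Lipschitzness of $\widetilde G$ is assumed. The radial Khasminskii condition \Eqref{eq:khasminskii} follows from linear growth, since $\inner{x_0-c}{\widetilde G(z)}\le\norm{z-c^{(q)}}\,\norm{\widetilde G(z)}\le \beta(1+\norm{z-c^{(q)}}^2)$ for a suitable $\beta$. Alternatively, it follows from the global estimate of \Cref{lem:dissip-global} applied to the new constants. Thus \Cref{prop:strong_solution} gives a unique non-exploding strong solution.

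\textbf{Step 3: ergodicity.} Finite energy on bounded horizons comes from linear growth combined with the finite-horizon moments of \Cref{lem:moments}(c). That lemma uses only Assumptions~\ref{assu:regularity} and \ref{ass:dissip} and the initial condition, so there is no circularity. Then \Cref{lem:moments-point}, \Cref{lem:moments}, \Cref{lem:segment-overlap} and \Cref{thm:ergodic} apply with the tilde constants. They yield a unique invariant law, all polynomial stationary segment moments, and the total-variation bound \Eqref{eq:ergodic-tv-wasserstein}. The constants depend on the model through $\widetilde\beta_0,\widetilde\alpha,\widetilde C_R$ and through the local Lipschitz constant of $\widetilde G$ in \Eqref{eq:ergodic-control-energy}.

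\textbf{Main obstacle.} The only delicate point is confirming that the earlier proofs never used a property specific to the true drift beyond these assumptions. The overlap lemma, for instance, needs only local Lipschitzness and invertible additive noise, and the mixing results need nothing more. I expect Step 1 to be routine. The bookkeeping check that every earlier result is driven solely by the assumption constants is where care is needed.
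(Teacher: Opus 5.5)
Your proposal is correct and follows the paper's proof essentially step for step. The paper uses the same scaled Young bound $\inner{x_0-c}{\widetilde G(z)-G(z)}\le\frac{\eta}{2}\norm{x_0-c}^2+\delta_0^2/(2\eta)$, added to the exterior and interior inequalities of Assumption~\ref{ass:dissip}, gets strict domination from $\eta<2(\beta_0-B)$, derives finite drift energy from linear growth plus the second-moment estimates, and then re-runs the moment, kernel-overlap and Harris arguments behind \Cref{thm:ergodic} with $\widetilde G$ in place of $G$. The only cosmetic difference is that you obtain the Khasminskii condition directly from linear growth (or from \Cref{lem:dissip-global} with the new constants), whereas the paper reads it off the new radial bound; either route is fine.
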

\begin{proof}
Writing \(u=x_0-c\), Cauchy--Schwarz and Young's inequality give
\[
 \inner{u}{\widetilde G(z)-G(z)}
 \le\delta_0\norm u\le\frac\eta2\norm u^2+\frac{\delta_0^2}{2\eta}.
\]
Add this to the exterior and interior inequalities of Assumption~\ref{ass:dissip}; in the latter, use \(\norm u^2\le R^2\). The displayed constants follow and retain strict domination. The resulting radial bound also gives the Khasminskii condition. Local Lipschitzness gives well-posedness by localisation, and the second-moment estimate together with at-most-linear growth gives finite drift energy for square-integrable initial histories independent of the future noise. The moment and kernel-overlap arguments establishing \Cref{thm:ergodic} then apply to \(\widetilde G\).
\end{proof}

An important observation is that the constant $\delta_0$ need not be small for existence of the ergodic behavior, although increasing it can have drawbacks on statistical accuracy, and worsen moments and mixing results. Note that the result above can be strengthened into an affine error bound with sufficiently small slope, $\norm{\widetilde G(z)-G(z)}\le\delta_0 + \theta \norm{z - c^{(q)}}, \, \theta \ge 0$.

For the learned process to remain stable, we only need to bound the network's error in the outward radial direction. Large errors that act tangentially, or those that provide additional inward pull, will not cause the system to diverge. We therefore turn to conditions that can be imposed directly on the learned drift rather than on its unknown error at infinity.

\paragraph{Dissipative architecture with bounded neural residual} A simple and natural extension of the unconstrained architecture is to retain the MLP with tanh activations as nonlinear residuals and add an explicit instantaneous damping:
\begin{equation}
    \label{eq:learned-damped-architecture}
    \widetilde{G}_\theta(z) = - K (x_0 - c) + N_\theta(z), \quad K = \textrm{diag}(\kappa_1, \dots, \kappa_D), \ \kappa_i \ge \kappa_* > 0
\end{equation}

\begin{proposition}[Confinement by construction]
    \label{prop:learned-damping}
    Let \(N_\theta\) be locally Lipschitz and globally bounded, with \(\sup_z\norm{N_\theta(z)}\le M_\theta\). For any fixed delays and an invertible additive diffusion as in Assumption~\ref{assu:dgp}, the drift in \Eqref{eq:learned-damped-architecture} defines a non-explosive SDDE whose segment process has a unique invariant law, all polynomial stationary segment moments, and geometric convergence in total variation.
\end{proposition}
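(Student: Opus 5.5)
The plan is to check that the drift $\widetilde G_\theta$ of \Eqref{eq:learned-damped-architecture} satisfies Assumptions~\ref{assu:regularity} and \ref{ass:dissip}. Once that is done, the proof of \Cref{thm:ergodic} (Lemmas~\ref{lem:dissip-global}--\ref{lem:segment-overlap} and the Harris argument) applies word for word with $G$ replaced by $\widetilde G_\theta$. Only the initial-condition and non-degenerate-noise clauses of Assumption~\ref{assu:dgp} are assumed. The finite-energy clause (iii) has to be derived, and this also has to be done for the stationary solution.

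\emph{Regularity.} The map $z\mapsto -K(x_0-c)$ is globally Lipschitz and $N_\theta$ is locally Lipschitz, so $\widetilde G_\theta$ is locally Lipschitz. Moreover $\norm{\widetilde G_\theta(z)}\le \norm{K}_{\mathrm{op}}\norm{x_0-c}+M_\theta$, so the drift has at most linear growth.

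\emph{Dissipativity.} Write $u=x_0-c$. Then $\inner{u}{-Ku}\le-\kappa_*\norm u^2$, and Cauchy--Schwarz with Young gives, for any $\eta\in(0,2\kappa_*)$,
\[
\inner{u}{\widetilde G_\theta(z)}\le -\kappa_*\norm u^2+M_\theta\norm u\le -(\kappa_*-\eta/2)\norm u^2+\frac{M_\theta^2}{2\eta}.
\]
Concretely, take $\eta=\kappa_*$. Then Assumption~\ref{ass:dissip} holds with $\beta_0=\kappa_*/2$, $\alpha=M_\theta^2/(2\kappa_*)$, and all $\beta_a=\varepsilon_a=0$ for $a\ge1$. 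For $\norm u\le R$ we may take $C_R=RM_\theta$. Strict domination $B=0<\beta_0$ then holds trivially, and this holds for any delays, since the delayed blocks only enter through the bounded term $N_\theta$. The same radial bound gives the Khasminskii condition \Eqref{eq:khasminskii}: the right-hand side is bounded by a constant, hence by $\beta(1+\norm{z-c^{(q)}}^2)$.

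\emph{Finite energy and conclusion.} With linear growth, $\E\int_0^T\norm{\widetilde G_\theta(Z_t)}^2dt\lesssim \int_0^T(1+\E\norm{X_t-c}^2)\,dt$. This is finite by \Cref{lem:moments-point} with $p=1$, whose proof uses only Assumption~\ref{assu:dgp}(i), Assumption~\ref{assu:regularity} and Assumption~\ref{ass:dissip}. Proposition~\ref{prop:strong_solution} gives a unique non-exploding strong solution. Lemmas~\ref{lem:moments} and \ref{lem:segment-overlap} need only local Lipschitzness, invertibility of $\Sigma$ and dissipativity, so Theorem~\ref{thm:ergodic} yields a unique invariant law with all polynomial moments and geometric convergence in total variation. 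The same linear-growth bound combined with the stationary second moment gives finite stationary drift energy.

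The main obstacle is the ordering issue around Assumption~\ref{assu:dgp}(iii). It must not be used before it is derived, so I would point out that the moment bounds and existence argument do not use it. This is the same route as the proof of \Cref{thm:transfer}, which this proposition essentially specialises with baseline $-K(x_0-c)$.
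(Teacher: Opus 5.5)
Your proposal is correct and follows the paper's proof. Both use the same Cauchy--Schwarz/Young bound with $\eta=\kappa_*$ to get Assumption~\ref{ass:dissip} with $\beta_0=\kappa_*/2$, $\alpha=M_\theta^2/(2\kappa_*)$ and zero delayed coefficients, use local Lipschitzness and linear growth for well-posedness and finite energy, and then apply \Cref{thm:ergodic}; your explicit interior constant $C_R=RM_\theta$, the Khasminskii check, and the remark that the finite-energy clause is derived rather than assumed only spell out what the paper compresses into a single sentence.
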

\begin{proof}
    For every lagged state \(z\), with \(u=x_0-c\),
    \begin{equation}
     \inner{u}{\widetilde G_\theta(z)}
     \le-\kappa_*\norm u^2+M_\theta\norm u
     \le-\frac{\kappa_*}{2}\norm u^2+\frac{M_\theta^2}{2\kappa_*}.
     \label{eq:learned-damping-bound}
    \end{equation}
    This gives Assumption~\ref{ass:dissip} with zero delayed coefficients; the interior radial bound follows by increasing its constant on any chosen ball. Local Lipschitzness and linear growth ensure the remaining well-posedness and energy conditions. The conclusion is a direct application of \Cref{thm:ergodic}.
\end{proof}

\paragraph{Post-training exterior confinement} The same idea can be applied after training in such a way that the fitted drift remains unchanged inside a radius vector $R = (R_1, \dots, R_D)^\top$ with strictly positive components, $R_j > 0$. Set $u = x_0 - c$ and define $\Pi_R(u)= (\min\{R_j, \max\{-R_j, u_j\}\})_j$ the projection onto the box $\prod_j [-R_j, R_j]$. Then, define the post-hoc network:
\begin{equation}
    \label{eq:post-hoc-network}
    \widetilde{G}_{\theta,R}(z) = N_\theta(z) - K \cdot(u - \Pi_R(u))
\end{equation}
Whenever $|x_{0,j} - c_j| \le R_j$ for every $j$, the network is left unchanged for all delayed inputs. This applies a linear restoring drift only to coordinates outside the given box.
Furthermore, since we can rearrange \Eqref{eq:post-hoc-network} as $\widetilde{G}_{\theta,R}(z) = -K \cdot u + (N_\theta(z) + K \cdot \Pi_R(u))$ and $\sup_z \norm{N_\theta(z) + K \Pi_R(u)} \le M_\theta + \norm{K \cdot R}$, \Cref{prop:learned-damping} applies. Note, nonetheless, that this changes in general the invariant law of the learned process and thus the damping matrix and radii considered should be taken into consideration and their effect on long-run statistics assessed.

The preceding results describe some constructions to exhibit the existence of the learned equilibrium, however they say little about whether or not it agrees with the true one. Under the same invertible additive noise, we can find a stationary error bound on the invariant distributions in total variation given the true invariant measure.

\begin{proposition}[Stationary error bound on invariant distributions]
    \label{prop:learned-invariant-comparison}
    Suppose the true process is stationary with segment law \(\pi^{\seg}\). Let \(\widetilde G\) be locally Lipschitz and define a non-explosive SDDE with the same lags and diffusion. Suppose its invariant law \(\widetilde\pi^{\seg}\) and kernels obey
    \begin{equation}
        \label{eq:mixing}
         \norm{\widetilde P_t(\varphi,\cdot)-\widetilde\pi^{\seg}}_{\mathrm{TV}}
         \le\widetilde C e^{-\widetilde\gamma t}(1+V(\varphi)),
         \quad V(\varphi)=\norm{\varphi-c}_{\Cseg}^2,
         \quad \pi^{\seg}V<\infty.
    \end{equation}
    where the former inequality holds when the learned drift satisfies for example Assumptions~\ref{assu:dgp}, \ref{assu:regularity} and \ref{ass:dissip} as in \Cref{thm:ergodic}.
    If \(e_\Sigma^2=\int\norm{\Sigma^{-1}(\widetilde G-G)(z)}^2\mu_\infty^q(dz)<\infty\), then for every \(t>0\), with total variation normalized as \(\sup_A|\nu(A)-\mu(A)|\),
    \begin{equation}
     \label{eq:learned-invariant-comparison}
     \norm{\pi^{\seg}-\widetilde\pi^{\seg}}_{\mathrm{TV}}
     \le \frac{e_\Sigma\sqrt t}{2}
           +\widetilde C e^{-\widetilde\gamma t}(1+\pi^{\seg}V).
    \end{equation}
\end{proposition}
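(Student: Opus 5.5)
The plan is to compare the two stationary laws through the laws of the segment process at time $t$ started from the same initial distribution $\pi^{\seg}$, and then split with the triangle inequality:
\[
\norm{\pi^{\seg}-\widetilde\pi^{\seg}}_{\mathrm{TV}}
\le \norm{\pi^{\seg}P_t-\pi^{\seg}\widetilde P_t}_{\mathrm{TV}}
+\norm{\pi^{\seg}\widetilde P_t-\widetilde\pi^{\seg}}_{\mathrm{TV}} .
\]
Here we use invariance, $\pi^{\seg}P_t=\pi^{\seg}$. The second term is handled directly by the mixing hypothesis \Eqref{eq:mixing}. We integrate $\norm{\widetilde P_t(\varphi,\cdot)-\widetilde\pi^{\seg}}_{\mathrm{TV}}$ against $\pi^{\seg}(d\varphi)$ and use convexity of total variation. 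This gives $\widetilde C e^{-\widetilde\gamma t}(1+\pi^{\seg}V)$, which is finite because $\pi^{\seg}V<\infty$.

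The first term is a Girsanov comparison on $[0,t]$ with a common random initial segment $X_0^{\seg}\sim\pi^{\seg}$ independent of $W$. Both the true and learned SDDEs are driven by the same invertible $\Sigma$. On path space, the learned path law is therefore obtained from the true one by the drift shift $u_s:=\Sigma^{-1}(\widetilde G-G)(Z_s)$ along the true stationary path. By data processing (the time-$t$ segment is a measurable function of the path on $[-\taumax,t]$), the first term is at most the total variation between the two path laws. Pinsker's inequality then bounds that by $\sqrt{\tfrac12\KL}$. Conditioning on the initial segment, the relative entropy of the true path law with respect to the learned one is $\tfrac12\E_\pi\int_0^t\norm{u_s}^2ds$. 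Stationarity of $Z_s$ under $\mu_\infty^q$ turns this into $\tfrac12 t\,e_\Sigma^2$. Altogether this yields $\sqrt{\tfrac12\cdot\tfrac12 t e_\Sigma^2}=e_\Sigma\sqrt t/2$, which is exactly the first term in \Eqref{eq:learned-invariant-comparison}.

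The main obstacle is justifying the Girsanov/entropy identity when $u$ is only square integrable in expectation, with no Novikov condition available. I would localize with stopping times $\sigma_N:=\inf\{s:\int_0^s\norm{u_r}^2dr\ge N\}$ and apply Girsanov to the stopped control, which has bounded energy. Uniqueness in law for the learned SDDE (local Lipschitzness and non-explosion) identifies the transformed process as the learned one up to $\sigma_N$. The relative entropy of the stopped path laws is bounded by $\tfrac12\E\int_0^{t\wedge\sigma_N}\norm{u}^2$. We then let $N\to\infty$, using lower semicontinuity of relative entropy (or the chain rule for the time-$t$ marginals together with $\Prob(\sigma_N<t)\to0$, which holds since $\E\int_0^t\norm{u}^2<\infty$). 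The direction of $\KL$ should be chosen as $\KL(\text{true}\,\|\,\text{learned})$, since expectations are taken under the true stationary law where $e_\Sigma$ is defined. Finally, the conditional argument given $X_0^{\seg}$ is integrated using joint convexity of $\KL$.
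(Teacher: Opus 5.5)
Your proposal is correct and follows essentially the paper's proof: the same triangle inequality through $\pi^{\seg}\widetilde P_t$, the mixing bound integrated against $\pi^{\seg}$, and Pinsker applied to $\KL(\text{true}\,\|\,\text{learned})$, where an energy-stopped Girsanov change of measure together with stationarity gives the bound $t e_\Sigma^2/2$. The only difference is the direction of localisation. The paper tilts the learned law into a hybrid $\mathbb P_m$ that follows the true drift until $\sigma_m$, and removes the stopping quantitatively via $\norm{\mathbb P-\mathbb P_m}_{\mathrm{TV}}\le t e_\Sigma^2/m$. You instead tilt the true law, so letting $N\to\infty$ also requires $\sigma_N\to\infty$ under the learned law; this follows from non-explosion and path continuity, not from $e_\Sigma<\infty$ as your parenthetical suggests.
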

\begin{proof}
    \stp{1: Common starting equilibrium law}
    Fix \(t>0\) and start both equations with an initial segment of law \(\pi\), independent of future noise. The true terminal-segment law remains \(\pi\), whereas the learned law is \(\pi \widetilde P_t\). Hence
    \begin{equation}
     \norm{\pi^\seg-\widetilde\pi^\seg}_{\rm TV}
     \le\underbrace{\norm{\pi^\seg-\pi^\seg \widetilde P_t}_{\rm TV}}_{\text{finite-time drift comparison}}
       +\underbrace{\norm{\pi^\seg \widetilde P_t-\widetilde\pi^\seg}_{\rm TV}}_{\text{learned mixing}}.
     \label{eq:triangle}
    \end{equation}
    Let \(\mathbb P\) and \(\mathbb Q\) be the true and learned laws of the full path on \([-\tau_{\max},t]\), including its initial history. On this path space set
    \[
     v(z)=\Sigma^{-1}(G(z)-\widetilde G(z)),\qquad A_s=\int_0^s\norm{v(Z_r)}^2\,dr.
    \]
    Stationarity under \(\mathbb P\) gives the essential identity
    \begin{equation}
     \E_{\mathbb P}A_t
     =\int_0^t\E_{\mathbb P}\norm{v(Z_s)}^2\,ds=t e_\Sigma^2.
     \label{eq:energy}
    \end{equation}
    \stp{2: Stopped change of drift}
    Finite mean energy does not imply a global Novikov condition. Instead, stop at
    \(\sigma_m=\inf\{s\in[0,t]:A_s\ge m\}\), with the convention \(\inf\varnothing=\infty\). Under \(\mathbb Q\), let \(\widetilde B\) be the driving Brownian motion and define
    \begin{equation}
     \frac{d\mathbb P_m}{d\mathbb Q}
     =\exp\left\{\int_0^{t\wedge\sigma_m}v(Z_s)\cdot d\widetilde B_s
                 -\frac12 A_{t\wedge\sigma_m}\right\}.
     \label{eq:girsanov}
    \end{equation}
    Because \(A_{t\wedge\sigma_m}\le m\), this exponential is a true martingale, also conditionally on the initial history. Girsanov's theorem therefore preserves the initial law \(\pi\) and makes \(\mathbb P_m\) follow drift \(G\) until \(\sigma_m\), then drift \(\widetilde G\) afterwards.
    
    Under \(\mathbb P_m\), the log-density in \Eqref{eq:girsanov} is a mean-zero stochastic integral plus \(A_{t\wedge\sigma_m}/2\). Consequently, the Kullback-Leibler divergence can be written as
    \begin{equation}
     \KL(\mathbb P_m\Vert\mathbb Q)
     =\frac12\E_{\mathbb P_m}A_{t\wedge\sigma_m}
     =\frac12\E_{\mathbb P}A_{t\wedge\sigma_m}
     \le\frac{t e_\Sigma^2}{2}.
     \label{eq:entropy}
    \end{equation}
    The second equality holds because the auxiliary and true equations agree up to the stopping time: pathwise uniqueness from \Cref{prop:strong_solution} gives identical stopped path laws. This is why the error remains evaluated under the \emph{true} law.
    
    If we couple these two equations with the same history and Brownian motion, their paths can differ only if \(\sigma_m\le t\), so
    \[
     \norm{\mathbb P-\mathbb P_m}_{\rm TV}
     \le\mathbb P(A_t\ge m)\le\frac{t e_\Sigma^2}{m}.
    \]
    Pinsker's inequality and \Eqref{eq:entropy} now imply
    \[
     \norm{\mathbb P-\mathbb Q}_{\rm TV}
     \le\frac{t e_\Sigma^2}{m}
         +\sqrt{\frac12\KL(\mathbb P_m\Vert\mathbb Q)}
     \le\frac{t e_\Sigma^2}{m}+\frac{e_\Sigma\sqrt t}{2}.
    \]
    Let \(m\to\infty\) with \(t\) fixed. Extracting a terminal segment cannot increase total variation, and its laws under \(\mathbb P,\mathbb Q\) are \(\pi^\seg,\pi^\seg \widetilde P_t\), respectively. Thus
    \begin{equation}
     \norm{\pi^\seg-\pi^\seg \widetilde P_t}_{\rm TV}\le\frac{e_\Sigma\sqrt t}{2}.
     \label{eq:finite}
    \end{equation}
    
    \stp{3: Conclusion}
    Integrating \Eqref{eq:mixing} against its initial law \(\pi^\seg\) gives
    \[
     \norm{\pi^\seg \widetilde P_t-\widetilde\pi^\seg}_{\rm TV}
     \le\int\norm{\widetilde P_t(\varphi,\cdot)-\widetilde\pi^\seg}_{\rm TV}\,\pi^\seg(d\varphi)
     \le \widetilde C e^{-\widetilde\gamma_t}(1+\pi^\seg V).
    \]
    Together with \Eqref{eq:finite} and \Eqref{eq:triangle}, this proves \Eqref{eq:learned-invariant-comparison}.
\end{proof}

\section{Comparison with adaptive group lasso assumptions}
\label{subsec:assumptions_bellot}

\citet{bellot2022ngm} prove local consistency of the \emph{adaptive group lasso}, for Markovian systems. They show similar identifiability proofs using the adaptive group lasso to select input features. AGL also requires an initial consistent estimator to build the adaptive weights. 
In this work, we derive identifiability guarantees for SDDE-driven systems, which extend \citet{bellot2022ngm} results. Additionally, we use the $L_0$ penalty instead of AGL. The $L_0$ route is in two ways
cleaner: no initial estimator is needed, and the population theorem (\ref{thm:penalized}) already supplies an explicit
$\lambda$ window. 
Both results suppose an optimization oracle, which is standard in the literature, but something that is most often not available in practice. 

Our theorems assume global minimization of a nonconvex, combinatorial objective.
\citet{bellot2022ngm} assume that they can find exact solutions to their problem, which is least squares over a \emph{neural network} class with an AGL penalty, and hence also nonconvex. Both results presuppose an
optimization oracle --- theirs for a continuous nonconvex problem, ours for a combinatorial one ---.
Empirically, we find that even on Markovian systems (part of the CausalDynamics benchmark, the Dysts benchmark), the relaxed \Lzero penalty performs better than the AGL-penalized objective. 
Our minimum-signal condition $\smin > 0$ plays exactly the role of their eq.~(10), and our
effective sample size $\lambda_{\mathrm{erg}} T$ plays the role of their $n/\norm{\alpha}_2$.

\section{Implementation details and hyperparameters}
\label{sec:hyperparams}

\subsection{Implementation details}
\label{sec:implementation_details}

The \Lzero-norm is not differentiable. To allow for score-based optimization, we implement the relaxed $L_0$ penalization proposed by \citet{louizos2018l0}. More precisely, the matrix M is a square logits matrix parametrized as the sigmoid of differentiable parameters in $\mathbb{R}$. During training, at each batch, we draw samples of M through a hard-concrete distribution, a continuous relaxation of a binary random variable. We use a low temperature parameter $\beta = 0.33$ to sharpen the hard-concrete distribution and force to turn inputs on or off. Samples are drawn on the $[-0.1, 1.1]$ interval before being clamped to $[0, 1]$ i.e. most samples will be exactly equal to 0 or 1. At inference, we fix M to be 0 or 1 if the probabilities (obtained from the logits) are lower or higher than 0.5. To have a smoothed convergence, we draw multiple samples of M at each batch before optimizing for its differentiable parameters. 
We also warm-up the model and train the neural network without penalty for 100 warm-up iterations before adding the penalty to the loss. We then train until the loss plateaus and does not decrease for 200 patience iterations. These parameters and optimization procedure are standard when using \Lzero-penalty \citep{louizos2018l0}.

The functions $f$ are small multi-layer perceptrons (MLPs) and we use a gradient penalty to enforce that our functions are Lipschitz. To make sure that smooth dimensions e.g. with close to 0 gradient are not approximated by the mean and then do not have any drivers, we renormalize each dimension by the gradient mean and standard deviation. 

C-NODE follows a similar implementation, where the $L_0$ is replaced by \Lone norm. In practice, we just apply a \Lone penalty on the logits matrix $M$. AGL implementation is similar, but uses an AGL penalty instead. We find that using a gradient penalty slightly improve the results for both, so we included it in the method's implementation. Except the penalty, everything is kept the same between methods. 

\section{CausalDynamics benchmark} 
\label{sec:full_results}

\subsection{Metric computation}
\label{subsec:metric_computation}

\Lzero-NDDE and ``No Grad. Pen.'' use a logit matrix to represent probabilities of inputting a feature or not. It is thus straightforward to calculate SHD, AUROC and AUPRC by comparing this matrix to the ground truth logits matrix. AGL and \Lone penalty instead assign weights to each input feature. While these weights can still be compensated for by the neural network, we used this weight matrix to compute AUROC / AUPRC, treating the weights as feature importance. For SHD instead, we used a low threshold ($10^{-8}$) to decide whether an input feature is turned off or not. Changing this threshold did not affect performance, as we observed that the weights were generally either 0 or above $10^{-3}$ in practice. 

\subsection{Hyperparameter search}
\label{subsed:hyperparam_search}

The causal methods evaluated in CausalDynamics have various numbers of hyperparameters. While methods such as PCMCI rely crucially on a single parameter e.g. the conditional independence test threshold, others such as TSCI or NGC depend on many parameters e.g. neural network architectures. \citet{herdeanu2025causaldynamicslargescalebenchmarkstructural} thus decided to keep default parameters except the main parameter and perform hyperparameter search on this parameter only. We do the same and keep default parameters, highlighted in \Cref{subsec:final_hyperparams}, except for the sparsity penalty coefficient, which controls for the final level of sparsity.

For all methods, we run our model on the 10 time series of the Lorenz84 system with the following penalty coefficients and select the one maximizing AUROC: (0.0001, 0.00025, 0.0005, 0.001, 0.0025, 0.005, 0.01, 0.025, 0.05, 0.1, 0.25, 0.5, 1, 2.5, 5, 10, 25, 50, 100).

\subsection{Final hyperparameters}
\label{subsec:final_hyperparams}

\begin{table*}[hbt!]\centering
\begin{tabular}{c c}

\toprule
\textbf{Parameter} & \textbf{Value} \\
\midrule
\multicolumn{2}{c}{\textbf{Architecture, functions $f_j$}} \\
\midrule

Number of layers & 2 \\
Number of units per layer & 8 \\
\midrule
\multicolumn{2}{c}{\textbf{Optimization parameters}} \\
\midrule
Gradient penalty coefficient & 100 \\
Warm-up iterations & 100 \\
Patience iterations & 200 \\
Learning rate & 0.001 \\
Batch size & 128  \\
\midrule
\multicolumn{2}{c}{\textbf{$L_0$ implementation parameters} (\Lzero-NDDE, No grad. pen.)} \\
\midrule
Temperature $\beta_{\text{temp}}$ & 0.33 \\
Approximate distribution interval & $[-0.1, 1.1]$ \\
Number of samples & 3 \\
\midrule
\multicolumn{2}{c}{\textbf{Sparsity penalty coefficients}} \\
\midrule
$L_0$ & 0.025 \\
AGL & 5 \\
C-NODE (L1) & 50 \\
No grad. pen. & 0.01 \\

\end{tabular}
\caption{\textbf{Final hyperparameters used for \Lzero-NDDE, C-NODE, AGL and No grad. pen.} The last 4 lines show the penalty coefficient for each method, as it is the only parameter tuned for each the method. Note that No grad. pen. does not use gradient penalty.}
\label{tab:hyperparams_causaldynamics}
\end{table*}

\subsection{Full CausalDynamics results}

\begin{table*}[h!]\centering
\resizebox{\textwidth}{!}{
\begin{tabular}
{c c ccc ccccc cc c}
\toprule
& \multirow{2}{*}{\textbf{Experiment}}  & \multicolumn{3}{c}{\textbf{Simple}} & \multicolumn{5}{c}{\textbf{Coupled}} & \multicolumn{2}{c}{\textbf{Climate}} & \multirow{2}{*}{\textbf{Avg. rank}} \\
\cmidrule(lr){3-5} \cmidrule(lr){6-10} \cmidrule(lr){11-12}
& & Default & Confounder & Noise & Default & Noise & Confounder & Time-lag & Standardize & MAOOAM & ENSO \\
\midrule

 \multirow{13}{*}{$\vcenter{\hbox{\rotatebox{90}{\textbf{SHD}}}}$}
& PCMCI+ & 41.04 & 23.02 & 47.64 & 224.80 & 183.90 & 324.63 & 327.72 & 228.32 & 80.00 & 529.36 & 7.30 \\
& F-PCMCI & 35.30 & 21.07 & 45.09 & 192.90 & \underline{149.60} & 195.74 & 350.61 & \underline{201.79} & 130.00 & 530.27 & 5.80 \\
& VARLiNGAM & 35.69 & 22.04 & 42.84 & 311.45 & 248.90 & 159.63 & 449.33 & 349.63 & 130.00 & 453.00 & 8.00 \\
& DYNOTEARS & 52.37 & 21.74 & 51.68 & 181.50 & 180.95 & 248.32 & 261.44 & 243.84 & 94.00 & 589.36 & 6.60 \\
& NGC & 28.91 & 19.96  & 28.84 & 840.95 & 842.55 & 670.53 & 793.67 & 840.26 & \underline{31.00} & 337.09 & 7.90 \\
& TSCI & 52.50 & 22.02 & 60.50 & 174.70 & 173.50 & 265.26 & 244.56 & 244.42 & 108.00 & 666.27 & 7.20 \\
& CUTS+ & 48.11 & 24.04 & 61.32 & \textbf{152.00} & 150.50 & 272.68 & 247.22 & 310.63 & 130.00 & 608.73 & 7.50 \\
& RCD & 61.85 & 26.74 & 61.38 & \underline{157.05} & 155.65 & \underline{136.53} & \textbf{201.11} & \textbf{159.84} & 130.00 & 665.36 & 6.80 \\
& GRaSP & 59.04 & 27.09 & 60.18 & 215.94 & 842.55 & \textbf{136.00} & 793.67 & 840.26 & 126.00 & 666.27 & 9.70 \\
& TCDF & 59.46 & 26.61 & 61.43 & 252.00 & 842.55 & 670.53 & 793.67 & 840.26 & 130.00 & 666.27 & 11.65 \\
 & AGL & 26.60 & \underline{16.96} & \underline{21.28} & 319.04$^*$ & 222.41$^*$ & 246.37$^*$ & 315.54$^*$ & 312.31$^*$ & \textbf{30.00}$^*$ & 337.09 & 5.55 \\
 & C-NODE (L1) & \textbf{24.07} & 17.10 & \textbf{20.5} & 253.91$^*$ & 176.30$^*$ & 179.58$^*$ & 273.57$^*$ & 258.09 & 42.00 & \underline{335.36} & \underline{4.50} \\
& \textbf{\Lzero-NDDE (Ours)} & \underline{26.42} & \textbf{16.25} & 23.67 & 205.35 & \textbf{144.20} & 143.33 & \underline{222.34} & 212.11 & 40.00 & \textbf{325.54} & \textbf{2.50} \\

\midrule

 \multirow{13}{*}{$\vcenter{\hbox{\rotatebox{90}{\textbf{AUROC / AUPRC} }}}$} 

& PCMCI+ & .52 / .71  & .49 / .59 & .50 / .69 & \textbf{.67} / .25 & \underline{.64} / .25 & .58 / .20 & .58 / .24 & \textbf{.69} / .27 & .69 / .88 & \underline{.57 / .70} & 4.10 / 5.25 \\
& F-PCMCI & .51 / .70 & .50 / .59  & .52 / .70 & \textbf{.67} / .27 & .57 / .21 & .55 / .19 & \underline{.59} / .24 & \underline{.68} / .28 & .50 / .81 & \underline{.57 / .70} & 4.80 / 6.05 \\
& VARLiNGAM & .50 / .69 & .48 / .58 & .53 / .70 & .60 / .19 & .57 / .18 & .51 / .17 & .54 / .22 & .60 / .19 & .50 / .81 & .56 / .69 & 6.90 / 8.55 \\
& DYNOTEARS & .43 / .67 & .52 / .64 & .48 / .68 & .59 / .21 & .57 / .20 & .49 / .17 & .53 / .22 & .65 / .23 & .64 / .86 & .55 / .69 & 7.75 / 7.95 \\
& NGC & .50 / .69 & .50 / .58  & .50 / .68 & .50 / .15 & .50 / .15 & .50 / .16 & .50 / .20 & .50 / .15 & .50 / .81 & .50 / .67 & 9.85 / 10.90 \\
& TSCI & .46 / .68 & .53 / .66 & .49 / .68 & .60 / .23 & .53 / .17 & .51 / .18 & .53 / .21 & .65 / .23 & .58 / .84 & .50 / .67 & 7.50 / 8.00 \\
& CUTS+ & .50 / .69 & .50 / .58 & .50 / .68 & .50 / .15 & .50 / .15 & .49 / .16 &  .50 / .20 &  .50 / .15 &  .50 / .81 & .50 / .67 & 10.10 / 10.90 \\
& RCD & .50 / .69 & .50 / .58 & .50 / .68 & .50 / .15 & .50 / .15 & .51 / .18 & .50 / .20 & .50 / .16 &  .50 / .81 &  .50 / .67 & 9.55 / 10.25 \\
& GRaSP & .52 / .71 & .55 / .64 & .49 / .68 & .49 / .15 & .50 / \textbf{.50} & .50 / .17 & .50 / \textbf{.50} & .50 / \textbf{.50} & .48 / .81 & .50 / .50 & 9.60 / 6.70 \\
& TCDF & .51 / .70 & .50 / .58 & .50 / .68 & .48 / .15 & .50 / \textbf{.50} & .50 / \textbf{.50} & .50 /  \textbf{.50}& .50 / \textbf{.50} & .50 / .81 &  .50 / .50 & 9.85 / 6.65 \\
 & AGL & .52 / .71$^*$ & \underline{.56} / .64 & .53 / .74$^*$ & .53 / .33$^*$ & .53 / .33$^*$ & .54 /  .35$^*$ &.52 / .35$^*$ & .54 / .34$^*$ & .56 / .86$^*$ & .54 / .69$^*$ & 5.80 / 4.35 \\
  & C-NODE (L1) & .64 / \underline{.79} & .54 / \underline{.65} & \underline{.61 / .79}$^*$ & .64 / \textbf{.43} & .63 / .43 & \textbf{.67} / \underline{.47} & \underline{.59} / .43 & \underline{.68} / .47 & \textbf{.88 / .97} & .49 / .66$^*$ & \underline{3.45 / 3.35} \\
& \textbf{\Lzero-NDDE (Ours)} & \textbf{.64 / .80} & \textbf{.58 / .66} & \textbf{.63 / .80} & .64 / \textbf{.43} & \textbf{.65} / .44 & \textbf{.67} / \underline{.47} & \textbf{.60} / .43 & .66 / \underline{.45} & \underline{.86 / .96} & \textbf{.58 / .75} & \textbf{1.75 / 2.10} \\
\bottomrule
\end{tabular}
}
\caption{\textbf{\Lzero-NDDE outperforms competing methods most of the time on the CausalDynamics benchmark.} This table is similar to \Cref{tab:causaldynamics_results}  with additional methods being shown.}
\label{tab:full_causal_dyn_new}
\end{table*}

\newpage

\subsection{No gradient penalty}

\begin{table*}[h!]\centering
\begin{footnotesize}
\resizebox{\textwidth}{!}{
\begin{tabular}{c|c|ccc|ccccc|cc|}

& \multirow{2}{*}{Experiment}  & \multicolumn{3}{c|}{Simple} & \multicolumn{5}{c|}{Coupled} & \multicolumn{2}{c|}{Climate} \\
& & Default & Confounder & Noise & Default & Noise & Confounder & Time-lag & Standardize & MAOOAM & ENSO \\
\hline

 \multirow{2}{*}{$\vcenter{\hbox{{SHD}}}$}
 & No Grad. Pen. & 26.95 & 18.21 & 24.47 & 215.08 & 142.67 & 145.08 & 233.58 & 220.68 & 49.00 & 341.27 \\
& {\Lzero-NDDE (Ours)} & 26.42 & {16.25} & 23.67 & 205.35 & {144.20} & 143.33 & 222.34 & 212.11 & 40.00 & {325.54} \\

\hline


AUROC / & No Grad. Pen. & {.64 / .80} & {.58} / .65 & .61 / .79 & .63 / .42 & .64 / .44 & {.67 / .47} & .59 / .41 & .65 / .44 & .85 / .96 & .52 / .68 \\
AUPRC & {\Lzero-NDDE (Ours)} & {.64 / .80} & {.58 / .66} & {.63 / .80} & .64 / {.43} & {.65} / .44 & {.67 / .47} & {.60 / .43} & .66 / .45 & .86 / .96 & {.58 / .75} \\
\end{tabular}}
\end{footnotesize}

\caption{{\Lzero-NDDE is slightly stronger with gradient penalty.} Gradient penalty helps avoid overfitting spurious correlations and leads to slightly better performance on the CausalDynamics benchmark.}
\label{tab:causaldynamics_results_full}
\end{table*}


\section{Dysts benchmark}

\subsection{Train and test systems}
\label{subsec:hyperparam_search_dysts}

We use 40 systems in total, split into a hyperparameter search set (5 systems), and a test set (the other 35 systems). For each system, we generate one training trajectory, and 3 test trajectories. The train trajectories last 50 Lyapunov times and the test trajectories last 250 Lyapunov times, to ensure that the trajectory visits the entire space. Trajectories are sampled at the native temporal resolution. To avoid too short or too long trajectories in training, we enforce the training trajectory to have a minimum of 1000 and a maximum of 10000 timesteps. We use a stochastic forcing of level 0.05 times the variance of each dimension of the training trajectory. Each test and training trajectory is normalized by subtracting the mean and dividing by the training trajectory standard deviation per dimension. 

We studied 40 systems in total, which are the 40 Dysts systems for which the Jacobian is accessible. For these 40 systems, a method is implemented to access the Jacobian at any point along any trajectory. This is necessary to compute metrics. Among these 40 systems, we randomly chose 5 systems for doing hyperparameter search, and 35 systems for testing. On the 5 hyperparameter search systems, we ran the methods with the following sets of hyperparameters, and selected the ones that maximized the valid prediction time on the three test trajectories.

For all methods, we wait a certain number of patience iterations (200) to declare convergence i.e. when the penalized loss does not improve for 200 iterations, we stop sparsifying. After this, we fix the input mask i.e. the learned matrices $\tilde{M}$, and then keep training the neural networks to learn the dynamics for a certain number of iterations \texttt{n\char`_iter\char`_after\char`_convergence}. Also, we wait a number \texttt{niter\char`_warmup} of warm-up iterations before sparsifying.

We chose the best values among the following values for the following parameters:

\begin{itemize}
    \item Sparsity penalty coefficient: $[0.01, 0.05, 0.1, 0.5, 1, 5, 10, 50, 100]$
    \item Gradient penalty coefficient: $[0, 10, 100, 1000, 10_000]$
    \item \texttt{n\char`_iter\char`_after\char`_convergence}: $[0, 50, 100, 200, 500, 1000]$
    \item \texttt{niter\char`_warmup}: $[0, 25, 50, 100]$
\end{itemize}

For all methods, these four parameters are key as they control the sparsity and the regularization of the methods and their final values can be found in \Cref{tab:hyperparams_dysts}. All other parameters are kept the same as in the CausalDynamics experiments (\Cref{tab:hyperparams_causaldynamics}).

\begin{table*}[hbt!]\centering
\begin{tabular}{c c c c c c}

\toprule
Optimization parameters & \Lzero-NDDE & NODE & C-NODE & AGL & PathReg \\
\midrule
Gradient penalty coefficient & 0 & 100 & 0 & 1000 & 1000 \\
\texttt{niter\char`_warmup} & 0 & 0 & 100 & 50 & 200 \\
\texttt{n\char`_iter\char`_after\char`_convergence} & 200 & 0 & 500 & 1000 & 500 \\
Sparsity penalty coefficients & 0.05 & 0 & 0.05 & 50 & 1 \\
\bottomrule
\end{tabular}
\caption{\textbf{Final hyperparameters used for \Lzero-NDDE, C-NODE, AGL and PathReg}}
\label{tab:hyperparams_dysts}
\end{table*}

The systems chosen randomly to perform hyperparameter search are the following: 
MooreSpiegel,
Lorenz,
Laser,
RikitakeDynamo,
Chua

The systems used for evaluating the methods are the following: 
BurkeShaw,
Chen,
ChenLee,
Coullet,
DequanLi,
Duffing,
Finance,
GenesioTesi,
Hadley,
JerkCircuit,
KawczynskiStrizhak,
LiuChen,
Lorenz84,
LuChen,
LuChenCheng,
PanXuZhou,
PehlivanWei,
QiChen,
RayleighBenard,
Rucklidge,
Sakarya,
SanUmSrisuchinwong,
ShimizuMorioka,
SprottA,
SprottB,
SprottC,
SprottE,
SprottI,
SprottTorus,
Thomas,
ThomasLabyrinth,
Tsucs2,
WangSun,
YuWang,
YuWang2,
ZhouChen

\subsection{Metrics}
\label{subsec:dysts_metric_computation}

As we evaluate driver identifiability on the CausalDynamics benchmark, we are now interested in evaluating the dynamics. More precisely, we want to compare methods on short-term horizon prediction performance, attractor and geometry reconstruction, and long-term statistics. At evaluation, we predict autoregressively 3 trajectories from the initial conditions of the 3 test conditions (only the initial conditions are given), and generate trajectories of the same length as the test trajectories. We then compare the predicted vs true trajectories and compute several metrics.  

For short-term prediction, we compute the valid prediction time at 1, normalized by the first Lyapunov exponent (valid prediction lyapunov time, VLPT) i.e. we compute the number of Lyapunov timesteps before the normalized trajectory is further than 1 away from the true normalized trajectory. We also compute the normalized root mean square error (NRMSE), by averaging the RMSE across the first 5 Lyapunov timesteps before normalizing by the trajectory mean and standard deviation. 

For geometry and attractor reconstruction, we compute the first Lyapunov exponent error (FLEE) i.e. the squared difference between the first Lyapunov exponent of the learned vs. true system. The Lyapunov exponents are computed using the Benettin algorithm \citep{benettin_algorithm}, which requires access to the Jacobian which is possible in the selected Dysts systems, and for the learned models which are by construction differentiable. This error is small when the chaoticity of the system is well approximated. We additionally compute the Kaplan-Yorke dimension error ($D_{KY}$, \citealp{dky_error}). Computed using the Lyapunov exponents, the $D_{KY}$ gives us the dimension of the attractor i.e. the fractal dimension, which would be 2 in the case of the classic Lorenz system. The reported error is the squared error between the estimated system vs. true system's $D_{KY}$.

We then evaluate whether the learned system correctly captures long-term statistics, in other words whether the predicted trajectories converge, and if so whether they converge to the right equilibrium distribution. We report the log-spectral distance (LSD), which is the squared difference between the true and estimated log power spectral density, to indicate whether the system captured the right oscillation and periodicity properties of the system. We additionally report the $D_{stsp}$, a state-space divergence which estimates the KL divergence between the learned and true system invariant distribution from a set of predicted and true trajectories \citep{dstsp}. We compute it by placing a Gaussian kernel at each trajectory point (for both true and generated trajectories) and estimating the KL divergence between the two Gaussian Mixture Models via Monte Carlo sampling.

\subsection{Results}
\label{subsec:dysts_results_figure}

On top of the quantitative results shown in the paper, we show, in \Cref{fig:full_dysts_results} the predicted (red) vs true (black) trajectories for 26 systems, for the three best methods, \Lzero-NDDE, NODE and C-NODE.

\begin{figure}[htbp]
    \centering
    \includegraphics[width=\textwidth]{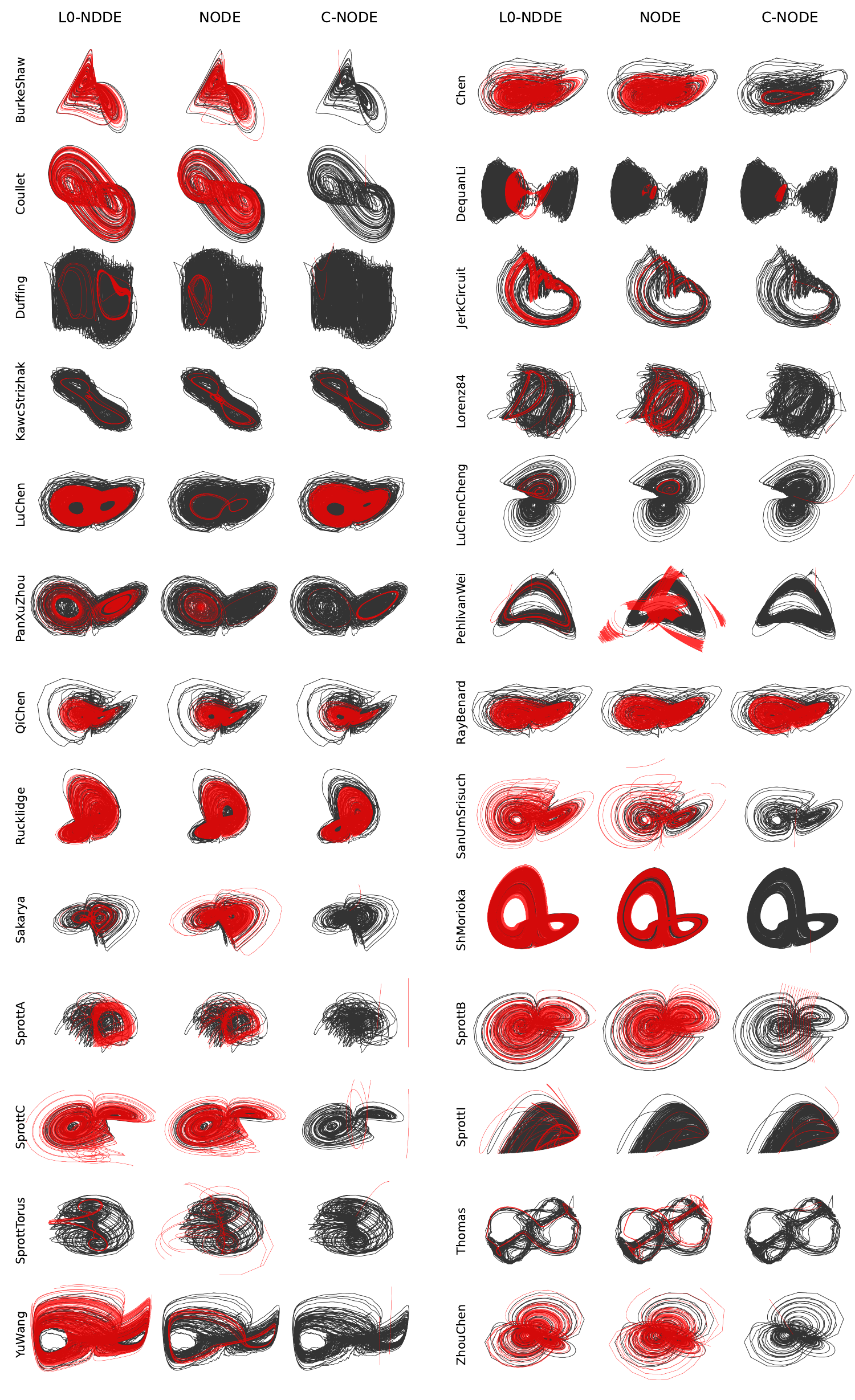}
    \vspace{-2em}
    \caption{Example predicted trajectories (red) vs. true trajectories (black) on 26 Dysts systems, for \Lzero-NDDE (left), NODE (middle), and C-NODE (right).}
    \label{fig:full_dysts_results}
\end{figure}

We then analyze the performance of the different methods on this benchmark, in \Cref{fig:dysts_statistical_association}.

\begin{figure}[htbp]
    \centering
    \includegraphics[width=\textwidth]{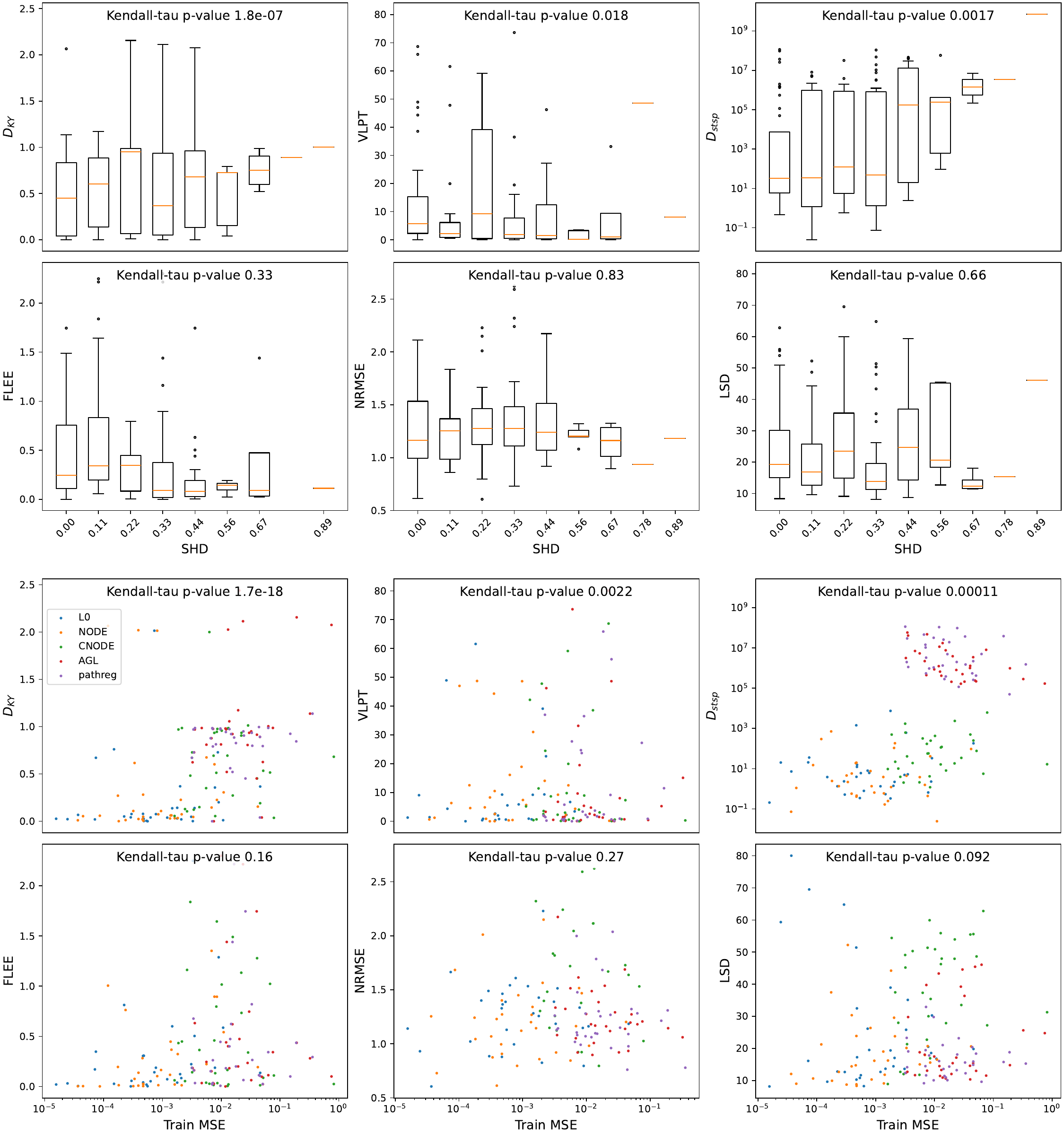}
    \vspace{-2em}
    \caption{\textbf{Lower training error and lower SHD are both associated with higher performance.} We perform an analysis to better understand why \Lzero-NDDE and NODE outperform other methods on the Dysts datasets. We compute, for all methods and all datasets, the training error (Train MSE) and SHD. We then plot the 6 metrics versus the SHD (upper half) and versus the training error (lower half). We see that $D_{KY}$ and $D_{stsp}$ increase, while VLPT decreases with SHD and training error. These relationships are statistically significant according to a Kendall-$\tau$ test, a standard rank-correlation test for investigating associations between two sets of values, without assuming Gaussianity of the samples or needing large samples. The relationship is also significant between LSD and training error. This indicates that a lower training error is associated with better reconstruction, indicating why NODE performs well, as the absence of input feature regularization allows it to overfit to the training data; and why \Lzero-NDDE outperforms other methods, as it achieves lower SHD. Moreover, the \Lzero-penalty has the advantage that it does not perform weight decay, but explicitly turns features on or off, allowing the rest of the network to approximate the dynamics very well given the set of selected features.}
    \label{fig:dysts_statistical_association}
\end{figure}

\end{document}